\long\def\comment#1{}
\newcommand{\countV}[1]{\#\left(#1\right)}
\newcommand{\Fcal}{\ensuremath{\mathcal{F}}}
\newcommand{\Ocal}{\ensuremath{\mathcal{O}}}
\newcommand{\Ccal}{\ensuremath{\mathcal{C}}}
\newcommand{\Pcal}{\ensuremath{\mathcal{P}}}
\newcommand{\Fbb}{\ensuremath{\mathbb{F}}}
\newcommand{\Fbbsim}{\ensuremath{\Fbb_2^2(\mu)/\sim}}
\theoremstyle{plain}
\newtheorem{definition}{Definition}
\numberwithin{definition}{section}
\newtheorem{proposition}{Proposition}
\numberwithin{proposition}{section}
\newtheorem{lemma}{Lemma}
\long\def\comment#1{}
\begin{document}
\begin{center}
{\bf{\LARGE{Robust Unsupervised Learning of Temporal Dynamic Interactions}}}

\vspace*{.2in}
{\large{
\begin{tabular}{ccc}
Aritra Guha$^{ \ddagger}$ &  Rayleigh Lei$^{\ddagger}$ & Jiacheng Zhu$^{ \dagger}$ \\
\end{tabular}
\begin{tabular}{cc}
      XuanLong Nguyen$^{\ddagger}$ & Ding~Zhao$^{\dagger}$ 
\end{tabular}
}}

\vspace*{.2in}

\begin{tabular}{c}
Department of Mechanical Engineering, Carnegie Mellon University$^\dagger$ \\
Department of Statistics, University of Michigan$^\ddagger$ \\
\end{tabular}

\vspace*{.2in}

\today

\vspace*{.2in}

\begin{abstract} 
Robust representation learning of temporal dynamic interactions is an important problem in robotic learning in general and automated unsupervised learning in particular. Temporal dynamic interactions can be described by (multiple) geometric trajectories in a suitable space over which unsupervised learning techniques may be applied to extract useful features from raw and high-dimensional data measurements. Taking a geometric approach to robust representation learning for temporal dynamic interactions, it is necessary to develop suitable metrics and a systematic methodology for comparison and for assessing the stability of an unsupervised learning method with respect to its tuning parameters. 
Such metrics must account for the (geometric) constraints in the physical world as well as the uncertainty associated with the learned patterns. In this paper we introduce a model-free metric based on the Procrustes distance for robust representation learning of interactions, and an optimal transport based distance metric for comparing between distributions of interaction primitives. These distance metrics can serve as an objective for assessing the stability of an interaction learning algorithm. They are also used for comparing the outcomes produced by different algorithms. Moreover, they may also be adopted as an objective function to obtain clusters and representative interaction primitives. These concepts and techniques will be introduced, along with mathematical properties, while their usefulness will be demonstrated in unsupervised learning of vehicle-to-vechicle interactions extracted from the Safety Pilot database, the world's largest database for connected vehicles.
\end{abstract}
\end{center}

\section{Introduction}

Advances in large scale data processing and computation enables the application of sophisticated learning algorithms to robotic design in complex and dynamic environments. In many applications a fundamental challenge lies not only in learning about the interaction between the ego agent and the environment, but also interactions between multiple agents. Due to the high dimensionality and typically noisy nature of the data required for such learning tasks,  a standard approach is to utilize strong modeling assumptions on the interactions. For example, the interaction between a robotic agent and the environment can be represented by instantaneous physical variables such as positions, velocities, a time series of which are then endowed with a stationary distribution for mathematical convenience and interpretability (e.g., via a Markov process framework). While such approach is useful in highly controlled environments, the strong modeling assumption are usually violated in domains where the interactions among agents and with the environment are highly dynamic~\cite{foerster2017multi-agent}. Such domains require the development of more robust and data-driven representation learning approaches.

As a concrete example which serves as a primary motivation for this work, take the interaction between two intelligent vehicles that approach each other in a typical intersection. What the two vehicles proceed to do next depend on what they can learn of their encounter in real-time. The two cars may come toward the intersection in varying speeds at perhaps slightly different time points. They may or may not signal their intention. For example, one plans to go straight while the other plans to take a turn cutting through the other's path. Not only do the two agents have to learn their temporally varying interaction, they have to do so quickly and accurately while continually negotiating the traffic. In this type of applications where the interaction is highly dynamic, a promising approach to robust interaction learning is by decomposing the interaction in terms of simpler elements~\cite{1549935, Wang-Zhao-2017}. For traffic applications, such interaction elements are called traffic \emph{primitives}. These primitives can be learned, labeled, and effectively utilized for subsequent tasks such as vehicle trajectory prediction~\cite{zhu2019_nplstm_trajectory}, traffic data generation \cite{ding2018multi_vae_generation, zhang2019multi_gp_generation}, or anomaly detection \cite{ zhang2019learning_primitve_itsc}.

Stripping away the language of vehicle-to-vehicle (V2V) interactions, the temporal dynamic interaction between two agents comprises of a pair of well-aligned trajectories defined on a suitable space that satisfy constraints presented by the environment and agents' behaviors. Thus, the goal of robust representation learning of a pairwise interaction between the two dynamic agents boils down to the learning of pairs of functions or curves which describe the aligned car physical movements and/or driving behaviors. Such a mathematical viewpoint can be generalized to interactions among three or more vehicles. In this paper we will focus on the learning of interactions in two-agent dynamic scenarios. Although our work is motivated by the learning of multi-agent traffic interaction's primitives, we believe that the techniques developed here can be utilized to other settings of multi-agent temporal dynamic interaction learning.

Within the context of real-world traffic learning, both rule-based methods \cite{frazzoli2005maneuver_primitive}, supervised learning \cite{pervez2017learning_primitive_supervised}, and unsupervised learning \cite{Wang-Zhao-2017} have been applied to identify the interaction primitives. Due to the heterogeneity and complexity of traffic scenarios, unsupervised learning is a powerful tool to identify latent structures in unlabeled traffic scenario time series data; the goal is to organize the data into homogeneous groups/ clusters \cite{Bender-et-al-2015, Hamada-et-al-2016, Taniguchi-et-al-2015, Wang-Xi-Zhao-2017, liao2005clustering_survey}. Within automatically learned clusters, interpretable and typical driving behaviors can be obtained and analyzed, e.g., left/right turns along with multiple attributes including speed, acceleration, yaw rate and side-slip angle using Dynamic Time Wrapping (DTW) as a similarity measure \cite{Yao_2019_meta_learning_traffic}. Statistical model-based approaches that can learn complex driving behaviors while allowing for encoding domain-knowledge are also available. For instance, primitive segments extracted from time series traffic data can be obtained without specifying the number of categories via Bayesian nonparametric methods based on Dirichlet processes. They include hierarchical Dirichlet Process Hidden Markov Model (HDP-HMM) \cite{Taniguchi-et-al-2016, Wang-Xi-Zhao-2017}. Dirichlet process mixtures of Gaussian processes were also successfully employed to identify complex multi-vehicle velocity fields \cite{guo2019modeling_dpgp, joseph2011bayesian_dpgp}.

Given the plethora of methods and the need for learning complex interaction patterns in dynamic domains, it is natural to ask which method one should use.
For unsupervised learning, this question is particularly challenging because one typically works with unlabelled data and without immediately available objective functions for the quality of learned clusters of interactions, especially ones which are mathematically represented as a collection of two or more curves taking values in a suitable space, as discussed above. In addition, while the problem of devising techniques which are free of any tuning parameters is an important one, parameter-free algorithms tend to be not robust. A typical unsupervised learning method still requires some prior knowledge or pre-defined parameters (tuning knobs). As a result, clustering results may still be sensitive to these choices. Thus, even when a method is settled on, it is still an important issue how to handle the various tuning knobs and to assess their sensitivity, or stability with respect to changes in the tuning parameters. 

Identifying suitable clustering criteria and analyzing learning stability/sensitivity have received much attention in data mining and statistical learning literatures.
For clustering criteria, there are broadly two categories: internal and external criteria \cite{XuWunschClustering2009}. Internal criteria relies on a similarity or dissimilarity measure that may be applied to the data samples. Such measures evaluate how alike the members of the same cluster are, how different the members of different clusters are, or some combination of thereof \cite{RokachMaimonClusteringMethods2005,XuTianComprehensiveSurveyClustering2015}. 
On the other hand, there is a priori structure how the data should be partitioned, external criteria allow one to compare the clustering results against this structure \cite{RokachMaimonClusteringMethods2005}. Examples include Rand index, mutual information and model-based likelihood-type objectives \cite{XuTianComprehensiveSurveyClustering2015}.

Meanwhile, there is a rich literature on sensitivity analysis that focuses on the impacts of changes in model/method specification on the learning outcomes, see, e.g. textbooks \cite{ChatterjeeHadiSensitivityAnalysisLinear1988, SaltelliEtAlSensitivityAnalysisPractice2004, SaltelliEtAlGlobalSensitivityAnalysis2008}. If we focus on Bayesian methods or model-based methods, the key issue is on the effect of the prior/ model specification. 
While there are a number of variations, most sensitivity analysis techniques involve model fitting with varying prior/ model specifications, and assessing the impacts on posterior distributions or estimates of parameters of interest. A model is said to be robust if the estimates are relatively insensitive to such varying specifications \cite{GustafsonLocalRobustnessBayesian2000,SivaganesanGlobalLocalRobustness2000}. Alternatively, instead of varying the model parameters one may consider perturbing data: a geometric framework was developed to conduct sensitivity analysis with respect to the perturbation of the data, the prior and the sampling distribution for a class of statistical models. Within this framework, various geometric quantities were studied to characterize the intrinsic structure and effect of the perturbation \cite{ZhuEtAlBayesianInfluenceAnalysis2011}.

\comment{
Briefly, there are three broad categories: informal sensitivity analysis, global sensitivity analysis, and local sensitivity analysis \cite{GustafsonLocalRobustnessBayesian2000}. In an informal sensitivity analysis, one might fit the model with a few different priors and compare the results. A global sensitivity approach formalizes this by considering a class of priors and then examining the range of results \cite{SivaganesanGlobalLocalRobustness2000}. 
If the class of priors is reasonably sized and the resultant range of posterior learning is small, then the model is said to be robust, i.e., insensitive to the prior \cite{SivaganesanGlobalLocalRobustness2000}. Alternatively, one may consider the rate of change in the posterior learning results based on a local change in prior, where the rate of change is measured via a suitable notion of derivative of the posterior distribution, an approach known as local sensitivity analysis \cite{GustafsonLocalRobustnessBayesian2000}. 
This type of analysis has been applied to hierarchical models \cite{RoosEtAlSensitivityAnalysisBayesian2015}. The variational Bayes technique was also employed to approximate the change in the posterior distribution on these types of models \cite{GiordanoEtAlCovariancesRobustnessVariational2018}. Finally, a geometric framework was developed to conduct sensitivity analysis with respect to the perturbation of the data, the prior and the sampling distribution for a class of statistical models. Within this framework, various geometric quantities were studied to characterize the intrinsic structure and effect of the perturbation \cite{ZhuEtAlBayesianInfluenceAnalysis2011}.}

To assess the quality of unsupervised learning methods for temporal dynamic interactions, at a high level one may consider the aforementioned methods and frameworks. Moreover, it is necessary to develop a set of suitable metrics for interaction comparison and for assessing the stability of an unsupervised learning method with respect to its tuning parameters. 
Motivated by the representation of dynamic vehicle-to-vehicle interactions that arise in the traffic learning domain, one has to effectively deal with pairs of aligned functions, i.e., trajectories taking values in a suitable space, which is typically non-Euclidean and has high or infinite dimensions. Such metrics must account for the geometric constraints in the physical world as well as the uncertainty associated with the learned patterns. 

To this end, we introduce a model-free metric on pairs of functions based on a Procrustes-type distance, and an optimal transport based Wasserstein distance metric for comparing between distributions of such pairs of functions. The former metric is critical because it preserves translation and rotation invariance, key properties required for capturing the essence of the temporal dynamic between two autonomous or semi-autonomous agents (e.g., vehicles or robots). The latter metric is also appropriate because the result of a clustering algorithm can be mathematically represented as the solution of an optimal transport problem~\cite{Graf-Luschgy-00,ho2017multilevel}.
In addition to some connection to optimal transport based clustering, it is worth noting how
our technical contributions are also inspired by several other prior lines of work. 
In particular, Procrustes-type metrics have been employed in generalized Procrustes analysis which solves the problem of reorienting points to a fixed configuration~\cite{Gower-75-procrustes}. Similar metrics have also been successfully used in literature to study such problems of shape preservation~\cite{Srivastava-Shape-16} as well for alignment of manifolds~\cite{procrustes_manifold,continuous-procrustes-13}. 
In our work, we use it to solve the clustering problem by comparing pairs of curves, each of which may be viewed as manifolds on $\mathbb{R}^2$.  

Finally, we note that the introduced distance metrics can serve as an objective for assessing the stability of an interaction primitive learning algorithm. They are also used for comparing the outcomes produced by different algorithms. Furthermore, they may also be adopted as an objective function to obtain clusters of interactions, and the representative interactions. These concepts and techniques will be introduced in this paper, along with mathematical properties, while their usefulness will be demonstrated in the analysis of vehicle-to-vehicle interactions that arise in the Safety Pilot database \cite{bezzina2014safety}, the world's largest database for connected vehicles.

The paper is organized as follows. In Section~\ref{Section:distance}, we describe a distance metric for pairs of trajectories and explicate its useful mathematical properties. Building on this, Section~\ref{Section: Distribution_primitives} studies distributions of trajectory pairs, which lead to methods for obtaining and assessing clusters of interactions. Finally, Section~\ref{Section: Results} illustrates our methods on the clustering analysis of vehicle-to-vehicle interactions data. 

\section{A distance metric on temporal interactions}
\label{Section:distance}

Because a temporal interaction between two agents is composed of trajectories, we need to first formally define a trajectory. Let $f:\mathbb{R}\rightarrow \mathbb{R}^2$ denote a trajectory of an object (e.g., vehicles, robots). In particular, $f(t)$ represents the location of the object at time-point $t$. 
It suffices for our purpose to restrict to $t\geq 0$.

We can consider all possible trajectories in a similar manner. Define the set of all possible trajectories as $\Fbb=\{f: [0,\infty) \rightarrow \mathbb{R}^2: f \text{ is continuous}\}$. The set of all possible trajectories up to time-point $t$  starting from time-point $s$ is denoted by $\Fcal_{[s,t)}=\{f:[s,t)\rightarrow \mathbb{R}^{2}| f\in \Fbb\}$. Similarly for $ (t_1,\dots,t_k) \in \mathbb{R}_+^{k}$ we will use $\Fcal_{t_1,\ldots,t_k} :=\{ (f (t_1),\ldots,f (t_k)): f \in \Fbb\}$. Also, we define $\Fcal:= \cup_{s,t \in \mathbb{R}^+}\Fcal_{[s,t)}$.

Next, operations can be defined on these trajectories. For any $c\in \mathbb{R}^2$, and $f \in \Fbb$ we define $f+c\in \Fbb$ as $ (f+c) (x)=f (x) + c$ for all $x \in [0,\infty)$. Similarly, for any orthogonal matrix $O \in \mathbb{R}^{2 \times 2}$, define the function $O\odot f \in \Fbb$ as $ (O\odot f)  (x)=O \cdot f(x)$ for all $x \in [0, \infty)$, where $O \cdot f  (x)$ is the usual matrix product between matrix $O$ and vector $f  (x)$ which have matching dimensions. 

With these definitions in place, we now define an interaction and operations on these interactions. An interaction 
is an ordered pair $ (f_1, f_2)$ such that $f_1, f_2 \in \Fbb$. We also define operations on interactions as well. Let $SO (n)$ be the group of $n \times n$ orthogonal matrices with determinant $+1$. For a pair $f_1,f_2 \in \Fbb$, we define $\Ocal_{ (f_1, f_2)}=\{  (O \odot f_1, O \odot f_2) : O \in SO (2)\}$. Similarly, define $\Ccal_{ (f_1,f_2)}=\{ (f_1+c,f_2+c) : c \in \mathbb{R}^{2}\}$.

\subsection{Rotation and translation-invariant metrics on curves}
\label{ssection:rotation}

To evaluate the stability and overall quality of clustering, we want a distance metric, $d:\Fbb^2 \times \Fbb^{2} \rightarrow \mathbb{R}_{+}$, where  ($\Fbb^{2} =\Fbb \times \Fbb$), that has the following properties: 
\begin{itemize}
    \item[ (a)] Distance between two interactions is invariant with respect to the re-ordering of corresponding trajectories, i.e., for $f_{11},f_{12},f_{21},f_{22} \in \Fbb$, the following holds:
    \begin{eqnarray}
    d ( (f_{11},f_{12}), (f_{21},f_{22}))= d ( (f_{12},f_{11}), (f_{21},f_{22})). \nonumber
    \end{eqnarray}
    \item[ (b)] Distance between a pair of interactions is invariant of starting points of the trajectories composing the interactions, given the knowledge of the relative distance of the starting points of trajectories comprising each interaction. Specifically, if $ (f'_1,f'_2) \in \Ccal_{ (f_1,f_2)} \cup \Ocal_{ (f_1,f_2)}$, then, 
    \begin{eqnarray}
    d ( (f'_1,f'_2), (f_1,f_2))=0. \nonumber
    \end{eqnarray}
\end{itemize} 

Condition (a) enables the removal of order in a pair of curves in an interaction, while condition  (b) in essence characterizes \textbf{rotational} and \textbf{translational invariance} of interactions. We will henceforth use $ (\Ocal,\Ccal)_{ (f_1,f_2)}$ to denote the set $\{ (O \odot f_1 + c, O \odot f_2+ c) : O \in SO (2), c\in \mathbb{R}^2\}$. As shown in Lemma~\ref{lemma:invariance}, condition (b) implies that $d ( (f_{11},f_{12}), (f_{21},f_{22}))=d ( (O \odot f_{11} +c, O \odot f_{12} +c), (f_{21},f_{22}))$ for all $c \in \mathbb{R}^2$, $O \in SO (2)$. 
This appears to be a reasonable requirement since the exact location and orientation of interactions should not affect the classification of different interactions into clusters characterized by "primitives". Note that throughout this paper we only consider non-reflective rotational transforms, i.e., transforms involving orthogonal matrices, O, such that $\textrm{det}(O)=+1$.

Let $\rho$ be a distance metric for $\Fbb^2$. We will then construct a metric $d$ satisfying (a) and (b) from $\rho$. Definition \ref{def:rotation_translation_invariant_metric} shows how we can define $d$ in terms of $\rho$.

\begin{definition}
\label{def:rotation_translation_invariant_metric}

Define Procrustes distance
\begin{eqnarray}
\label{eq:rotation_translation_invariance}
& &  d ( (f_{11},f_{12}), (f_{21},f_{22})) \\ 
 &:=&\inf_{ (f'_1,f'_2) \in  (\Ocal,\Ccal)_{ (f_{21},f_{22})}} \biggr \{ \min \biggr \{ \rho ( (f_{11},f_{12}), (f'_1,f'_2)),  \rho ( (f_{12},f_{11}), (f'_1,f'_2)) \biggr \} \biggr \}.\nonumber
\end{eqnarray}
\end{definition}

From the definition of metric $d$ above, it is clear that  $ (f_{21},f_{22}) \in  (\Ocal,\Ccal)_{ (f_{11},f_{12})} \cup (\Ocal,\Ccal)_{ (f_{12},f_{11})} \iff d ( (f_{11},f_{12}), (f_{21},f_{22}))=0$. With that knowledge, we can define an equivalence relation, $\sim$,  as 
\begin{eqnarray}
\label{eq:metric_quotient_space}
  (f_{11},f_{12}) \sim  (f_{21},f_{22}) \iff d ( (f_{11},f_{12}), (f_{21},f_{22}))=0.
\end{eqnarray}

Although $d$ is not a proper metric on $\Fbb^2$, as  Proposition \ref{proposition:proper_metric} shows,  $d$ does define a metric on the quotient space relative to the equivalence relation.

\begin{proposition}
\label{proposition:proper_metric}
Let $\rho$ be a distance metric on $\Fbb^2$ such that for all $f_{11},f_{12},f_{21},f_{22} \in \Fbb$,
\begin{enumerate}
    \item[(i)] $\rho$ satisfies, for some function $h$,
\begin{eqnarray}
\rho ( (f_{11},f_{12}), (f_{21},f_{22}))= h( (f_{11},f_{12})-(f_{21},f_{22})). \nonumber
\end{eqnarray}

\item[(ii)] $\rho $ is an inner-product norm.
\end{enumerate}
Then $d$ given by Eq. \eqref{eq:rotation_translation_invariance} is a distance metric on the quotient space $\Fbb^2/ \sim$.
\end{proposition}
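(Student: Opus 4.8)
The plan is to prove that $d$ is a \emph{pseudometric} on $\Fbb^2$ and then invoke the standard fact that a pseudometric descends to a genuine metric on the quotient by its zero-distance relation, which here is exactly $\sim$. First I would unpack the two hypotheses on $\rho$. Property (i) says $\rho(a,b)=h(a-b)$, so $\rho$ depends only on the coordinatewise difference, while property (ii) says $h(\cdot)=\|\cdot\|$ for a norm coming from an inner product. From these I extract the three structural facts that drive the whole argument: (1) $\rho$ is symmetric, since $\|z\|=\|-z\|$; (2) the group $G$ of orientation-preserving rigid motions, acting on $\Fbb^2$ by $(O,c)\cdot(g_1,g_2)=(O\odot g_1+c,\,O\odot g_2+c)$, acts by $\rho$-isometries --- the translation $c$ cancels in the difference by (i), and the rotation $O$ preserves $\|\cdot\|$ because $O$ is orthogonal; and (3) $\rho$ is invariant under the swap $\sigma:(g_1,g_2)\mapsto(g_2,g_1)$, i.e. $\rho(\sigma a,\sigma b)=\rho(a,b)$, reflecting the coordinate-symmetry of the inner-product norm. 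I also record, as the excerpt already notes, that the $\sim$-class of $x=(f_{11},f_{12})$ is $[x]=(\Ocal,\Ccal)_{(f_{11},f_{12})}\cup(\Ocal,\Ccal)_{(f_{12},f_{11})}$, i.e. the orbit of $x$ under the group generated by $G$ and $\sigma$.

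The key step is a reformulation of $d$ as the gap between equivalence classes: I claim $d(x,y)=\inf_{a\in[x],\,b\in[y]}\rho(a,b)$. To establish it, I start from the definition and pull the $\min$ outside the infimum using $\inf_g\min\{A(g),B(g)\}=\min\{\inf_g A,\inf_g B\}$, obtaining $d(x,y)=\min\{\inf_{g\in(\Ocal,\Ccal)_y}\rho(x,g),\,\inf_{g\in(\Ocal,\Ccal)_y}\rho(\sigma x,g)\}$. The second infimum I rewrite by swap-invariance, $\rho(\sigma x,g)=\rho(x,\sigma g)$, together with the fact that $\sigma$ maps $(\Ocal,\Ccal)_y$ onto $(\Ocal,\Ccal)_{\sigma y}$; this collapses the two infima into the single $\inf_{b\in[y]}\rho(x,b)$. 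Finally, enlarging the first argument from $x$ to all of $[x]$ changes nothing, because every $a\in[x]$ equals $\gamma x$ or $\gamma\sigma x$ for some $\gamma\in G$, and transferring $\gamma$ (and $\sigma$) onto the second slot by isometry-invariance merely replaces $b$ by another point of the $G$- and $\sigma$-invariant set $[y]$. From this reformulation, non-negativity, $d(x,x)=0$ (take $a=b=x$), well-definedness on the quotient, and in particular symmetry $d(x,y)=d(y,x)$ are all immediate.

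I expect the triangle inequality to be the main obstacle, and the reformulation is precisely what tames it. The difficulty is that the infima defining $d(x,y)$ and $d(y,z)$ are approached at possibly \emph{different} representatives of the middle class $[y]$, so $\rho$'s own triangle inequality cannot be chained directly. I would argue with $\epsilon$-near-optimal representatives, which also sidesteps any question of whether the infima are attained (delicate in these infinite-dimensional orbits): fix $\epsilon>0$ and pick $a_0\in[x],\,b_0\in[y]$ with $\rho(a_0,b_0)\le d(x,y)+\epsilon$ and $b_1\in[y],\,c_0\in[z]$ with $\rho(b_1,c_0)\le d(y,z)+\epsilon$. Since $b_0,b_1$ lie in the same class, there is a transformation $\tau$ in the group generated by $G$ and $\sigma$ with $\tau b_0=b_1$; any such $\tau$ is a $\rho$-isometry and maps $[x]$ onto $[x]$. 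Applying $\tau$ to the first pair gives $\rho(\tau a_0,b_1)=\rho(a_0,b_0)\le d(x,y)+\epsilon$ with $\tau a_0\in[x]$, so $\rho$'s triangle inequality through the common point $b_1$ yields $\rho(\tau a_0,c_0)\le d(x,y)+d(y,z)+2\epsilon$. Because $\tau a_0\in[x]$ and $c_0\in[z]$, the reformulation gives $d(x,z)\le\rho(\tau a_0,c_0)\le d(x,y)+d(y,z)+2\epsilon$, and letting $\epsilon\to 0$ finishes the bound.

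Having shown $d$ to be a pseudometric on $\Fbb^2$, I would close with the standard quotient construction. The relation $\sim$ of Eq.~\eqref{eq:metric_quotient_space} is then automatically an equivalence relation, its reflexivity, symmetry and transitivity being consequences of $d(x,x)=0$, the symmetry of $d$, and the triangle inequality. The two-sided estimate $|d(x,y)-d(x',y')|\le d(x,x')+d(y,y')$, itself two applications of the triangle inequality, shows $d$ is constant on pairs of $\sim$-classes, hence descends to a well-defined function on $\Fbb^2/\sim$. This descended map inherits non-negativity, symmetry and the triangle inequality, and satisfies $d([x],[y])=0\iff x\sim y\iff[x]=[y]$; it is therefore a genuine metric, as claimed.
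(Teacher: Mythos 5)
Your proposal is correct and follows essentially the same route as the paper's proof: both arguments hinge on the fact that the rigid motions (and the coordinate swap) act as $\rho$-isometries, so that the defining infimum can equivalently be taken over transformations of \emph{both} arguments --- your orbit-gap reformulation $d(x,y)=\inf_{a\in[x],\,b\in[y]}\rho(a,b)$ is exactly the paper's two-sided-infimum identity --- after which symmetry and the triangle inequality are inherited from $\rho$. If anything, your write-up is more explicit than the paper's on two points it leaves implicit: the treatment of the $\min$ over the two orderings (your swap $\sigma$), and the verification that $d$ is constant on equivalence classes and hence descends to a genuine metric on the quotient.
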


Proposition \ref{proposition:proper_metric} also provides a method to build a metric satisfying conditions (a) and (b) above. One way to do so is from a probability measure perspective. 
In fact, let $\mu$ be a probability measure on $[0,\infty)$. We consider the set of trajectories with integrable Euclidean norm on $[0,\infty)$, i.e., we restrict attention to the following set of trajectories:
\begin{eqnarray}
\Fbb_2 (\mu)= \biggr \{f: [0,\infty) \rightarrow \mathbb{R}^2 \biggr | f \text{ is continuous, } \int_0^{\infty} \|f (x)\|_2^2\mu (\mathrm{d}x) < \infty, \biggr \}\nonumber
\end{eqnarray}
where $\|\cdot\|_2$ is the Euclidean norm in $\mathbb{R}^2$.
For our purposes, we use $\rho$ as the usual Euclidean metric on $\Fbb_2^2 (\mu) :=\Fbb_2 (\mu) \times \Fbb_2 (\mu)$.
Namely, for $(f_{11},f_{12}),(f_{21},f_{22}) \in  \Fbb_2^2 (\mu)$, we use
\begin{equation}
\label{eq:l_2 metric}
 \rho ( (f_{11},f_{12}) , (f_{21},f_{22}))^2
 := \|f_{11}-f_{21}\|_2^2
     +  \|f_{12}-f_{22}\|_2^2,
\end{equation}
where $\|f_{1i}-f_{2i}\|_2^2=\int_0^{\infty}\|f_{1i}(x) -f_{2i}(x)\|_2^2 \mu (\mathrm{d}x), i=1,2 $. Here and henceforth we assume that the trajectories $f_{11},f_{12},f_{21},f_{22}$ all span across the same length of time.
 Note that this choice of metric satisfies the criteria in Proposition~\ref{proposition:proper_metric}. Also, equivalently, to define similar rotation and translation invariant metrics on $\Fcal_{[s,t)}$, for any  $s<t\neq \infty$, we can simply choose any probability measure $\mu$ with support on $[s,t)$.

Proposition~\ref{proposition:computation} below provides a simple method to explicitly compute the metric $d$ between interactions, when $\rho$ is given by~\eqref{eq:l_2 metric}.
We will need the following notation:

\begin{enumerate}
    \item[ (A1)] For $(f_{11},f_{12}),(f_{21},f_{22}) \in \Fbb_2^2 (\mu)$, let $UDV^T$ be the singular value decomposition for the matrix given by 
    $$\sum_{i=1}^2 \int_0^\infty 
    \left (f_{2i} (x)-
        \bar{f_{2\cdot}}(x)
    \right) 
    \left(f_{1i} (x)-
        \bar{f_{1\cdot}}(x)
    \right)^T \mu (\mathrm{d}x),$$
\end{enumerate}

\noindent where $\bar{f_{2\cdot}}(x) = \int_0^\infty (f_{21} (x) + f_{22} (x))/2 \ \mu (\mathrm{d}x)$ and \\$\bar{f_{1\cdot}}(x) = \int_0^\infty (f_{11} (x) + f_{12} (x))/2 \ \mu (\mathrm{d}x)$.

Each of the summands in (A1) form a $2 \times 2$ dimensional matrix. Here, $\bar{f_1}(x)$ denotes the elementwise integration of the $2 \times 1 $
vector $(f_{11} (x) + f_{12} (x))/2$. Moreover, the outer-integral in each of the summands in (A1) is an elementwise integral of the $2 \times 2$ matrix integrand formed by matrix multiplication of the $ 2 \times 1$ vector $\left (f_{21} (x)-\bar{f_{2\cdot}}(x)\right)$ and the $1 \times 2$ vector $\left(f_{11} (x)-\bar{f_{1\cdot}}(x)\right)^T$.

\begin{proposition}
\label{proposition:computation}
Assume $f_{11},f_{12},f_{21},f_{22} \in \Fbb_2 (\mu)$. Let $UDV^T$ be the singular value decomposition as in  (A1). Then, 
\begin{eqnarray}
\label{eq:computation}
\inf_{ (f'_1,f'_2) \in  (\Ocal,\Ccal)_{(f_{21},f_{22})}} (\rho (  (f_{11},f_{12}), (f'_1,f'_2)))^2  = - 2 \operatorname{trace}\left (D 
 \begin{bmatrix}
    1 & 0 \\
    0 & \operatorname{det} (V^TU)
  \end{bmatrix}\right)  \nonumber \\ + \sum_{i=1}^2\int_0^\infty \biggr\|f_{2i} (x) -
        \bar{f_{2\cdot}}(x)
    \biggr\|_2^2  \nonumber  +  \biggr\|f_{1i} (x)-
        \bar{f_{1\cdot}}(x)
    \biggr\|_2^2\mu (\mathrm{d}x).
\end{eqnarray}
The optimal $\Ocal,\Ccal$ that define the infimum are given by :
\begin{align}
\begin{split}
   \tilde{\Ocal} &= V^T \begin{bmatrix}
    1 & 0 \\
    0 & \operatorname{det} (V^TU)
  \end{bmatrix}U,
  \\
   \tilde{\Ccal} &=
        \bar{f_{1\cdot}}(x)
    -\tilde{\Ocal} \cdot\left (
        \bar{f_{2\cdot}}(x)
    \right).
\end{split}
\end{align}
\end{proposition}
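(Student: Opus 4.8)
The plan is to recognize the inner minimization defining $d$ as a \emph{constrained orthogonal Procrustes problem} and to solve it in two decoupled stages, first over the translation $c$ and then over the rotation $O\in SO(2)$. Writing a generic element of $(\Ocal,\Ccal)_{(f_{21},f_{22})}$ as $(O\odot f_{21}+c,\,O\odot f_{22}+c)$ and using the form of $\rho$ in~\eqref{eq:l_2 metric}, the quantity to minimize is
\[
g(O,c) = \int_0^\infty \sum_{i=1}^2 \bigl\| f_{1i}(x) - O f_{2i}(x) - c\bigr\|_2^2 \, \mu(\mathrm{d}x).
\]
For fixed $O$ this is a strictly convex quadratic in $c\in\mathbb{R}^2$, so I would set $\nabla_c g = 0$; because $\mu$ is a probability measure, this gives the unique minimizer $c^\ast(O)=\bar{f_{1\cdot}} - O\,\bar{f_{2\cdot}}$, the translation aligning the two $\mu$-weighted centroids. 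This already identifies the optimal translation $\tilde{\Ccal}$ (once the optimal rotation is known) and reduces the problem to $\min_{O\in SO(2)} g\bigl(O,c^\ast(O)\bigr)$.

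Substituting $c^\ast(O)$ recenters both configurations: setting $\tilde f_{ji}(x):=f_{ji}(x)-\bar{f_{j\cdot}}$ gives $f_{1i}-Of_{2i}-c^\ast = \tilde f_{1i}-O\tilde f_{2i}$, so the reduced objective is $\int_0^\infty\sum_i \|\tilde f_{1i}-O\tilde f_{2i}\|_2^2\,\mu(\mathrm{d}x)$. Expanding the square and using that $O$ is orthogonal (hence $\|O\tilde f_{2i}\|_2=\|\tilde f_{2i}\|_2$) splits this into the two $O$-independent norm terms appearing in~\eqref{eq:computation}, plus the cross term $-2\int_0^\infty\sum_i \tilde f_{1i}^T O\tilde f_{2i}\,\mu(\mathrm{d}x)$. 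Rewriting each scalar as $\tilde f_{1i}^T O\tilde f_{2i}=\operatorname{trace}\bigl(O\,\tilde f_{2i}\tilde f_{1i}^T\bigr)$ and pulling the finite sum and the integral inside the trace identifies the cross term as $-2\operatorname{trace}(OM)$, with $M$ exactly the matrix in (A1). Thus minimizing $g$ is equivalent to \textbf{maximizing} $\operatorname{trace}(OM)$ over $O\in SO(2)$.

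For this last step I would insert the SVD $M=UDV^T$ and use trace cyclicity to write $\operatorname{trace}(OM)=\operatorname{trace}(DZ)$ with $Z:=V^T O U$. Then $Z$ is orthogonal and, since $\det O=1$, satisfies $\det Z=\det(V^T U)=:s$; with $D=\operatorname{diag}(d_1,d_2)$ one has $\operatorname{trace}(DZ)=d_1 Z_{11}+d_2 Z_{22}$. The crux is the \emph{determinant constraint}: parametrizing the $2\times2$ orthogonal matrices of determinant $s$ (proper rotations when $s=+1$, reflections when $s=-1$) shows the maximum is attained at $Z=\operatorname{diag}(1,s)$, with value $d_1+s\,d_2=\operatorname{trace}\bigl(D\operatorname{diag}(1,s)\bigr)$. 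This is the genuinely nontrivial point: over \emph{all} orthogonal matrices the answer would simply be $d_1+d_2$, but the restriction to $SO(2)$ forbids the reflection and forces the $\det(V^T U)$ correction when $s=-1$. Multiplying this optimal value by $-2$ and adding back the norm terms yields the stated expression~\eqref{eq:computation}, while inverting $V^T\tilde\Ocal U=\operatorname{diag}(1,s)$ recovers the optimal rotation $\tilde\Ocal$ and hence $\tilde\Ccal=c^\ast(\tilde\Ocal)$.

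I would flag two routine items to check along the way: that the sum–integral–trace interchanges are legitimate (guaranteed by $f_{ji}\in\Fbb_2(\mu)$, which makes every term integrable), and that the two-stage minimization is valid (immediate, since $c^\ast(O)$ is the exact inner minimizer for each $O$ and $SO(2)$ is compact so the outer maximum is attained). The only conceptually substantive obstacle is the determinant-constrained trace maximization over $SO(2)$ described above; everything else is expansion and bookkeeping.
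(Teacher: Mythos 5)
Your proof follows essentially the same route as the paper's: minimize over $c$ first to get $c^{\ast}(O)=\bar{f_{1\cdot}}-O\,\bar{f_{2\cdot}}$, substitute and recenter so the objective splits into $O$-independent norm terms plus the cross term $-2\operatorname{trace}(OM)$ with $M$ exactly the matrix in (A1), then solve the determinant-constrained trace maximization over $SO(2)$ via the SVD, which is precisely the paper's argument (the paper merely asserts the $Z=\operatorname{diag}(1,s)$ step that you justify by parametrization). One minor point in your favor: your optimal rotation $\tilde{\Ocal}=V\operatorname{diag}\bigl(1,\det(V^{T}U)\bigr)U^{T}$, obtained by inverting $V^{T}\tilde{\Ocal}U=\operatorname{diag}(1,s)$, is the algebraically consistent form, whereas the paper prints $V^{T}\operatorname{diag}\bigl(1,\det(V^{T}U)\bigr)U$, which appears to be a transposition typo.
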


The proof of the proposition is discussed in Section~\ref{proof:proposition::computation}.
The problem discussed in Propostion~\ref{proposition:computation} is a version of the well-known  \textbf{least root mean square deviation} problem. It was first solved by the Kabsch algorithm~\cite{Kabsch-76,Kabsch-78}. A more computationally efficient method to compute the optimal $\Ocal, \Ccal$ was later obtained using the theory of quarternions~\cite{Quarternion-86,Quarternion-04}.

\section{Quantifying distributions of primitives}
\label{Section: Distribution_primitives}
The metric defined above can be used to obtain clusters of interactions, in addition to evaluating the overall quality and stability of a particular clustering method. Our starting point is to note that the problem of clustering or summarizing interactions can be formalized as finding a discrete distribution on the space of interactions.
More specifically, one needs to obtain a discrete probability distribution on interactions, where each supporting atom represents a typical interaction (namely, an interaction \emph{primitive}) and the mass associated with each atom represents the proportion of a cluster.
From this perspective, an objective that naturally arises is to minimize a distance from the empirical distribution of interactions to a discrete probability measures with a fixed number, say $k$, of supporting atoms, which represent the primitives. 
An useful tool for defining distance metrics on the space of distributions
arises from the theory of optimal transport \cite{Villani-03}.

Optimal transport distances enable comparisons of distributions in arbitrary structured and metric spaces by accounting for the underlying metric structure.
They have been increasingly adopted to address clustering in a number of contexts~\cite{Pollard-82-kmeans,Graf-Luschgy-00,ho2017multilevel}.
For instance, it is well-known that the problem of determining an optimal finite discrete probability measure minimizing the second-order Wasserstein distance $W_2$ to the empirical distribution of the data is directly connected to the k-means clustering problem (discussed in Section III in details). Inspired by this connection, we will seek to summarize the distribution of interactions appropriately. To this end, we will define Wasserstein distances for distributions of interactions as follows, by accounting for the metric structure developed in the previous section.

Let $d$ be a distance metric on $\Fbbsim$, where $\sim$ is the equivalence relation defined in Eq.~\eqref{eq:metric_quotient_space}. 

Fix $ [(f_{11},f_{12})] \in \Fbbsim$. Here, $[(f_{11},f_{12})]$ denotes the equivalence class corresponding to interaction $(f_{11},f_{12})$ relative to the equivalence relation $\sim$ and $ \Fbbsim$ denotes the collection of all such classes of interactions. Let $P (\Fbbsim)$ denote all probability measures on $\Fbbsim$. For a fixed order $r\geq 1$, define the following subset of $P(\Fbbsim)$ subject to a moment-type condition using the metric $d$:

\begin{eqnarray}
& &\Pcal_r (\Fbbsim) := \biggr\{ G \in P (\Fbbsim) | \nonumber \\ & &  \int d^r ([ (f_{21},f_{22})],[ (f_{11},f_{12})]) \mathrm{d}G ([ (f_{21},f_{22})]) < \infty \biggr\}. \nonumber
\end{eqnarray}
 This class of probability measures can be shown to be independent of the choice of $[  (f_{11},f_{12})]$ and therefore the collection of order-$r$ integrable probability measures on the quotient space $\Fbbsim$ is independent of the choice of the base class $[ (f_{11},f_{12})]$. We arrive at the following distance metric to compare between probability measures on the quotient space $\Fbbsim$. This is an instantiation of Wasserstein distances that arise in the theory of optimal transport in metric spaces~\cite{Villani-03}.

\begin{definition}[\textbf{Wasserstein distances}]
Let $F,G \in \Pcal_r (\Fbbsim)$. The Wasserstein distance of order $r$ between $F$ and $G$ is defined as:
\begin{eqnarray}
W_r (F,G) := \biggr( \underset{\pi \in \Pi (F,G)}{\inf} \int d^r ([ (f_{11},f_{12})],[ (f_{21},f_{22})]) 
\mathrm{d}\pi ([ (f_{11},f_{12})],[ (f_{21},f_{22})])\biggr)^{1/r} \nonumber,
\end{eqnarray}
where $\Pi (F,G)$ is the collection of all joint distributions on $\Fbbsim \times \Fbbsim$ with marginals $F$ and $G$.
\end{definition}

\subsection{Wasserstein barycenter and k-means clustering}
\label{ssection:k-means}
In this section, we present the Wasserstein barycenter problem and highlight its connection to the k-means formulation. 

\paragraph{Wasserstein barycenter problem} Fixing the order $r=2$, let $P_1,P_2,\ldots,P_N \\ \in \Pcal_2 (\Fbbsim)$ be probability measures on $\Fbbsim$. Their second-order Wasserstein barycenter is a probability measure $\bar{P}_{N,\lambda} $ such that 

\begin{eqnarray}
\bar{P}_{N,\lambda}=\underset{P \in P_2 (\Fbbsim) }{\text{argmin}} \sum_{i=1}^N \lambda_i W_2^2 (P,P_i). \nonumber
\end{eqnarray}
The Wasserstein barycenter problem was first studied by~\cite{Agueh-Carlier-11}.
When $P_i$ are themselves finite discrete probability measures on arbitrary metric spaces, efficient algorithms are available for obtaining locally optimal solutions to the above~\cite{Cuturi-Barycenter-14}.

\paragraph{k-means clustering problem} The k-means clustering problem, when adapted to obtaining clusters in a non-Euclidean space of interactions, can be viewed as solving for the set $S$ of $k$ elements $[ (g_{11},g_{12})], \ldots, [ (g_{k1},g_{k2})] \in \Fbbsim$ such that, given samples $ (f_{11},f_{12}),\ldots, (f_{n1},f_{n2}) \in \Fcal^2 (\mu) $
\begin{eqnarray}
\label{eq:k_means}
S= \underset{T: |T| \leq k}{\text{argmin}} \sum_{i=1}^n \inf_{[ (f'_1,f'_2)] \in T }d^2 ([ (f_{i1},f_{i2})],[ (f'_1,f'_2)]).
\end{eqnarray}

It can be shown that this is equivalent to finding a discrete measure $P$ which solves the following for the choice $r=2$: 
\begin{eqnarray}
\label{eq:wasserstein-k_means}
\inf _{P \in \Ocal_k (\Fbbsim)} W_r (P,P_n),
\end{eqnarray}
where $P_n$ is the empirical measure on $\Fbbsim$, i.e., $P_n$ places mass $1/n$ on equivalence class sample $[(f_{i1},f_{i2})]$ for all $i =1,\ldots, n$, and $\Ocal_k(\Fbbsim)$ is the set of all measures in $\Fbbsim$ with at most $k$ support points. (It is interesting to note that Eq.~\eqref{eq:wasserstein-k_means} is a special case of the Wasserstein barycenter problem for $N=1$ and $r=2$.)

At the high level our approach is simple: we seek to summarize the empirical data distribution of interactions using a k-means-like approach, but there are several challenges due to the complex metric structure exhibited by the non-Euclidean space of interactions. Finding the exact solution even in the simplest cases is an NP-hard problem. The most common method to approximate the solution is the use of iterative steps similar to Lloyd's algorithm~\cite{Lloyd-algo} for solving the Euclidean k-means problem. However, the computation of cluster centroids at each iteration of Lloyd's algorithm when applied to the non-Euclidean metric $d$ is non-trivial. Moreover, the computation of pairwise distances between equivalence classes of interactions is non-trivial. In the next subsection we present some approximate solutions to Eq.~\eqref{eq:k_means}.

\subsection{Approximations for non-Euclidean $k$-means clustering}
\label{ssection:approximation}
The primary objective for this section is to obtain a robust representation for the distribution over interaction primitives. Although the empirical distribution of interactions provides an estimate of the distribution over primitives, it suffers from lack of robustness guarantees. A robust $k$-approximation for the empirical distribution is formalized by Eq.~\eqref{eq:wasserstein-k_means}. For order $r=2$ this is equivalent to solving the k-means problem given by Eq.~\eqref{eq:k_means} for the interaction scenarios. The computational problem for computing exact centroids of k-means clusters is cumbersome and generally not solvable for arbitrary distance metrics $d$. To overcome such  challenges we propose three separate methods to obtain approximate solutions to Eq.~\eqref{eq:k_means}. The first approach is a standard application of multi-dimensional scaling technique. The second and third approaches are based on other geometric ideas to be described in the sequel. 
\subsubsection{Multidimensional Scaling}
\label{sssection:mds}
Multi-dimensional scaling (MDS) provides a way to obtain a lower dimensional representation of high-dimensional and/or non-Euclidean space elements while approximately preserving some distance measure among data points. Given a distance (a.k.a. dissimilarity) matrix 
$D= (d_{ij})_{1\leq i,j\leq n}$, which collects all pairwise distance among the $n$ data points using a notion of distance such as metric $d$ described earlier, MDS finds points $x_1,\ldots,x_n \in \mathbb{R}^m$, for some small dimension $m$, such that 
\begin{eqnarray}
\label{eq:MDS}
\{x_1,\ldots, x_n\} = \underset{y_1,\ldots,y_n \in \mathbb{R}^m}{\text{argmin}} \sum_{i,j=1}^n  (\|y_i-y_j\|-d_{ij})^2
\end{eqnarray}

In order to apply the k-means clustering technique to our MDS representation, the following implicit assumption is required:

\begin{itemize}
    \item[ (C1)] Each of the cluster centroids for the k-means problem corresponds to an interaction in the data sample.
\end{itemize}

Given (C1), Eq.~\eqref{eq:k_means} can be reformulated as follows.
\paragraph{Approximate k-means} Given interaction samples $ (f_{11},f_{12}),\ldots, (f_{n1},f_{n2}) \in \Fcal^2(\mu)$, find a set $S \subset \{1,\ldots,n\}$ such that,
\begin{eqnarray}
\label{eq:k_means_approx}
S= \underset{T: |T| \leq k}{\text{argmin}} \sum_{i=1}^n \min_{j \in T }d^2 ([ (f_{i1},f_{i2})],[ (f_{j1},f_{j2})]).
\end{eqnarray}
The approximate k-means problem in Eq. ~\eqref{eq:k_means_approx} differs from the k-means problem ~\eqref{eq:k_means} in that instead of finding primitives that are the global minimizer (and hence correspond to the cluster means), we look for the primitive that is closest to all other interactions in its cluster. The advantage of this approach is that we do not need explicitly the inverse map that goes from the MDS representation back to the interaction space. We summarize this approach as Algorithm~\ref{algo: primitive clustering} in the following.

\begin{algorithm}[ht]
\caption{Clustering interactions}
\label{algo: primitive clustering}
\algsetup{linenosize=\small}
\scriptsize
Input: interaction sample $\{(f_{i1},f_{i2})\}_{i=1}^{n}$\\
Output: $k$ interaction primitives
\begin{algorithmic}[1]
\STATE Obtain $x_1,\ldots,x_n$ as solution of MDS Eq.~\eqref{eq:MDS} with $d_{ij}=d ([ (f_{i1},f_{i2})],[ (f_{j1},f_{j2})])$.
\STATE  Perform k-means on $x_1,\ldots,x_n$ to obtain the centroids.
\STATE Approximate the centroids with points $x_i \in \mathbb{R}^m$ which are closest in $\|\cdot\|$ distance to the centroids, $\Gamma_1, \Gamma_2, \ldots, \Gamma_k$.
\STATE Return as primitives the $k$ interaction sample corresponding to these approximate centroids, $\{(g_{j1},g_{j2})\}_{j=1}^{k}$.
\end{algorithmic}
\end{algorithm}

\subsubsection{Geometric Approximations}
\label{sssection:geometric_approx}

A major computational challenge to solving Eq.~\eqref{eq:k_means_approx} lies in the SVD decomposition of the  Procrustes distances (Eq.~\eqref{eq:rotation_translation_invariance}) relative to each pair of interactions. There require $O(n^2)$ such decomposition.  To avoid this, we instead consider a geometric approximation of the Procrustes distance, inspired by work from the field of morphometrics \cite{stats_shape_analysis}. 

Consider two interactions $(f_{i1},f_{i2})$ and $(f_{j1},f_{j2})$. Then, by an application of triangle inequality,
\begin{eqnarray}
\label{eq:geo_approx_1}
& & \inf_{ (f'_1,f'_2) \in  (\Ocal,\Ccal)_{ (f_{j1},f_{j2})}}\rho ( (f_{i1},f_{i2}), (f'_1,f'_2)) \\ 
 &=&\inf_{O_1 \in SO(2), c_1 \in \mathbb{R}^2}\rho ( (f_{i1},f_{i2}), O_1\odot(f_{j1},f_{j2})+ c_1) \nonumber \\
 & \leq & \inf_{O_1 \in SO(2), c_1 \in \mathbb{R}^2}\rho ( (f_{11},f_{12}), O_1\odot(f_{j1},f_{j2})+ c_1) \nonumber \\& + &\inf_{O_2 \in SO(2), c_2 \in \mathbb{R}^2}\rho ( (f_{11},f_{12}), O_2\odot(f_{i1},f_{i2})+ c_2).\nonumber
\end{eqnarray}
 Eq.~\eqref{eq:geo_approx_1} shows that knowledge of  optimal rotational matrices and translation vectors for computing the distances  $d ([ (f_{i1},f_{i2})],[ (f_{11},f_{12})])$ and \\ $d ([ (f_{j1},f_{j2})],[ (f_{11},f_{12})])$ can provide an upper bound for computing the distance between the $i^{th}$ and $j^{th}$ pair of interactions. Therefore, we can provide a reasonable upper bound for all the $n^2$ pairwise distances by simply performing only $O(n)$ SVD decompositions. This approach, which we call the $\textit{first geometric approximation}$, is summarized in Algorithm~\ref{algo:first_geom_approx}.
 
\begin{algorithm}[!ht]
\caption{First Geometric Approximation}
\label{algo:first_geom_approx}
\algsetup{linenosize=\small}
\scriptsize
Input: $\{(f_{i1},f_{i2})\}_{i=1}^{n}$\\ 
Output: $k$ centroids
\begin{algorithmic}[1]
\FOR{$i = 1, 2, \dots, n$}
\STATE Center and reorient $(f_{i1},f_{i2})$ to $(f_{11},f_{12})$ using Algorithm \ref{algo: primitive reorienting}. 
\ENDFOR 
\STATE Perform k-means on the centered and oriented $\{(f_{i1},f_{i2})\}_{i=1}^{n}$ to obtain the centroids, $\{(\Gamma_{j1}, \Gamma_{j2})\}_{j = 1}^k$. 
\STATE Return the centroids, $\{(\Gamma_{j1}, \Gamma_{j2})\}_{j = 1}^k$.
\end{algorithmic}
\end{algorithm}

\begin{algorithm}[!ht]
\caption{Second Geometric Approximation}
\label{algo:second_geom_approx}
\algsetup{linenosize=\small}
\scriptsize
Input: $\{(f_{i1}, f_{i2})\}_{i=1}^{n}$\\ 
Output: $k$ centroids
\begin{algorithmic}[1]
\STATE Randomly assign interaction samples $\{(f_{i1},f_{i2})\}_{i=1}^{n}$ to $k$ clusters. Let $z_i$ indicate the cluster assignment.
\WHILE{k-means convergence criterion has not been met}
\FOR{$k' = 1, 2, \dots, k$}
\STATE Center and orient all interaction samples $(f_{i1},f_{i2})$ to $(f_{i_{k'}1},f_{i_{k'}2})$ using Algorithm \ref{algo: primitive reorienting} if $(f_{i_{k'}1},f_{i_{k'}2})$ is the first interaction sample such that $z_i = k'$ for $i = 1, 2, \dots n$. Denote these oriented and centered samples as $(f'_{i1},f'_{i2})(t)$.
\STATE Compute the centroid for cluster $j$, $(\Gamma_{j1},\Gamma_{j2})$, such that for  $t = 1, 2, \dots, t_m$,
\[
(\Gamma_{j1},\Gamma_{j2})(t) = \frac{1}{\countV{z_i = k}} \sum_{i:z_i = k} (f'_{i1},f'_{i2})(t)
\]
\ENDFOR
\FOR{$i = 1, 2, \dots, N$}
\FOR{$j = 1, 2, \dots, k$}
\STATE Center and orient $(f_{i1},f_{i2})$ to $(\Gamma_{j1}, \Gamma_{j2})$.
\STATE Compute the $L_2$ distance between the centered and oriented $(f_{i1},f_{i2})$ and $(\Gamma_{j1}, \Gamma_{j2})$.
\ENDFOR
\STATE Set $z_i = j$ if the smallest computed distance is from the centroid of cluster $j$. 
\ENDFOR
\ENDWHILE
\STATE Return the centroids, $\{(\Gamma_{j1}, \Gamma_{j2})\}_{j = 1}^k$.
\end{algorithmic}
\end{algorithm}

However, this gain in computation efficiency is also accompanied by a loss of statistical efficiency. To mitigate this tension between computational and statistical efficiency we propose a $\textit{second geometric approximation}$ which performs the approximation of Algorithm~\ref{algo:first_geom_approx} in batch form, where the batches comprise of the respective clusters. This procedure is described in Algorithm \ref{algo:second_geom_approx}.

\section{Experimental Results}
\label{Section: Results}

In this section we provide a demonstration of our methods for unsupervised learning of vehicle interactions.  In particular, we will evaluate the quality and stability of clustered primitives extracted from vehicle-to-vehicle interactions based on real-world experiments conducted in Ann Arbor, Michigan. In the literature for this application domain, a real-time interaction between two vehicles is also alternatively referred to as an encounter. In practice, the interactions between vehicles are represented by multi-dimensional time series of varying duration, which need to be further segmented into shorter time duration via suitable data processing techniques. 
\begin{figure*}[!tp]
\centering
\begin{subfigure}{.19\textwidth}
\centering
\includegraphics[width = 1\textwidth]{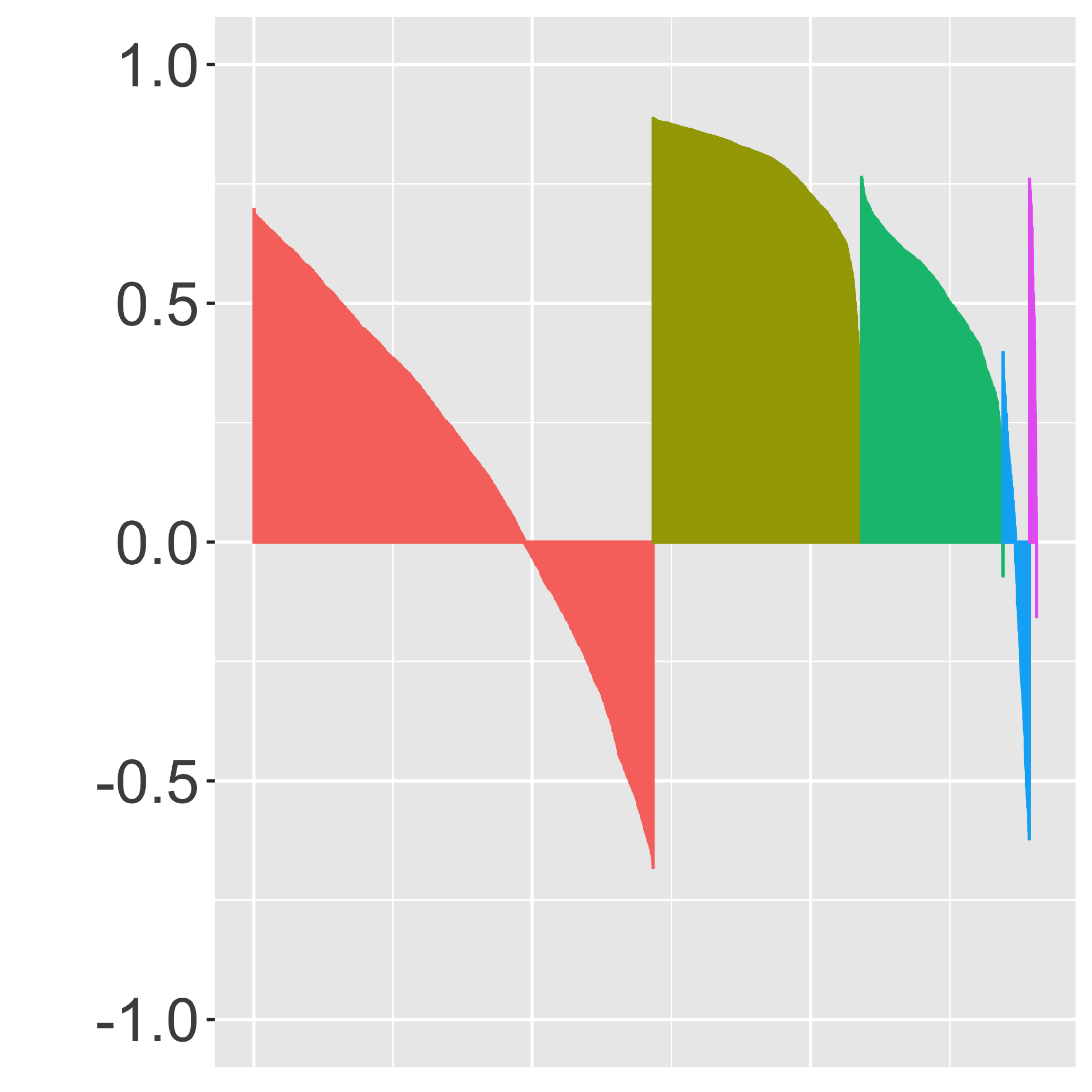}
\caption{Multidim. Scaling (cf. Section~\ref{sssection:mds})}
\end{subfigure}
\begin{subfigure}{.19\textwidth}
\centering
\includegraphics[width = 1\textwidth]{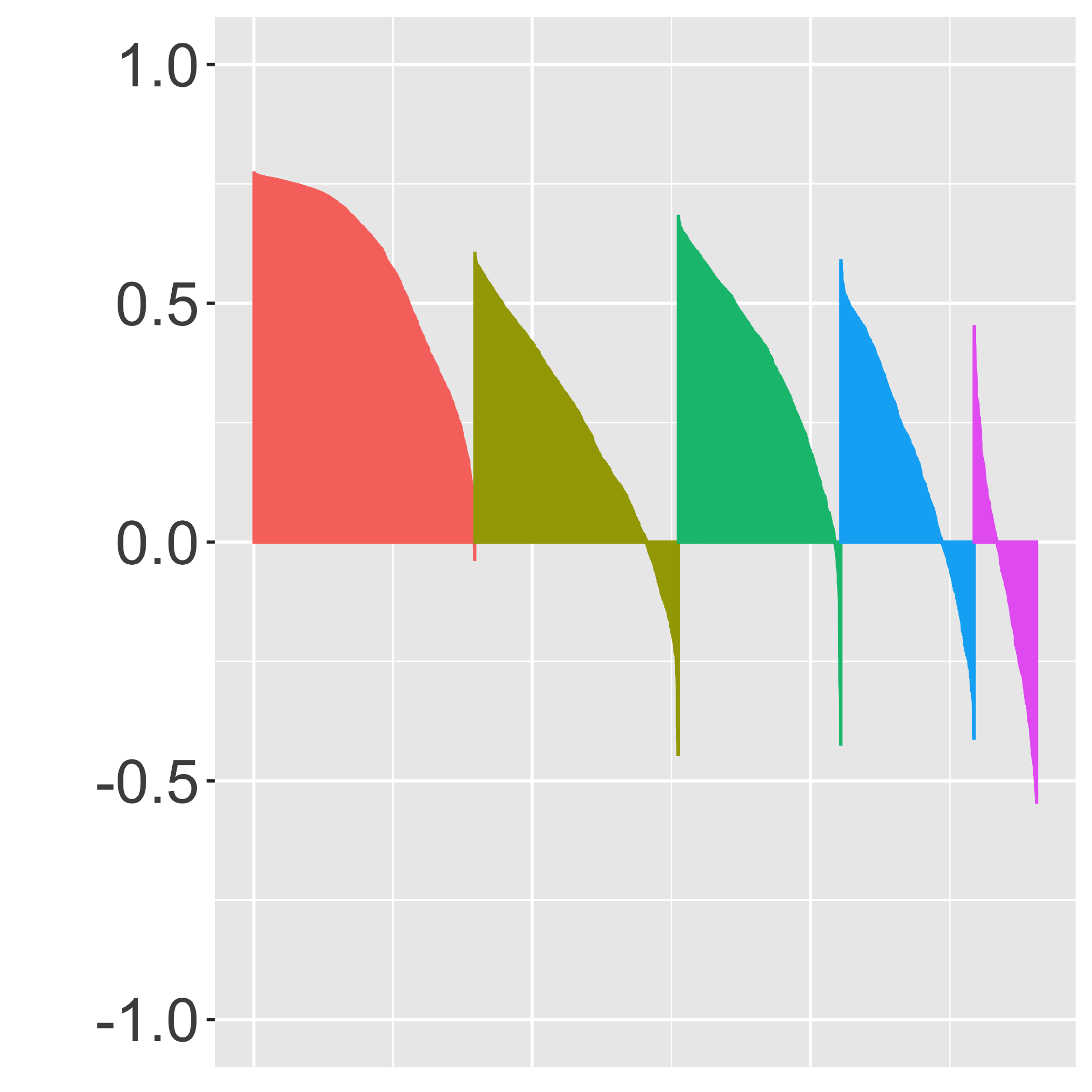}
\caption{First geometric approx. (cf. Section~\ref{sssection:geometric_approx})}
\end{subfigure}
\begin{subfigure}{.19\textwidth}
\centering
\includegraphics[width = 1\textwidth]{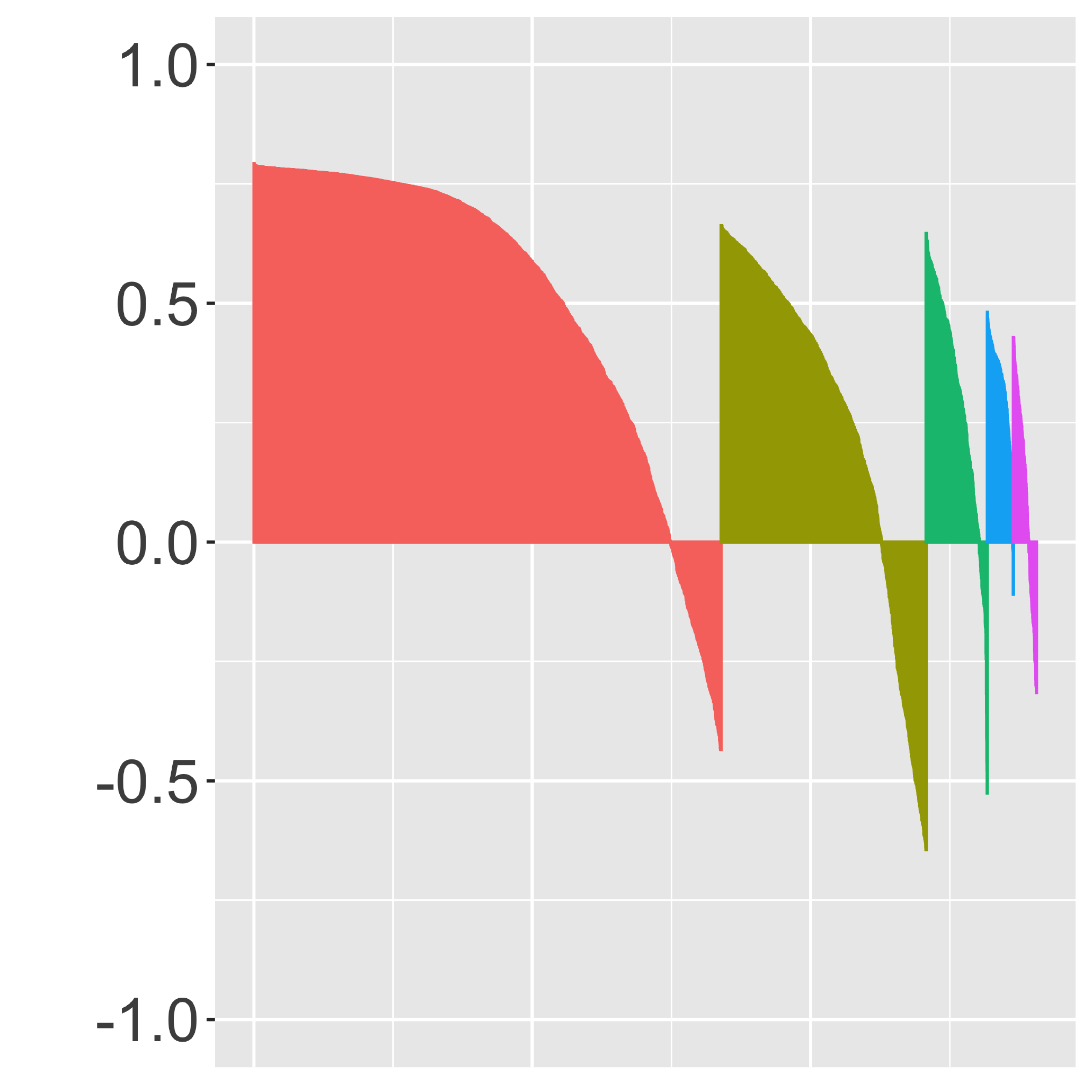}
\caption{Second geometric approx. (cf. Section~\ref{sssection:geometric_approx})}
\end{subfigure}
\begin{subfigure}{.19\textwidth}
\centering
\includegraphics[width = 1\textwidth]{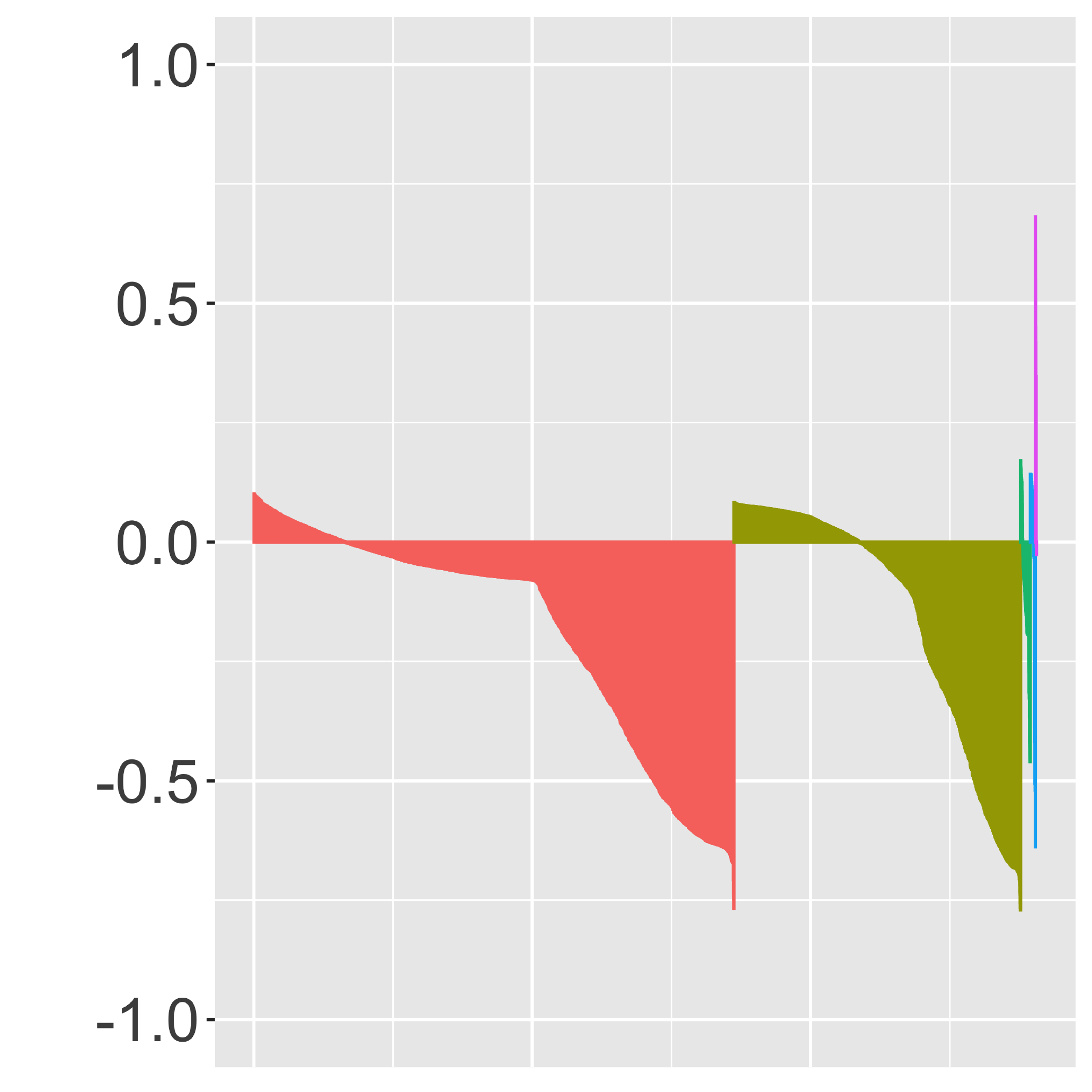}
\caption{Polynomial coefficients (cf. Section~\ref{ssection:clustering_quality})}
\end{subfigure}
\begin{subfigure}{.19\textwidth}
\centering
\includegraphics[width = 1\textwidth]{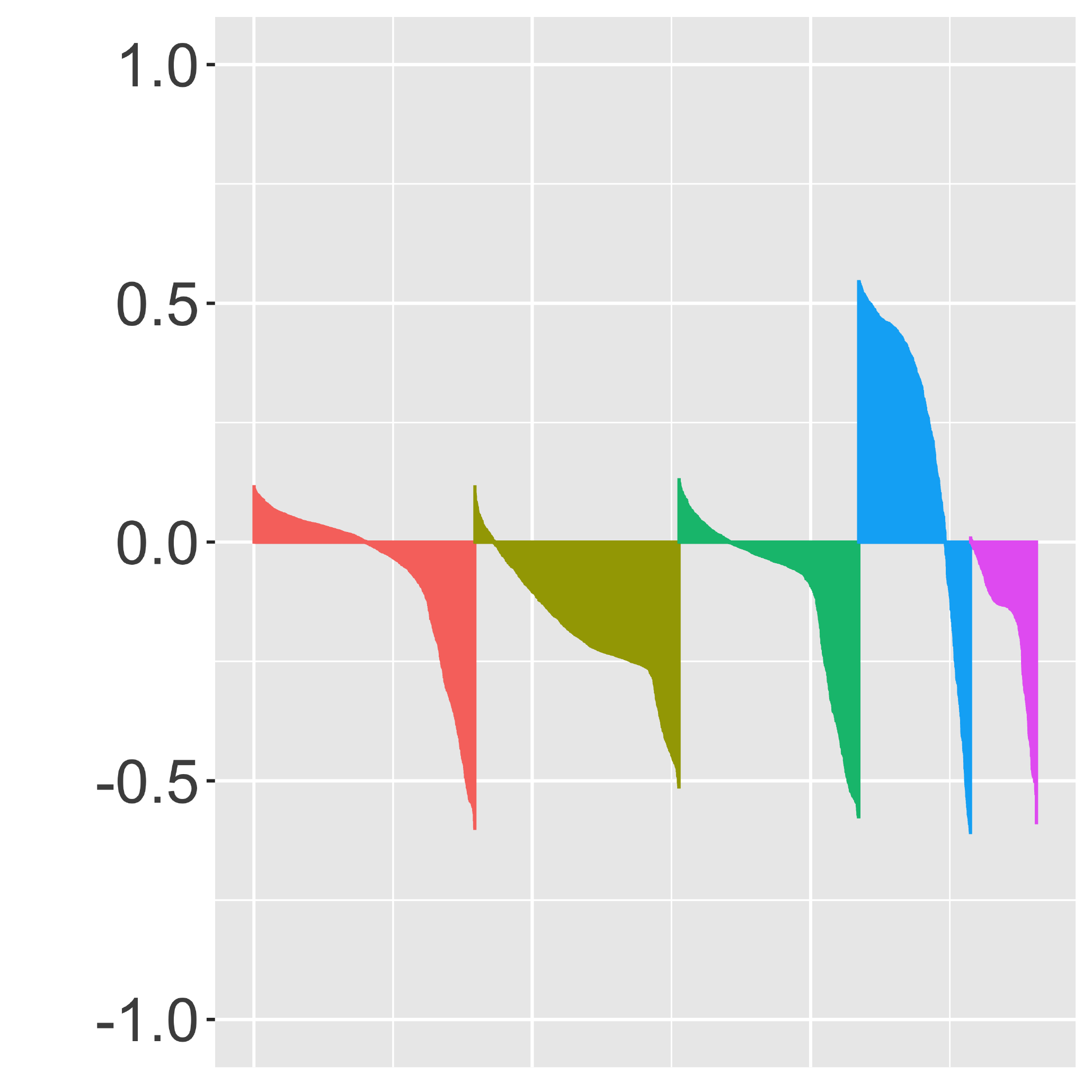}
\caption{DTW cost matrix (cf. \cite{Wang-Zhao-2017})}
\end{subfigure}\\
\caption{Silhouette plots for 5 clusters obtained under various approaches: 
}
\label{fig:two_step_silhouette_plots}

\centering
\resizebox{1\textwidth}{!}{%
\begin{tabular}{@{\extracolsep{5pt}} cccccc} 
\\[-1.8ex]\hline 
\hline \\[-1.8ex] 
 & \multirow{3}{*}{\shortstack[c]{\textbf{Total within}\\\textbf{Square Distance}}} & \multirow{3}{*}{\shortstack[c]{\textbf{Cluster 1}\\\textbf{Average within}\\\textbf{Square Distance}}} & \multirow{3}{*}{\shortstack[c]{\textbf{Cluster 5}\\\textbf{Average within}\\\textbf{Square Distance}}} & \multirow{3}{*}{\shortstack[c]{\textbf{Cluster 1}\\\textbf{Average between}\\\textbf{Square Distance}}} &  \multirow{3}{*}{\shortstack[c]{\textbf{Cluster 5}\\\textbf{Average between}\\\textbf{Square Distance}}}\\ 
 & & & & &\\
 & & & & &\\
\hline \\[-1.8ex] 
& & & & &\\
\textbf{MDS} & 10.68 & 1.74e-03 (1.53e-05) & 1.23e-02 (4.60e-04) & 1.55e-02 (4.57e-04) & 1.17e-01 (5.13e-03)\\
& & & & &\\
\textbf{First Geometric Approx.} & 13.32 & 9.50e-04 (2.96e-05) & 6.33e-03 (3.92e-04) & 5.01e-02 (4.97e-03)) & 1.12e-02 (2.92e-04)\\
& & & & &\\
\textbf{Second Geometric Approx.} & 274.27 & 3.97e-03 (1.42e-04) & 8.01e-02 (1.38e-03) & 1.86e-02 (8.43e-04) & 1.84e-02 (1.51e-03)\\
& & & & &\\
\textbf{Spline Coefficients} & 222.56 & 5.02e-02 (3.91e-03) & 9.40e-02 (1.05e-02) & 5.95e-02 (4.27e-03) & 2.05e-01 (2.33e-02)\\
& & & & &\\
\textbf{DTW Matrices} & 201.12 & 3.12e-02 (4.51e-03) & 9.00e-03 (3.41e-04) & 5.01e-02 (6.30e-03) & 9.38e-03 (3.79e-04)\\  
& & & & &\\
\hline \\[-1.8ex] 
\end{tabular} 
}
\captionof{table}{A table of the quantities from Eq. \eqref{eq:k_means_approx}, Eq. \eqref{eq:within_cluster_quality_stat_approx}, and Eq. \eqref{eq:between_cluster_quality_stat_approx}) for each method's cluster with the most interaction (Cluster 1) and cluster with the fewest (Cluster 5). Variance of these distances are included in parentheses. Note that the Procrustes distances were normalized so that the maximum distance between any interaction is 1.}
\label{table:two_step_cluster_quality_stats_approx} 

\end{figure*}

\subsection{Vehicle-to-vehicle (V2V) interaction data processing}

\label{Section:data}

We work with a real-world V2V interaction data set which is extracted from the naturalistic driving database generated by experiments conducted as part of the University of Michigan Safety Pilot Model Development (SPMD) program. In these experiments, dedicated short range communications (DSRC) technology was utilized for the communication between two vehicles. Approximately 3,500 equipped vehicles have collected data for more than 3 years. Latitude and longitude data of each vehicle was recorded by the by-wire speed sensor. The on-board sensor records data in 10Hz.

To investigate basic V2V interaction behaviors, a subset of 1400 driving scenarios was further filtered out from the SPMD's database. 
Each scenario consists of a time series of GPS locations and speeds of a pair of vehicles, which are mutually less than 100 metres apart. 
For our purposes, it is natural to posit that each scenario is inclusive of multiple shorter encounters through different time duration. Pre-processing of the data was therefore aimed at segmenting each scenario into more basic driving segments. These segments constitute basic building blocks from which we can meaningfully learn interaction primitives using a variety of clustering algorithms. 
The issue of segmentation is akin to identifying change points on functional curves embedded in a higher dimensional space.
We consider two different segmentation schemes for V2V interaction data processing.

The first segmentation scheme is detailed in Appendix \ref{Section: Spline}. It will be called a $\textit{two-step spline}$ approach, which goes as follows. Given an encounter, we fit it with cubic splines in two main steps. Here, the change points act as the knots. The first step involves identifying a large number of probable change points via a binary search approach to add change points if adding change points reduced the squared error between the fitted values and the observed data. The next step involves a single forward pass to remove excess change points from consideration in order to minimize the squared error with a penalty for the number of change points. We then segment each interaction at the knots. This segmentation technique created a set of 5622 basic V2V interactions to work with.

The second segmentation scheme is considerably more complex, as it is derived from a nonparametric Bayesian model for time series data, the sticky Hierarchical Dirichlet Process Hidden Markov Model (HDP-HMM)~\cite{Fox-etal-09}. This model extends the basic HMM by allowing the number of hidden Markov states to be unbounded, while encouraging the Markov process to be "sticky", that is, the state tends to be constant for a period of time (e.g., a car tends to go straight after a long period of time). 
For model selection, as we will elaborate later, one of the hyperparameters of sticky HDP-HMM is varied. Consequently, the number of basic V2V interactions varied from 8779 to 8829 with an average of 8799 interactions.

\begin{figure*}[!t]
\centering
\begin{minipage}{\textwidth}
\begin{subfigure}{.33\textwidth}
\centering
\includegraphics[width = 1\textwidth]{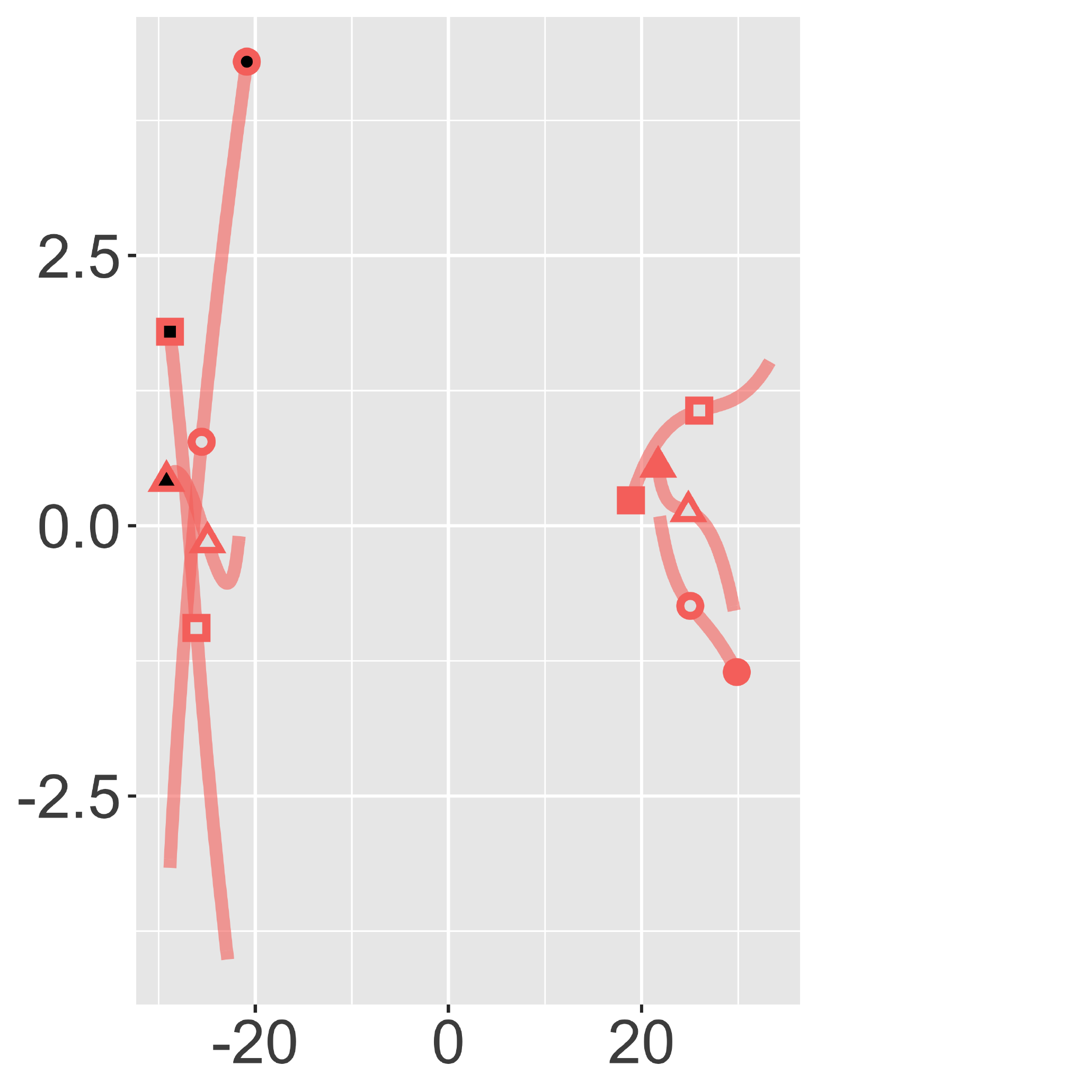}
\caption{Cluster 1}
\end{subfigure}
\begin{subfigure}{.33\textwidth}
\centering
\includegraphics[width = 1\textwidth]{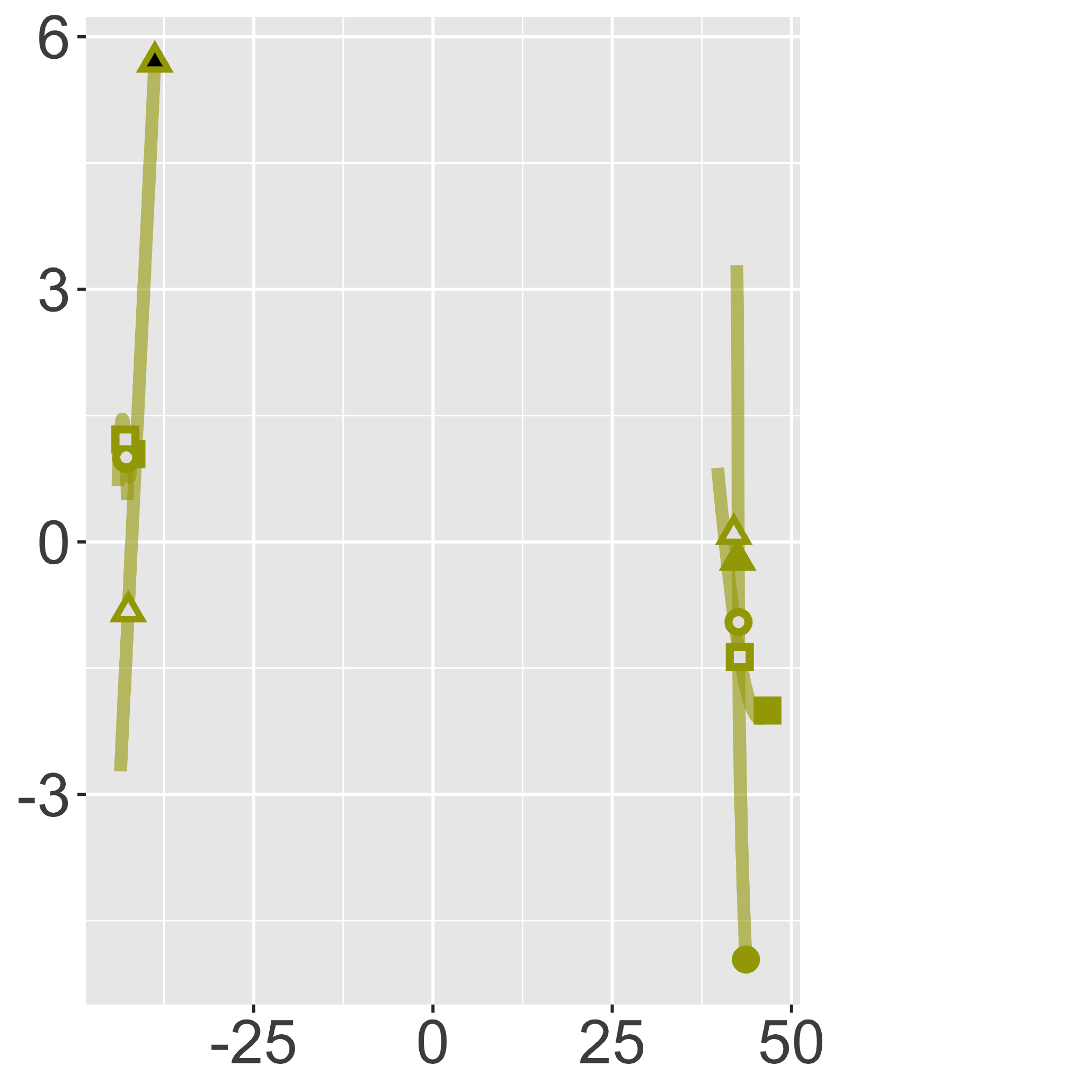}
\caption{Cluster 2}
\end{subfigure}
\begin{subfigure}{.33\textwidth}
\centering
\includegraphics[width = 1\textwidth]{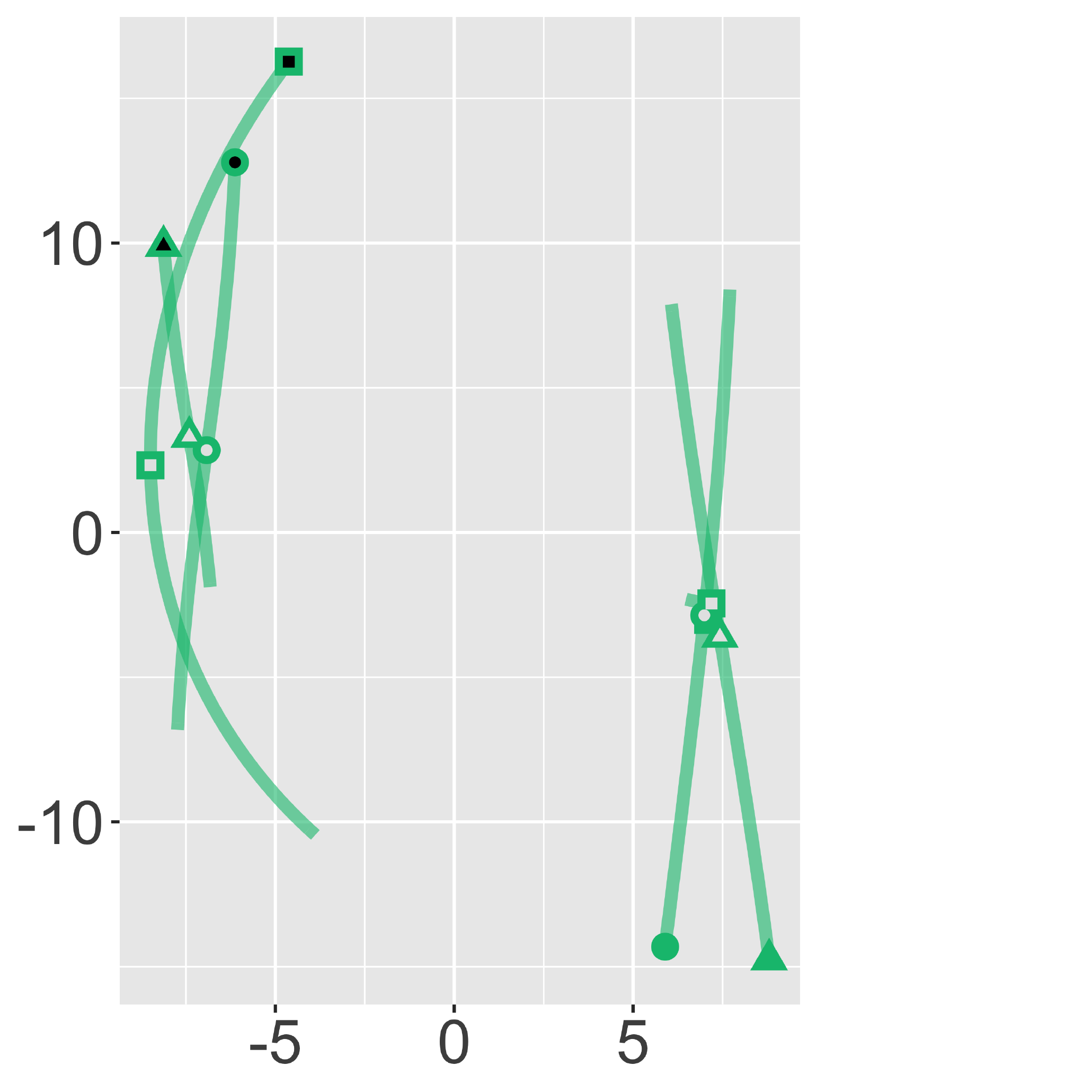}
\caption{Cluster 3}
\end{subfigure}
\end{minipage}

\begin{minipage}{\textwidth}
\begin{subfigure}{.33\textwidth}
\centering
\includegraphics[width = 1\textwidth]{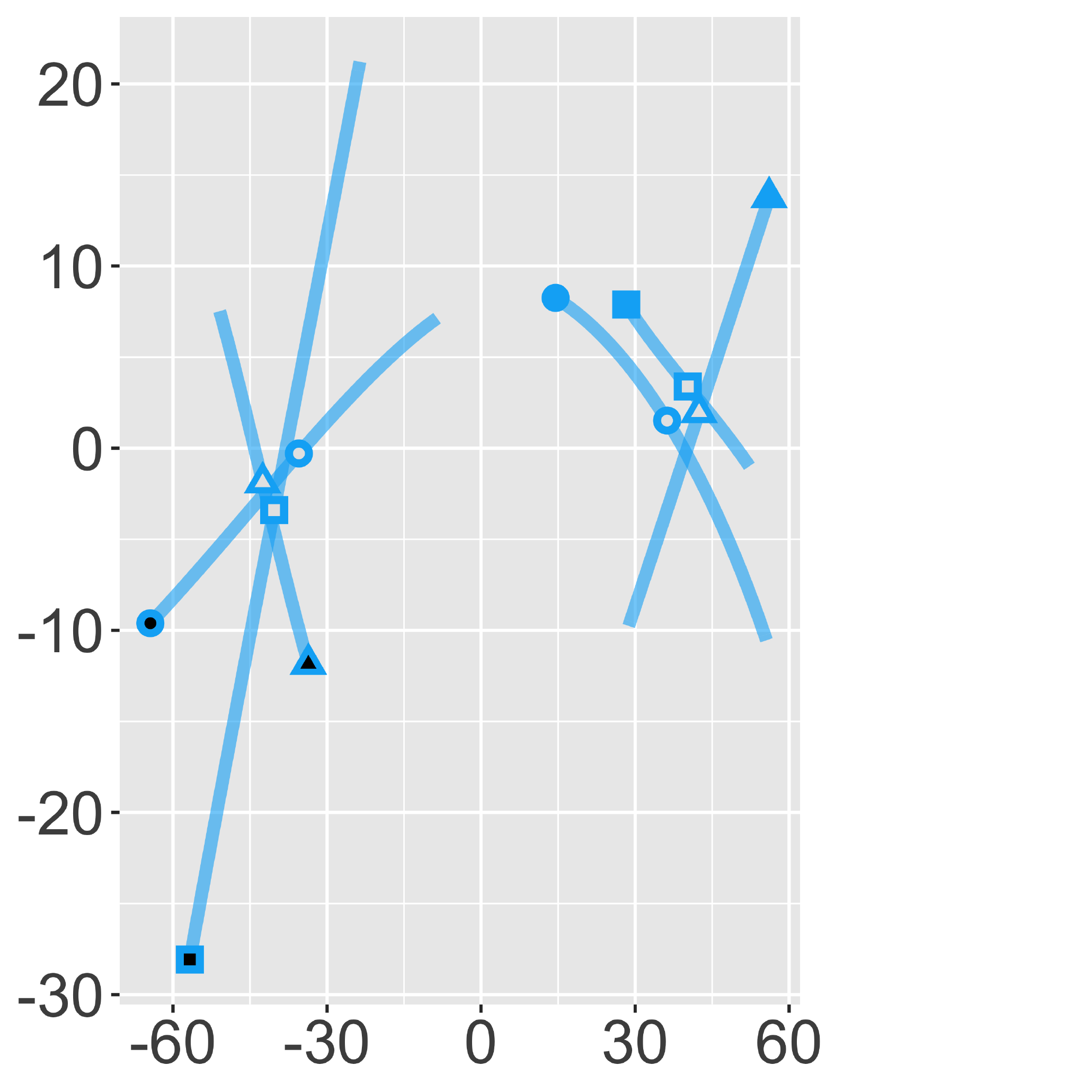}
\caption{Cluster 4}
\end{subfigure}
\begin{subfigure}{.33\textwidth}
\centering
\includegraphics[width = 1\textwidth]{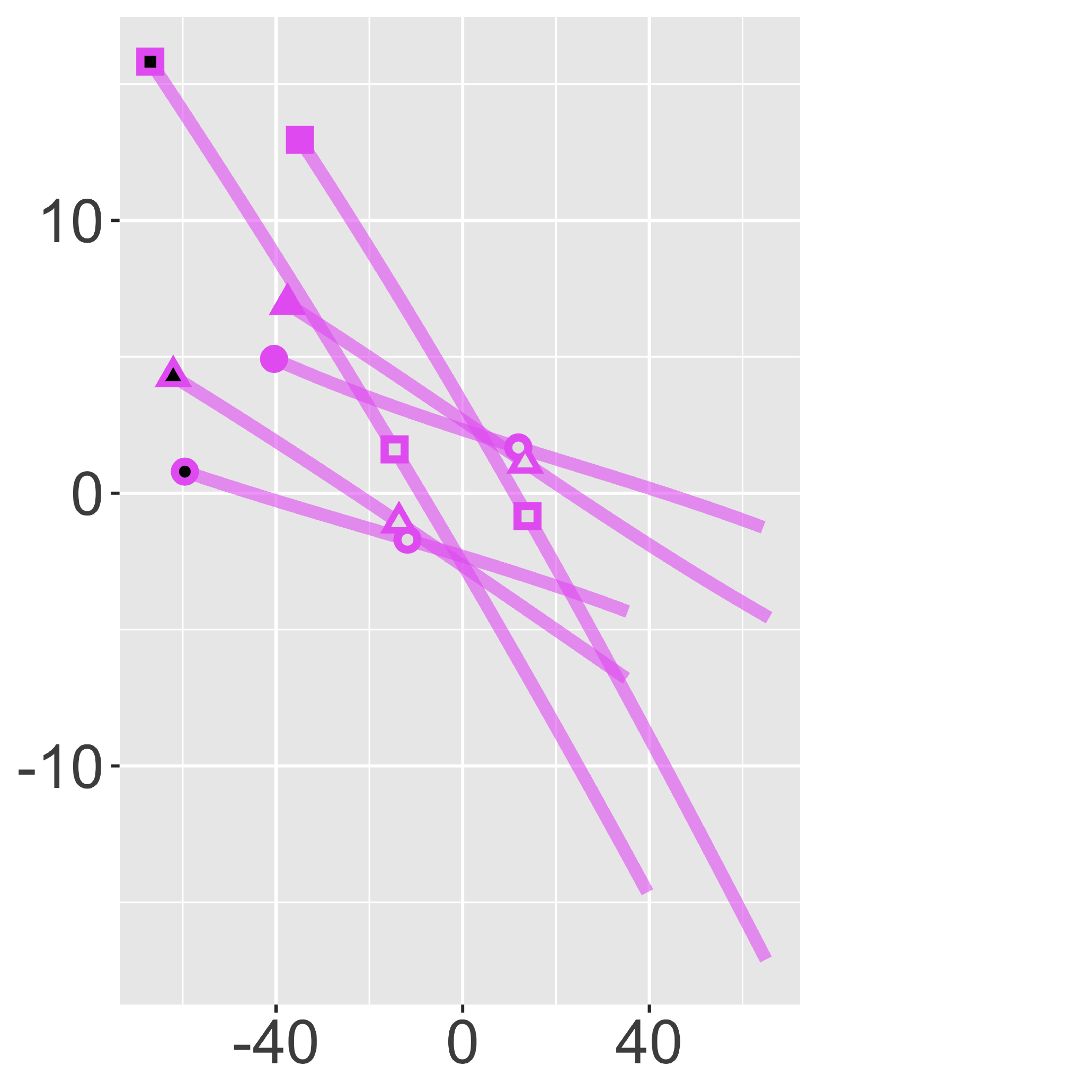}
\caption{Cluster 5}
\end{subfigure}
\begin{subfigure}{.33\textwidth}
\centering
\includegraphics[width = 1\textwidth]{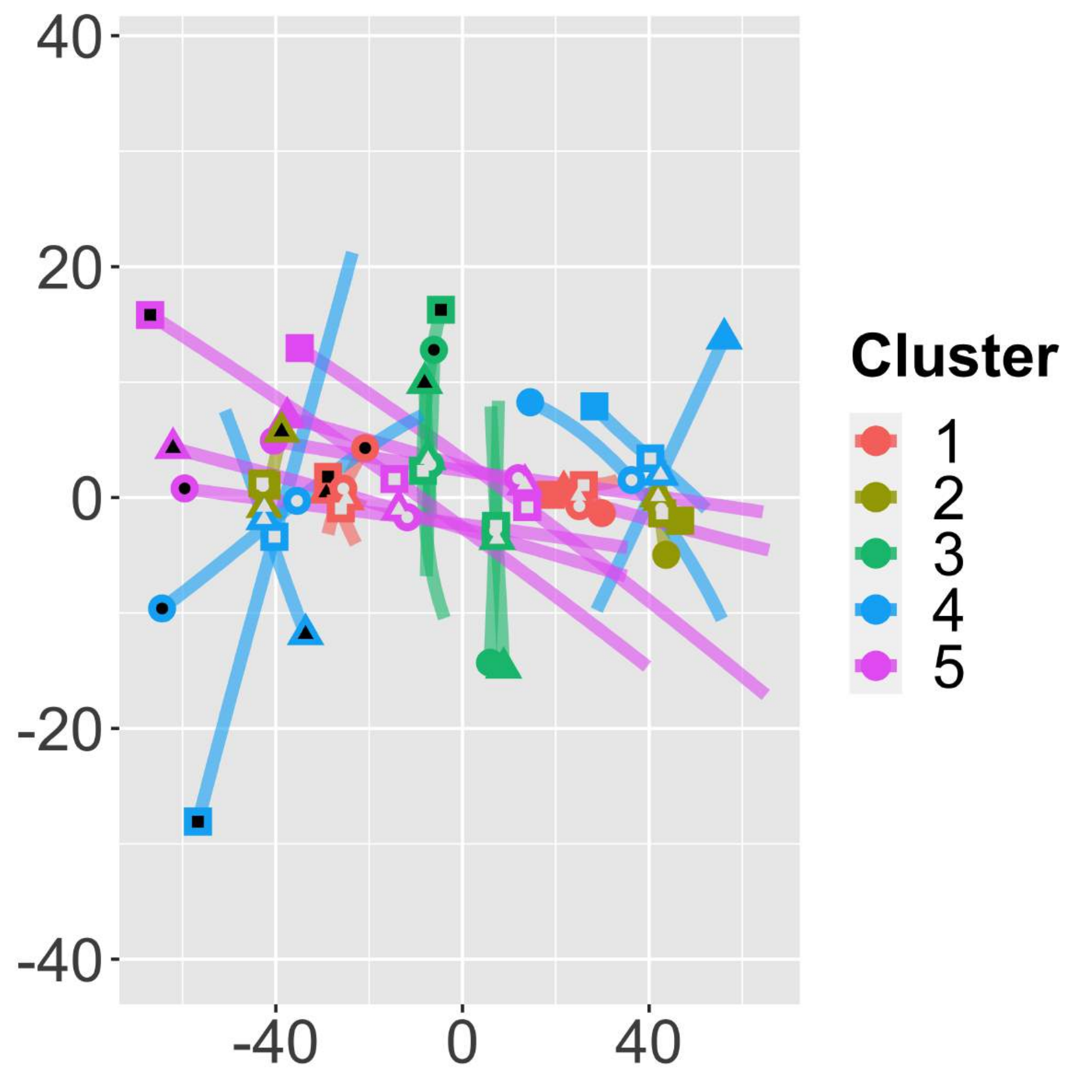}
\caption{All clusters}
\end{subfigure}
\end{minipage}
\caption{Plot of the three most typical interactions organized from the cluster with the most interaction to the cluster with the fewest for clustering using Multidimensional scaling (cf. Section \ref{ssection:approximation}). The interactions are centered and oriented using Algortihm \ref{algo: primitive reorienting} to $(t, 2t - 1, -t, 1 - 2t)$ for $t = 0, 0.01, \dots, 1$. The solid shapes and shapes with a black interior indicate the starting location of each interaction. The dot with the black interior indicates the second trajectory. Midpoints are indicated by dots filled in with a grey interior. Different shapes indicate different V2V interactions. Note that the individual cluster interactions plots are placed on their own scales.}
\label{fig:two_step_typical_encounter_plot_approx_mds}
\end{figure*}

\subsection{Cluster analysis of V2V interactions}
\label{ssection:clustering_quality}

\begin{figure*}[!t]
\centering
\begin{subfigure}{.22\textwidth}
\centering
\includegraphics[width = 1\textwidth]{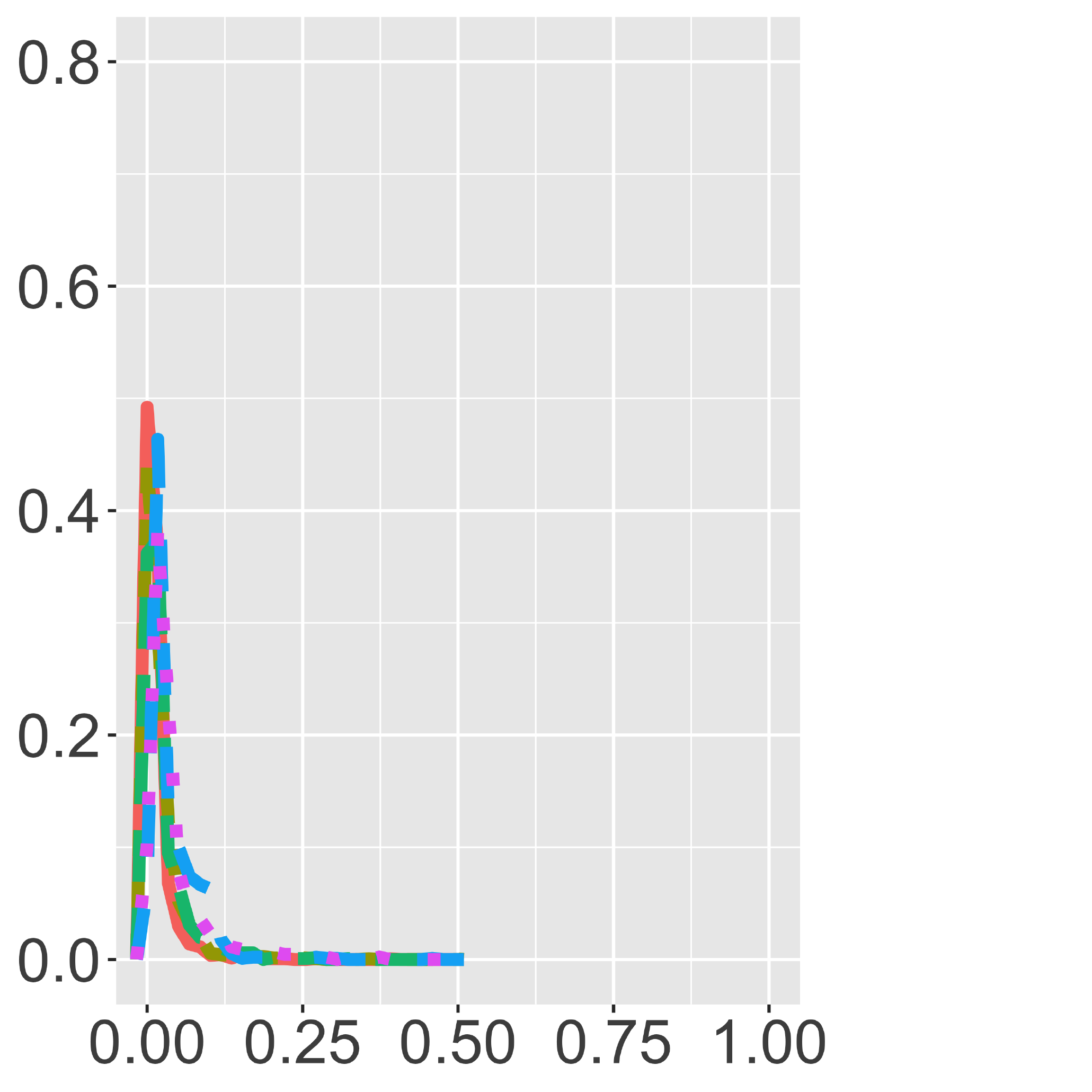}
\caption{In-cluster distances from mean interactions}
\end{subfigure}
\begin{subfigure}{.22\textwidth}
\centering
\includegraphics[width = 1\textwidth]{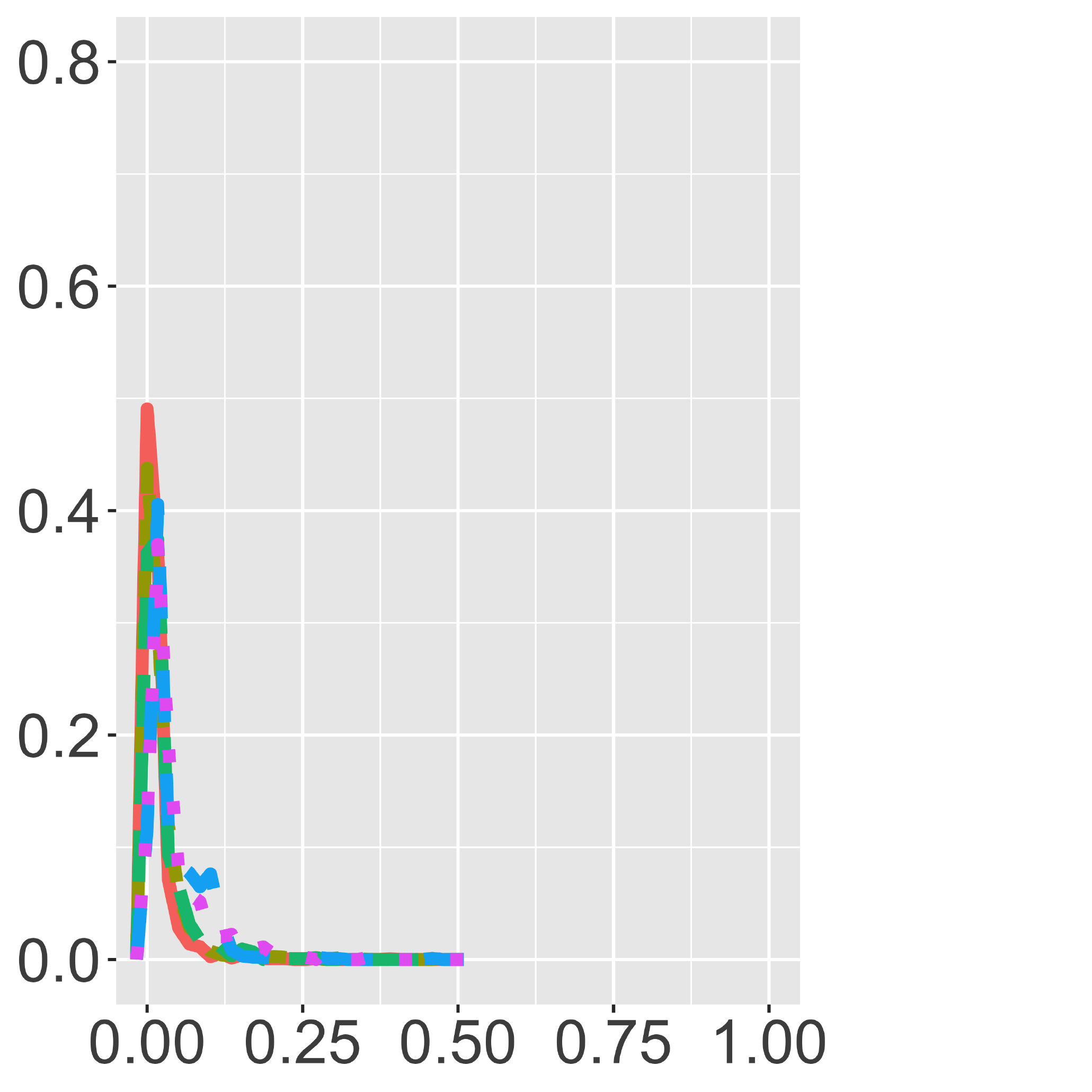}
\caption{In-cluster distances from typical interactions}
\end{subfigure}
\begin{subfigure}{.22\textwidth}
\centering
\includegraphics[width = 1\textwidth]{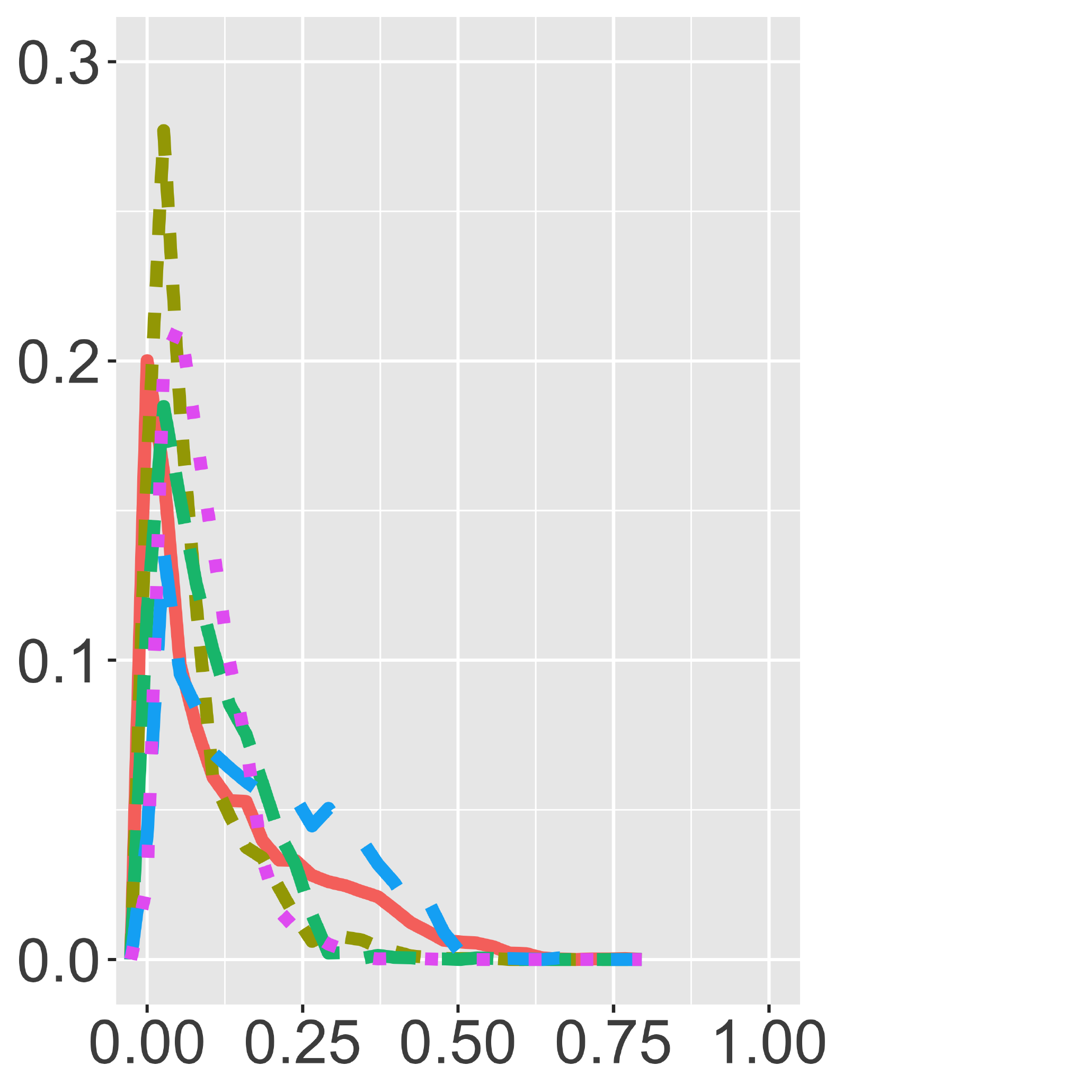}
\caption{All distances from mean interactions}
\end{subfigure}
\begin{subfigure}{.22\textwidth}
\centering
\includegraphics[width = 1\textwidth]{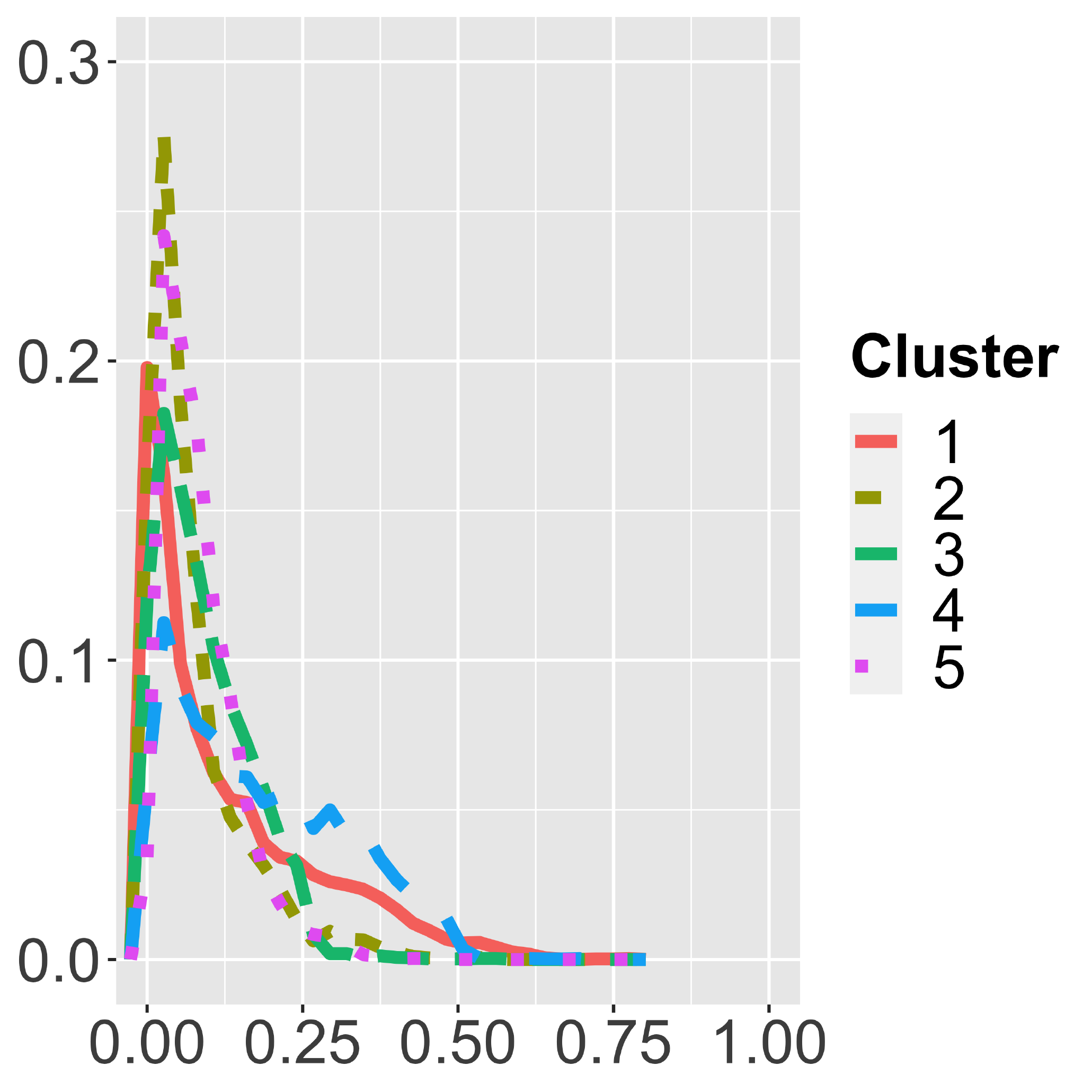}
\caption{All distances from typical interactions}
\end{subfigure}\\

\label{fig:first_geometric_approx_dist_freq_plots}

\centering
\begin{subfigure}{.22\textwidth}
\centering
\includegraphics[width = 1\textwidth]{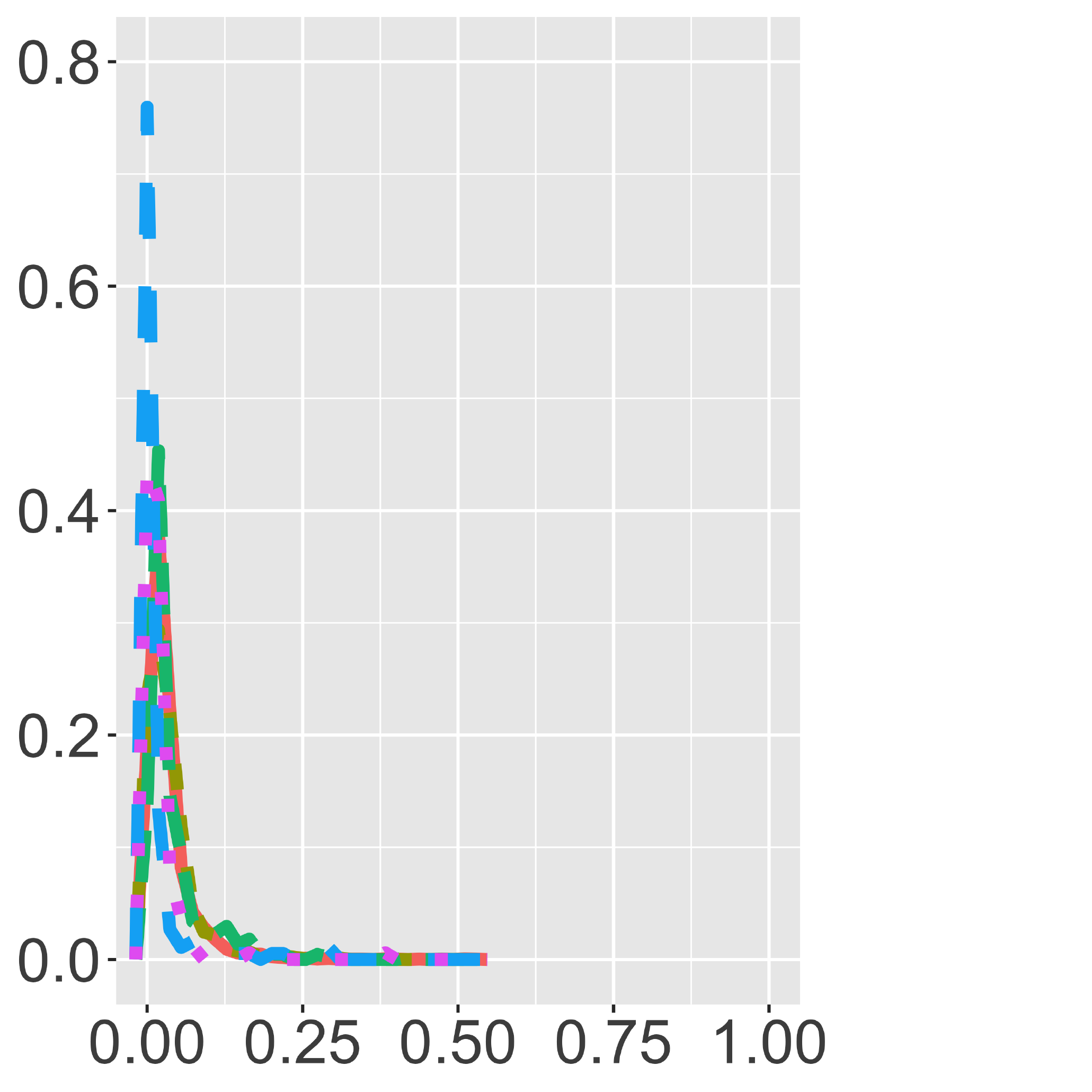}
\caption{In-cluster distances from the mean interactions}
\end{subfigure}
\begin{subfigure}{.22\textwidth}
\centering
\includegraphics[width = 1\textwidth]{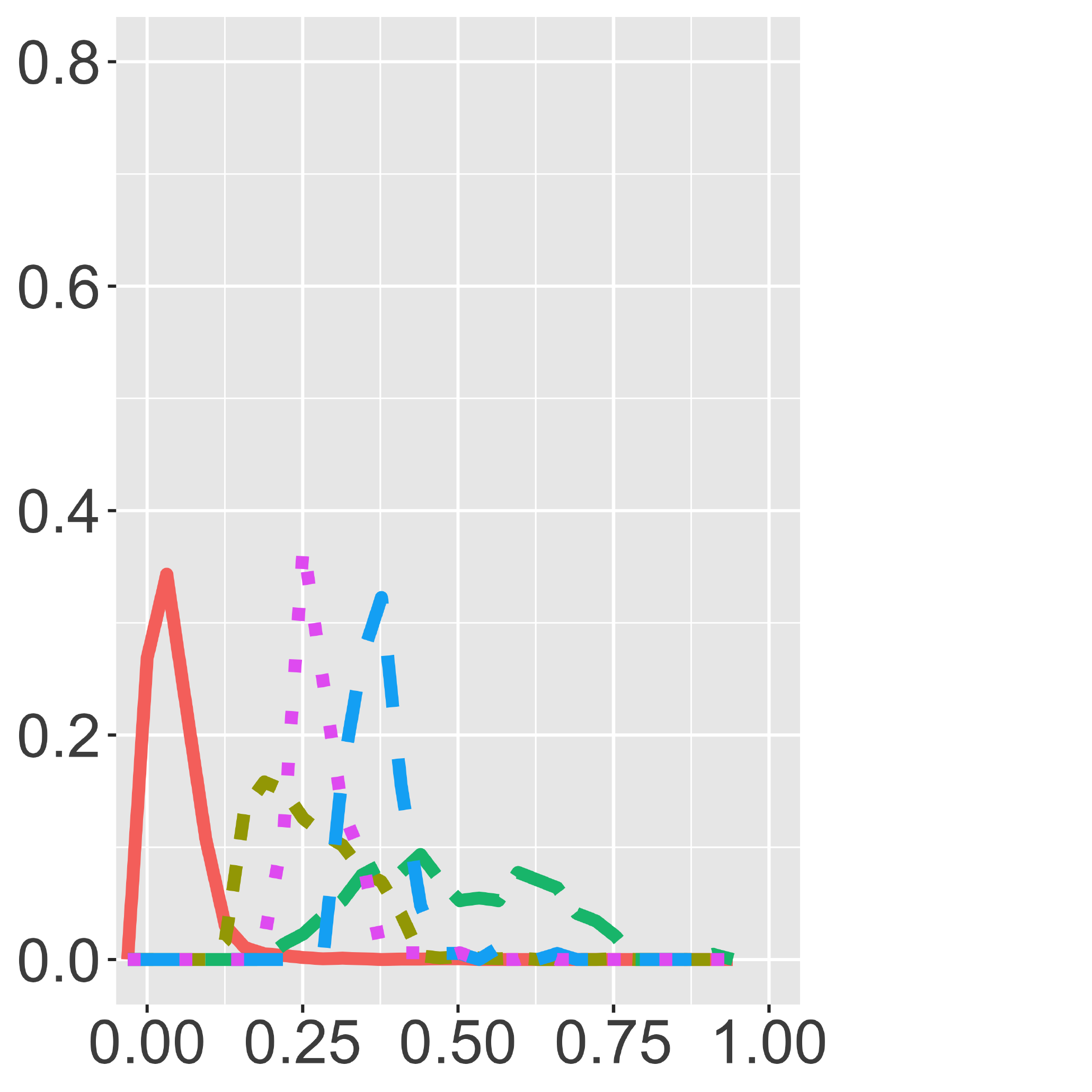}
\caption{In-cluster distances from the typical interactions}
\end{subfigure}
\begin{subfigure}{.22\textwidth}
\centering
\includegraphics[width = 1\textwidth]{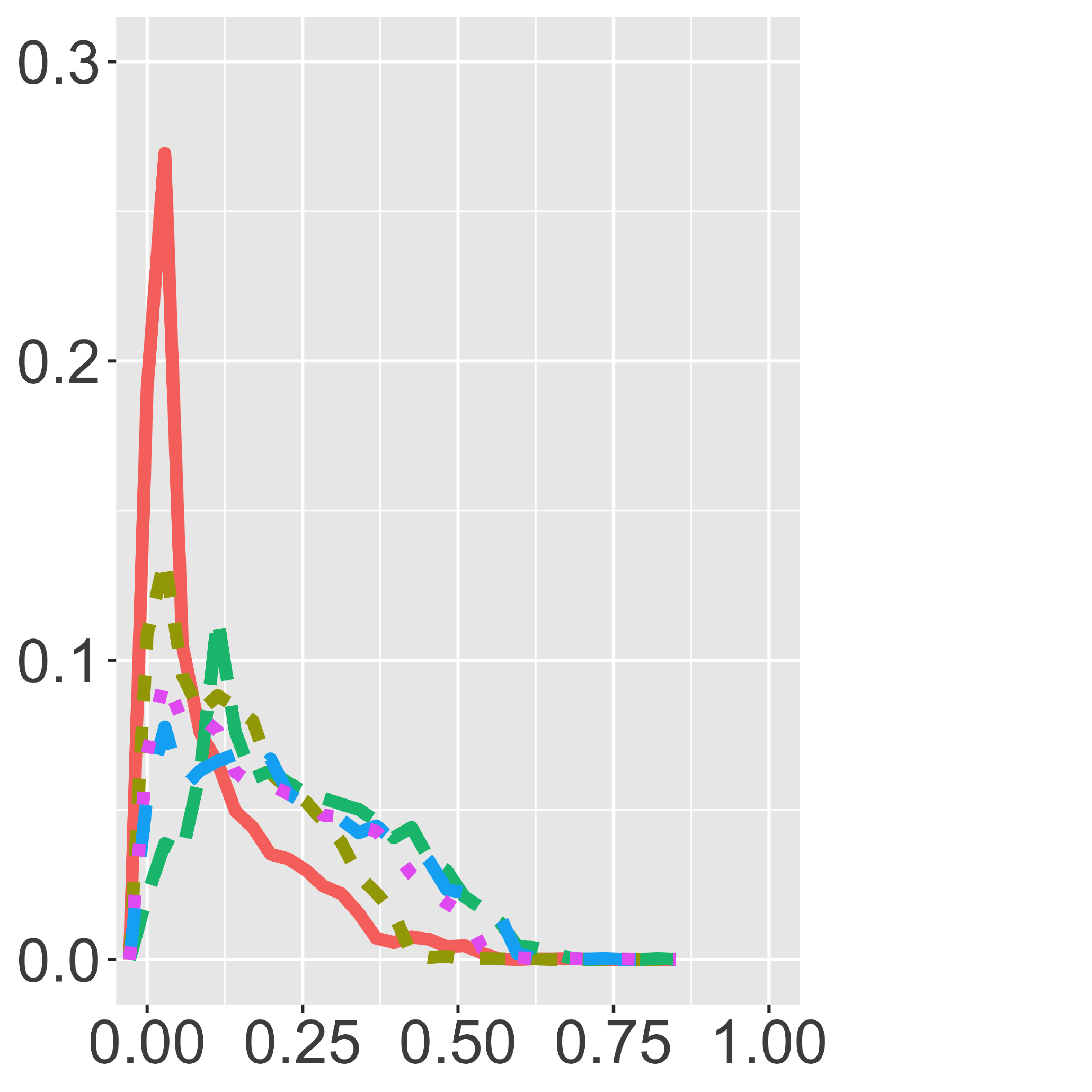}
\caption{All distances from mean interactions}
\end{subfigure}
\begin{subfigure}{.22\textwidth}
\centering
\includegraphics[width = 1\textwidth]{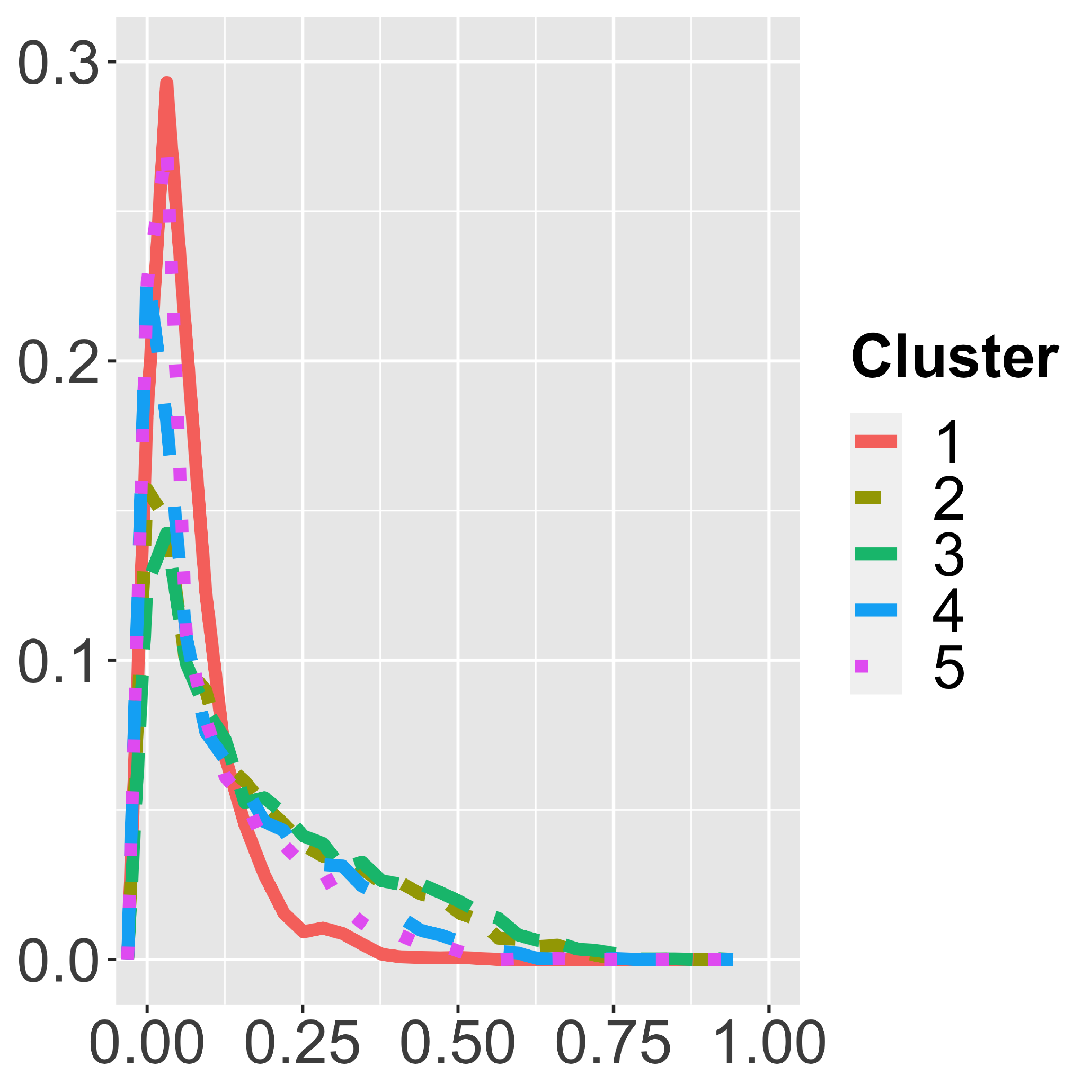}
\caption{All distances from the typical interactions}
\end{subfigure}\\

\label{fig:second_geometric_approx_dist_freq_plots}

\centering
\begin{subfigure}{.22\textwidth}
\centering
\includegraphics[width = 1\textwidth]{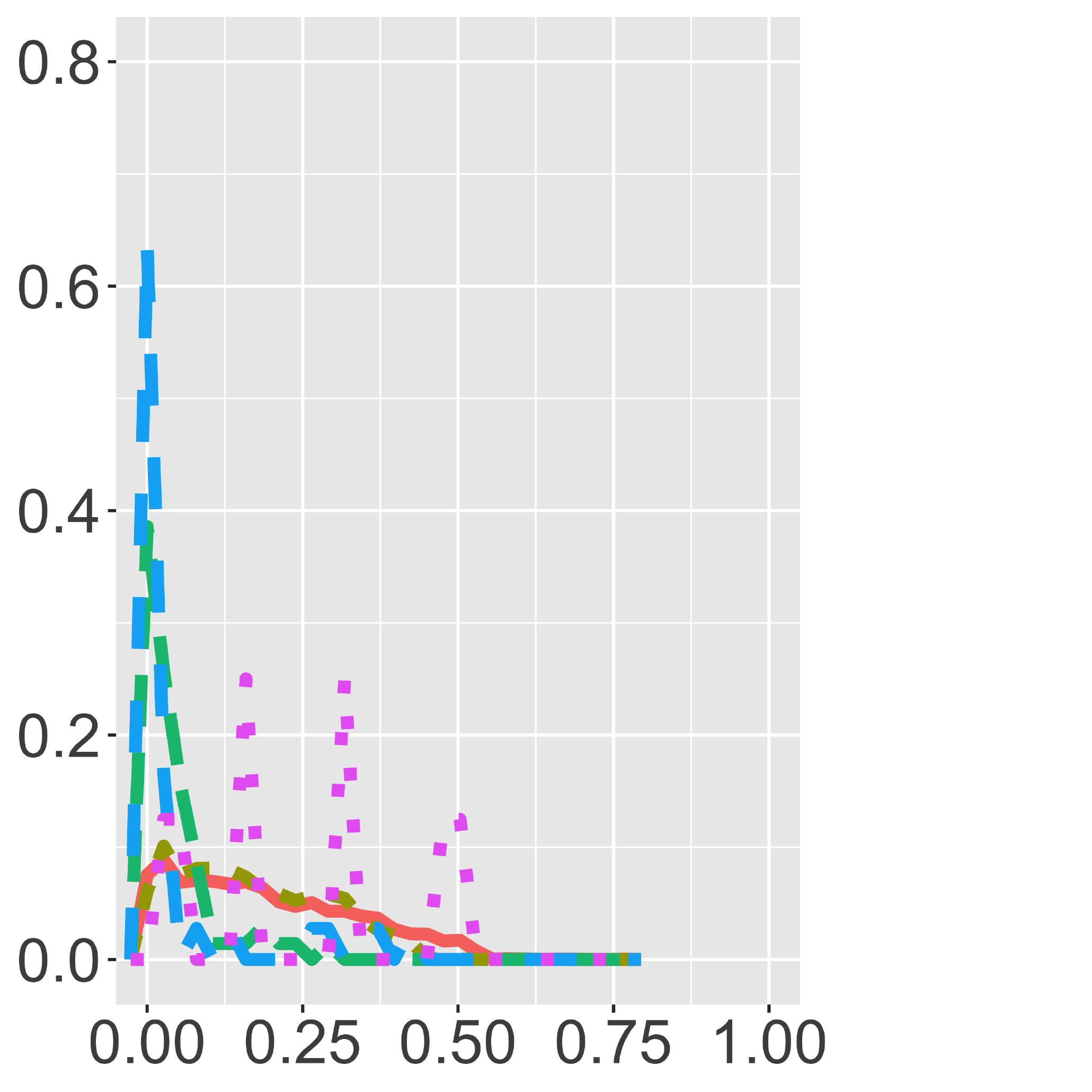}
\caption{In-cluster from mean interactions}
\end{subfigure}
\begin{subfigure}{.22\textwidth}
\centering
\includegraphics[width = 1\textwidth]{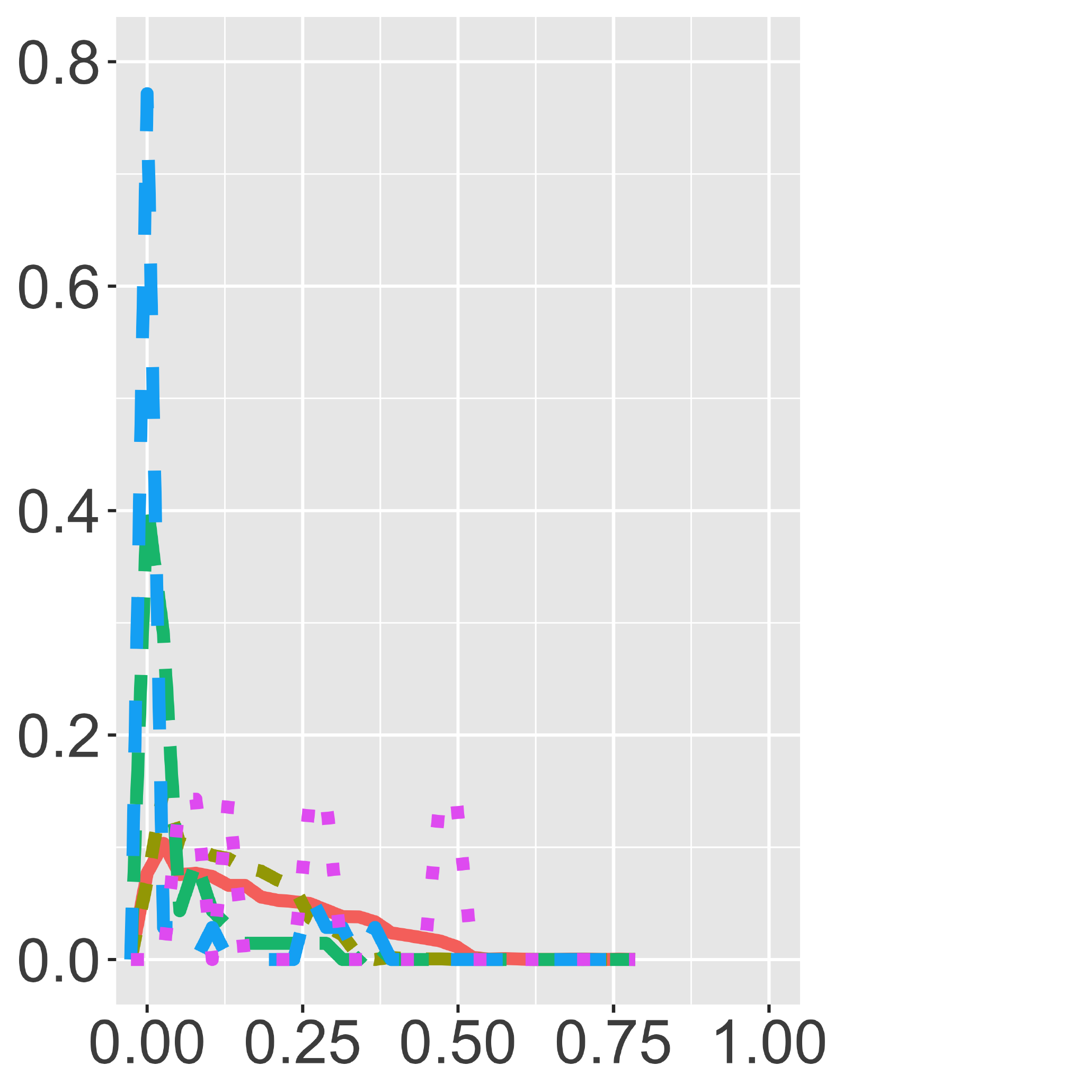}
\caption{In-cluster distances from typical interactions}
\end{subfigure}
\begin{subfigure}{.22\textwidth}
\centering
\includegraphics[width = 1\textwidth]{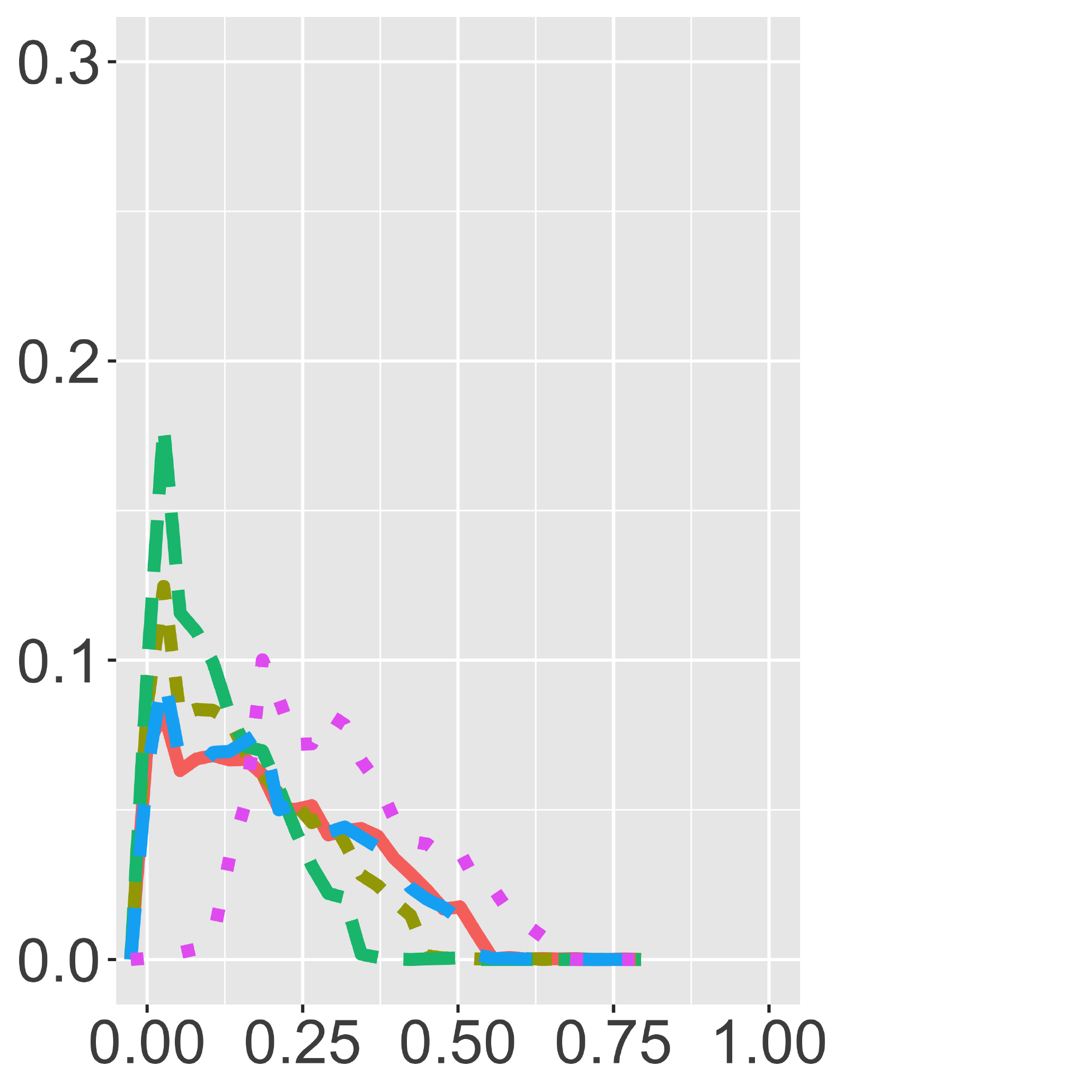}
\caption{All distances from mean interactions}
\end{subfigure}
\begin{subfigure}{.22\textwidth}
\centering
\includegraphics[width = 1\textwidth]{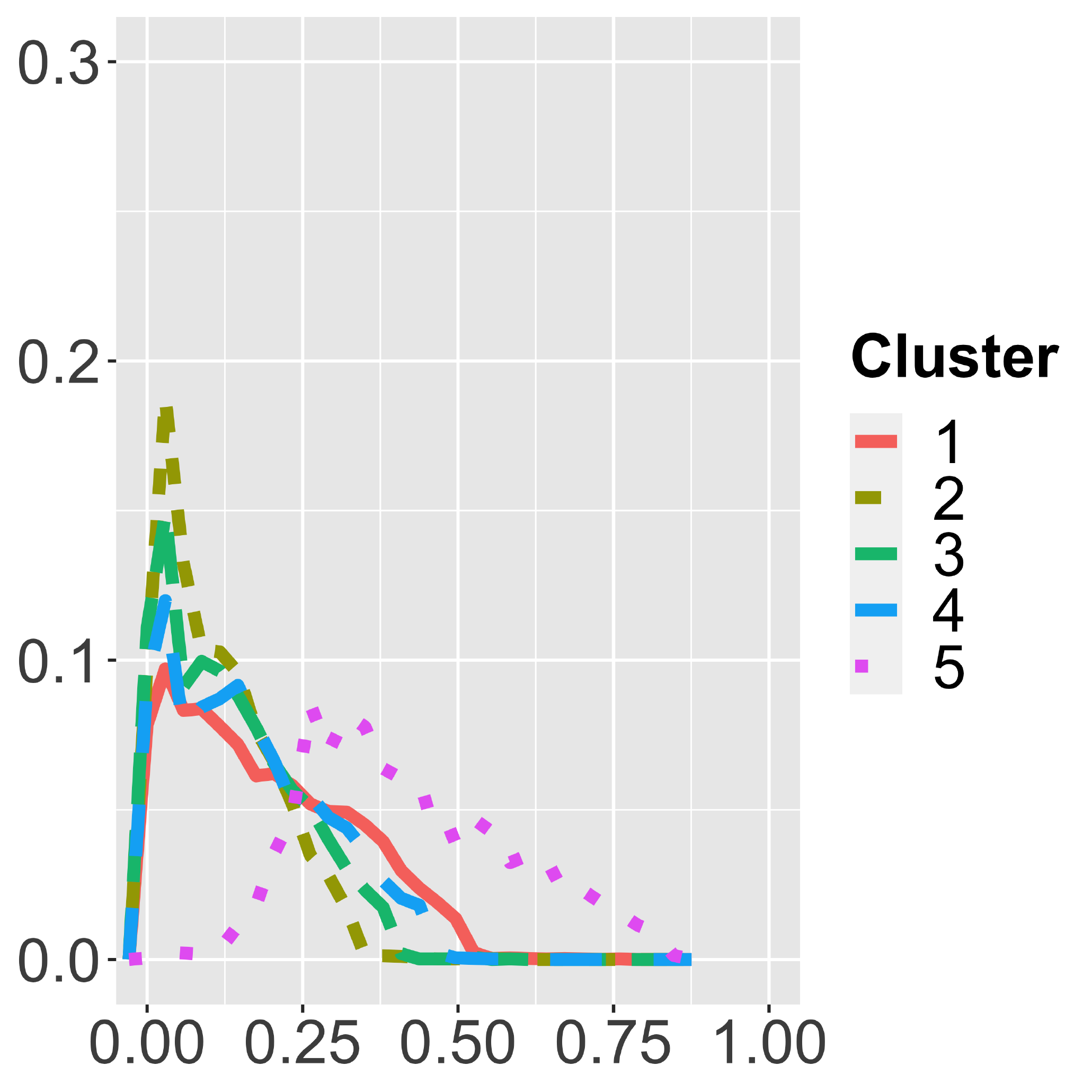}
\caption{All distances from typical interactions}
\end{subfigure}\\
\caption{Line plots showing the distribution (frequency) of interaction distance to either the cluster mean or the typical interaction. Clusters are obtained by the first geometric method in row 1, the second geometric method in row 2, and the cubic spline coefficients based method in row 3 (cf. Section~\ref{ssection:clustering_quality}). The clusters are numbered according to the number of interactions so that Cluster 1 has the most and Cluster 5 has the fewest. Note that the range for the y-axis are much larger on the left plots compared to the right plots.}
\label{fig:cluster_dist_freq_plots}
\end{figure*}

We evaluate the clustering of primitives qualitatively and quantitatively. For the former, silhouette plots are useful -- the silhouette, $s(i)$, for interaction $i$ is defined as following:
\[
s(i) = \frac{b(i) - a(i)}{\max(a(i), b(i))}.
\]
Here, $a(i)$ is the average Procrustes distance between interaction $i$ and all other interactions in the same cluster as interaction $i$, while $b(i)$ is the average Procrustes distance from interaction $i$ to those in another cluster. The cluster used for $b(i)$ is the one that minimizes this average distance. By definition, the silhouette ranges from -1 to 1. It will be close to 1 if $b(i)$ is significantly larger than $a(i)$ and -1 if $a(i)$ is significantly larger than $b(i)$. Thus, the quality of the clustering for interaction $i$ decreases as $s(i)$ decreases. Plotting the silhouettes for all interactions provides a qualitative way to determine how the clustering is performing because if most silhouettes are close to 1, the clustering is performing well. 

forming because if most silhouettes are close to 1, the clustering is performing well. 

For a more quantitative way to examine the clustering, we look at the quantity in Eq. \eqref{eq:k_means} and Eq. \eqref{eq:k_means_approx}. Naturally, the method that reduces that quantity the most should be selected. We can also break down that quantity further by the contribution of each cluster. Specifically, suppose $z_i$ indicates the cluster membership. If $(\Gamma_{j1}, \Gamma_{j2})$ is the cluster's mean, then we report the following:
\begin{align}
  \frac{1}{\countV{z_i = k}} \sum_{i:z_i = k} d^2([ (f_{i1},f_{i2})],[(\Gamma_{j1}, \Gamma_{j2})]).
  \label{eq:within_cluster_quality_stat}
\end{align} 
Alternatively, if interaction $j$ minimizes $d^2 ([ (f_{i1},f_{i2})],[ (g_{j1},g_{j2})])$ for all $z_i, z_j = k$, the approximate version is the following:
\begin{align}
  \frac{1}{\countV{z_i = k}} \sum_{i:z_i = k} d^2([ (f_{i1},f_{i2})],[(g_{j1},g_{j2})]).
  \label{eq:within_cluster_quality_stat_approx}
\end{align}
To compare clusters with different number of interactions, we choose to divide it by the size of the cluster. Finally, like the silhouette, it might also be helpful to compare this against the average square Procrustes distance of one cluster's mean V2V interaction and the interactions of all other clusters. In other words, we report the following if the cluster's mean interactions are recoverable:
\begin{align}
  \frac{1}{\countV{z_i \neq k}} \sum_{i:z_i \neq k} d^2([ (f_{i1},f_{i2})],[(\Gamma_{j1}, \Gamma_{j2})]).
  \label{eq:between_cluster_quality_stat}
\end{align}
Again, we can report the approximate version instead:
\begin{align}
  \frac{1}{\countV{z_i \neq k}} \sum_{i:z_i \neq k} d^2([ (f_{i1},f_{i2})],[(g_{j1},g_{j2})]).
  \label{eq:between_cluster_quality_stat_approx}
\end{align}
Note that the silhouette is more stringent because for the silhouette, we average only the distance from interactions of the nearest cluster for an observation.

We then made these silhouette plots and calculated the quantity for five different methods. First, we wanted to evaluate how well the three clustering approaches, namely the Multidimensional Scaling approximation and the first and second geometric approximations of the Procrustes distance, introduced in Section~\ref{Section: Distribution_primitives}, performed. Next, because we segmented encounters using splines, we wanted to examine the quality of k-means clustering based on the coefficients of the cubic splines fitted to these interactions. In other words, suppose for interaction $i$, we fit the cubic spline $c^1_{i10} + c^1_{i11}t + c^1_{i12}t^2 + c^1_{i13}t^3$ to $(g_{i1})_1$. We do the same for $(g_{i2})_2$, $(g_{i2})_1$, and $(g_{i2})_2$. Then, we perform k-means on the vectors, $\{\{c^1_{i1\ell}\}_{\ell = 0}^3, \{c^2_{i1\ell}\}_{\ell = 0}^3, \{c^1_{i2\ell}\}_{\ell = 0}^3, \{c^2_{i2\ell}\}_{\ell = 0}^3\}_{i = 1}^n$. We call this approach $\textit{spline coefficient clustering}$. Finally, dynamic time warping (DTW) is a standard approach to match curves -- in Wang et. al \cite{Wang-Zhao-2017}, k-means clustering is performed on the DTW matrices that match one trajectory to another for each V2V interaction. This is another approach we wish to evaluate. 

We focus on reporting for the case $k = 5$ for the moment, while the analysis can be replicated on other choices of $k$. The results for encounters segmented by the two step approach can be seen in Figure \ref{fig:two_step_silhouette_plots} and Table \ref{table:two_step_cluster_quality_stats_approx}.
Accordingly, the MDS approach outlined in Algorithm \ref{algo: primitive clustering} appears to perform the best whereas the spline coefficients and DTW matrices perform the worst. The total within square distance from Eq. \eqref{eq:k_means_approx} for the MDS approach in Table \ref{table:two_step_cluster_quality_stats_approx} is smallest and the silhouette plots in Figure \ref{fig:two_step_silhouette_plots} look reasonable. Indeed, even though the first geometric approximation's average within square distance for the clusters with most and fewest interactions is smaller and the average between square distance is comparable or larger, the silhouette plot shows us that the MDS approach does significantly better with the cluster with the second and third largest cluster. The silhouette values are much higher for that cluster than the first geometric approximation.

For interpretability, one may be interested in visualizing typical interactions from each clusters. Take the MDS method. There are various interesting observations in Fig. \ref{fig:two_step_typical_encounter_plot_approx_mds}. For instance, the two clusters with most interactions are interactions in which the vehicles do not move far from each other. On the other hand, the other clusters have interactions in which the opposite is true. Further, while only the cluster with the fewest interactions have vehicles going in the same direction, there are variation in how the vehicles are moving in opposite directions. Figure \ref{fig:two_step_typical_encounter_plot_approx} in the Appendix shows the three most typical encounters for all clustering methods.

One can look more deeply into the distribution of interactions in each cluster, which is revealed by Figure \ref{fig:cluster_dist_freq_plots}. The left two plots show the proportion of interactions in each cluster a certain distance away from the mean or typical interaction for each cluster. On the other hand, the right two plots show the proportion of interactions from the entire data set a certain distance away for each cluster's mean or typical interaction. The first geometric approximation plot is ideal. While the other methods have a cluster that peaks higher near zero, the left two plots show higher peaks near zero across all clusters, indicating that most interactions in the cluster are close to the typical or mean interaction. On the other hand, the right two plots show a peak near zero and then plateau for a bit before decreasing to zero. This supports what we see in the silhouette plot. The plateau demonstrates that the clusters are well separated because interactions outside the cluster are further away. Because of the left two plots, the peak near zero likely comes from the interactions assigned to that cluster. It is likely that the plots for the MDS will look similar to the first geometric approximation plots. For the second geometric approximation, the interaction plots for the mean interaction are ideal. However, outside of the largest cluster, the typical interaction to cluster interaction plots peak at values not near zero or plateau for ranges of distances. 
Meanwhile, the plots for polynomial coefficient exhibit peculiar peaks or plateaus in the left plots. These peaks are slightly dampened when using the typical interaction in place of the mean interaction. 

\subsection{Stability Evaluation}
\label{ssection:clustering_stability}

\begin{figure*}[!tbp]
\centering
\begin{subfigure}{.22\textwidth}
\centering
\includegraphics[width = 1\textwidth]{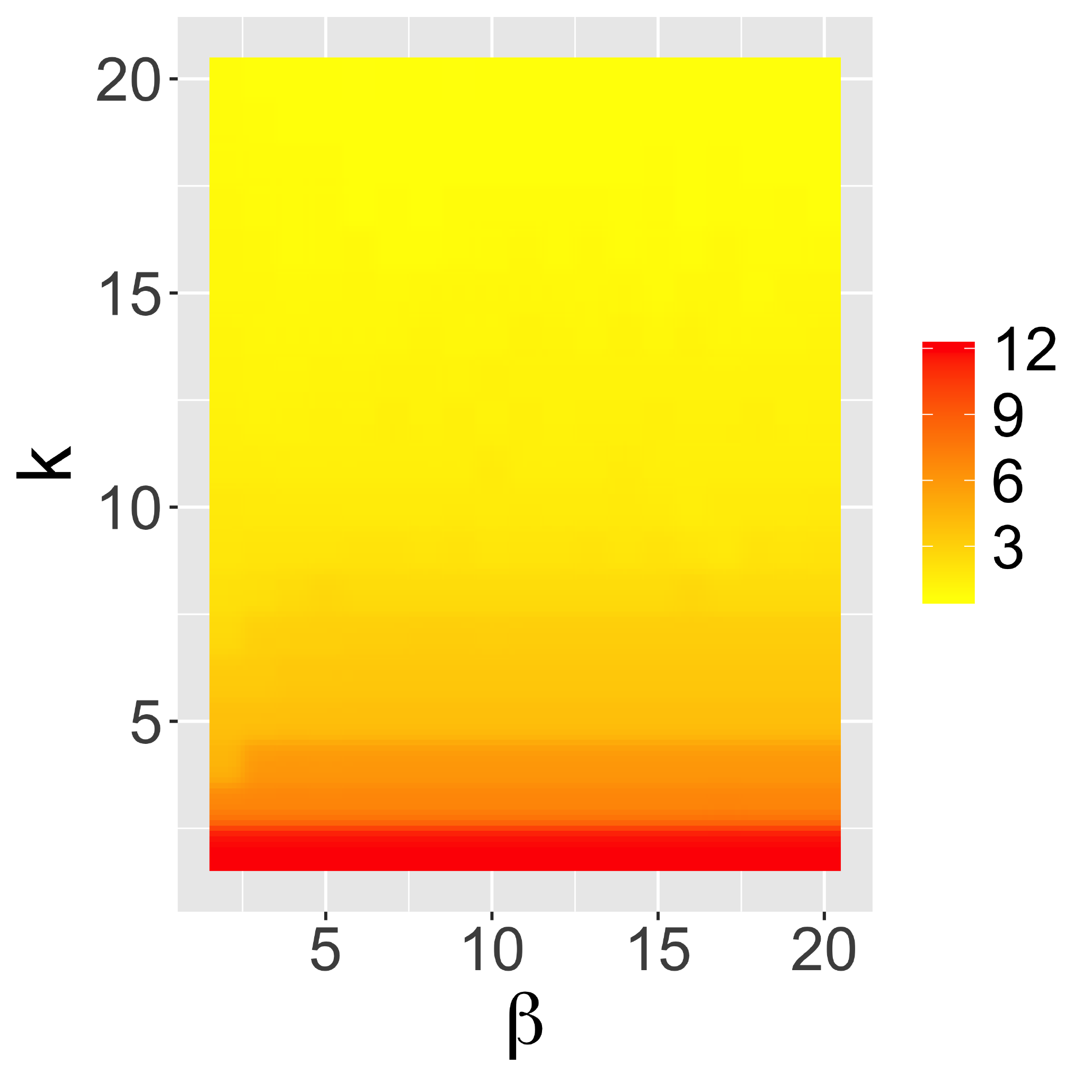}
\caption{All $k$ and $\beta$ between 2 and 20}
\end{subfigure}
\begin{subfigure}{.22\textwidth}
\centering
\includegraphics[width = 1\textwidth]{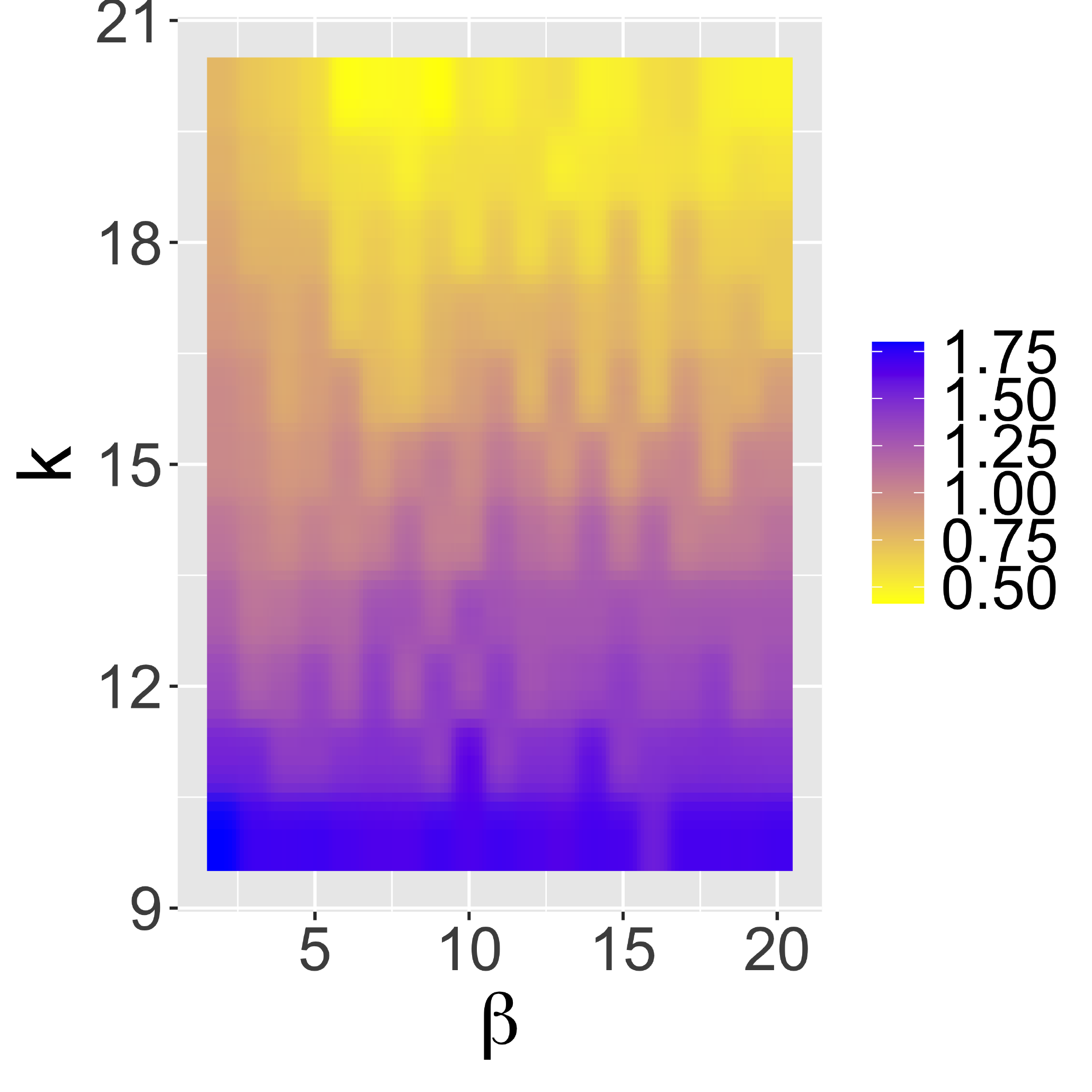}
\caption{All $\beta$ between $10$ and $20$ and $k$ between $10$ and $20$}
\end{subfigure}
\centering
\begin{subfigure}{.22\textwidth}
\centering
\includegraphics[width = 1\textwidth]{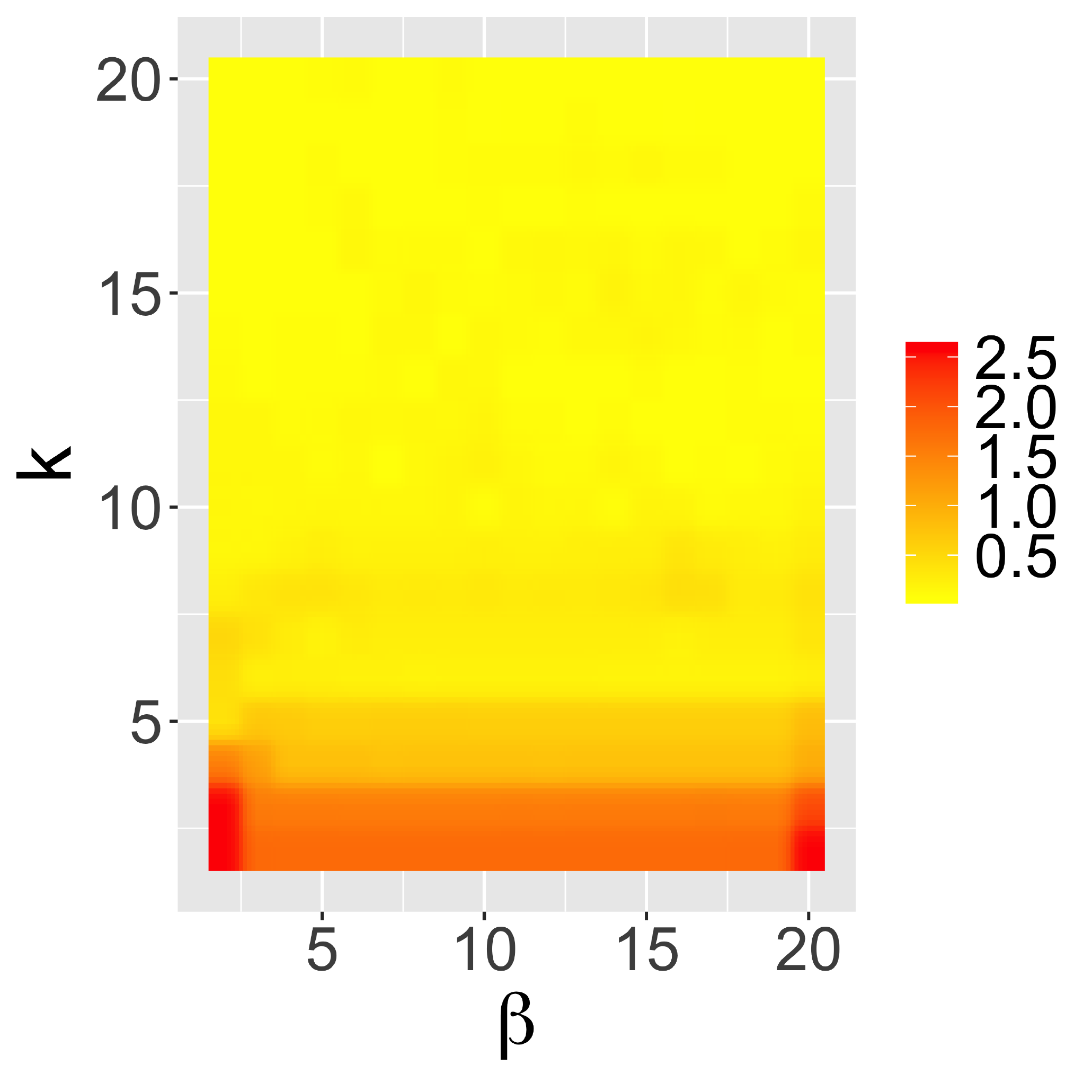}
\caption{All $\beta$ between 2 and 20 and $k$ between 10 and 20}
\end{subfigure}
\begin{subfigure}{.22\textwidth}
\centering
\includegraphics[width = 1\textwidth]{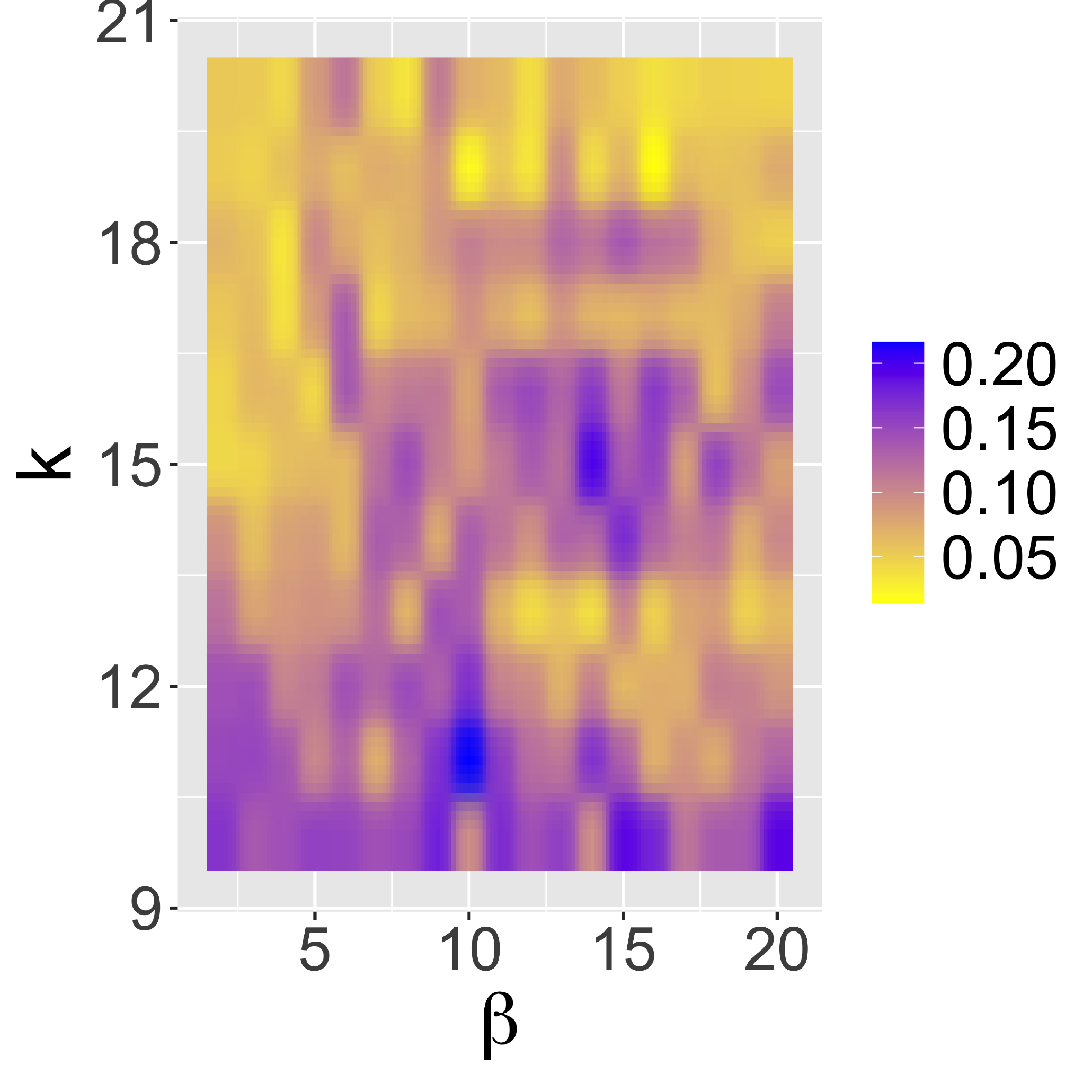}
\caption{All $\beta$ between $2$ and $20$ and $k$ between $10$ and $20$}
\end{subfigure}
\caption{The left two plots show heatmaps of the statistic introduced in Eq. \eqref{eq:equivalent_kmeans problem} for encounters segmented by the two-step spline approach (cf. Appendix \ref{Section: Spline}) and then clustered via MDS (cf. Section \ref{ssection:approximation}). The right two show changes in this statistic.}
\label{fig:eq_kmeans_heat_map_no_reflection}

\centering
\begin{subfigure}{.3\textwidth}
\centering
\includegraphics[width = .75\textwidth]{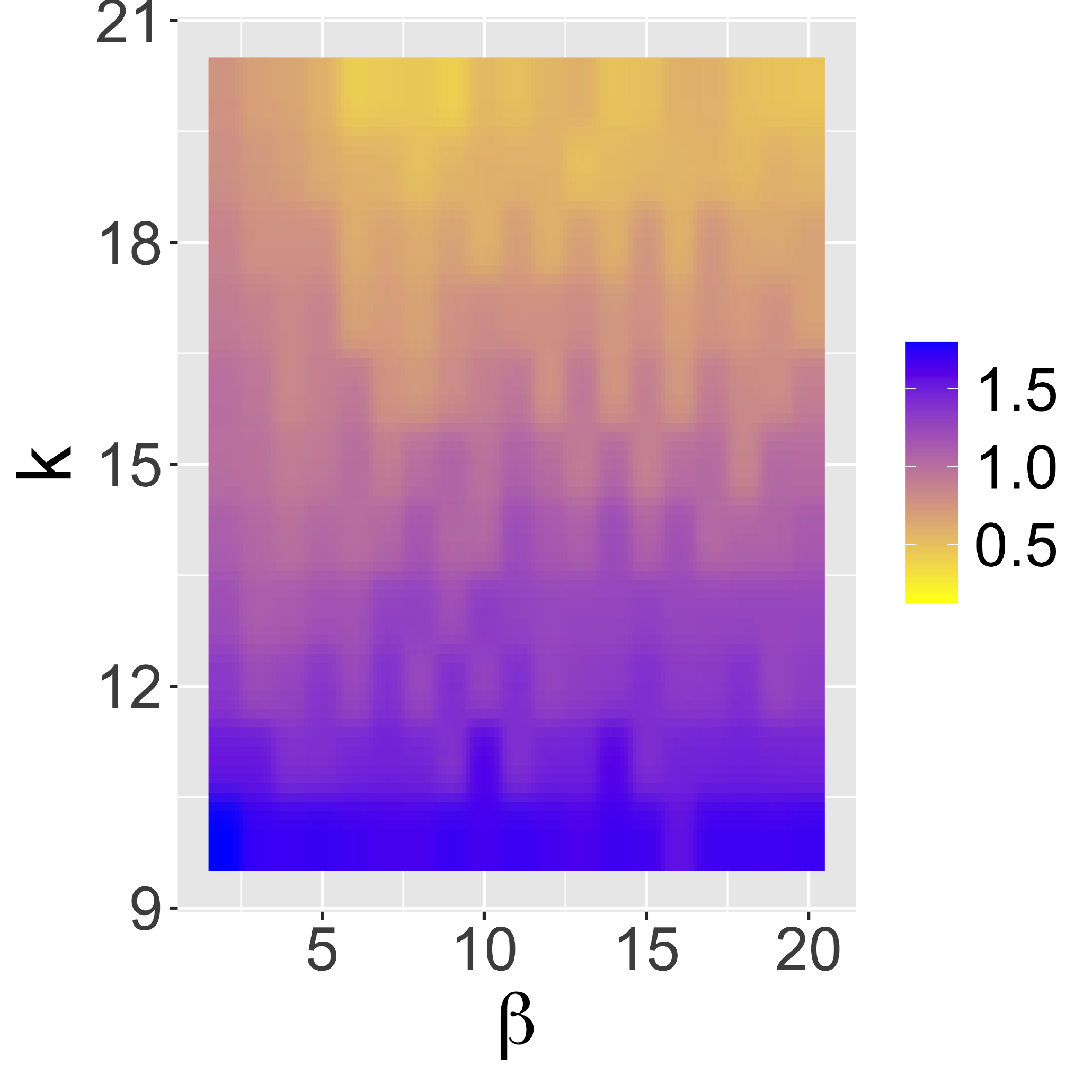}
\caption{MDS clustering (cf. Section \ref{ssection:approximation}) applied to two-step spline segmented encounters (cf. Appendix \ref{Section: Spline}).}
\end{subfigure}
\begin{subfigure}{.3\textwidth}
\centering
\includegraphics[width = .75\textwidth]{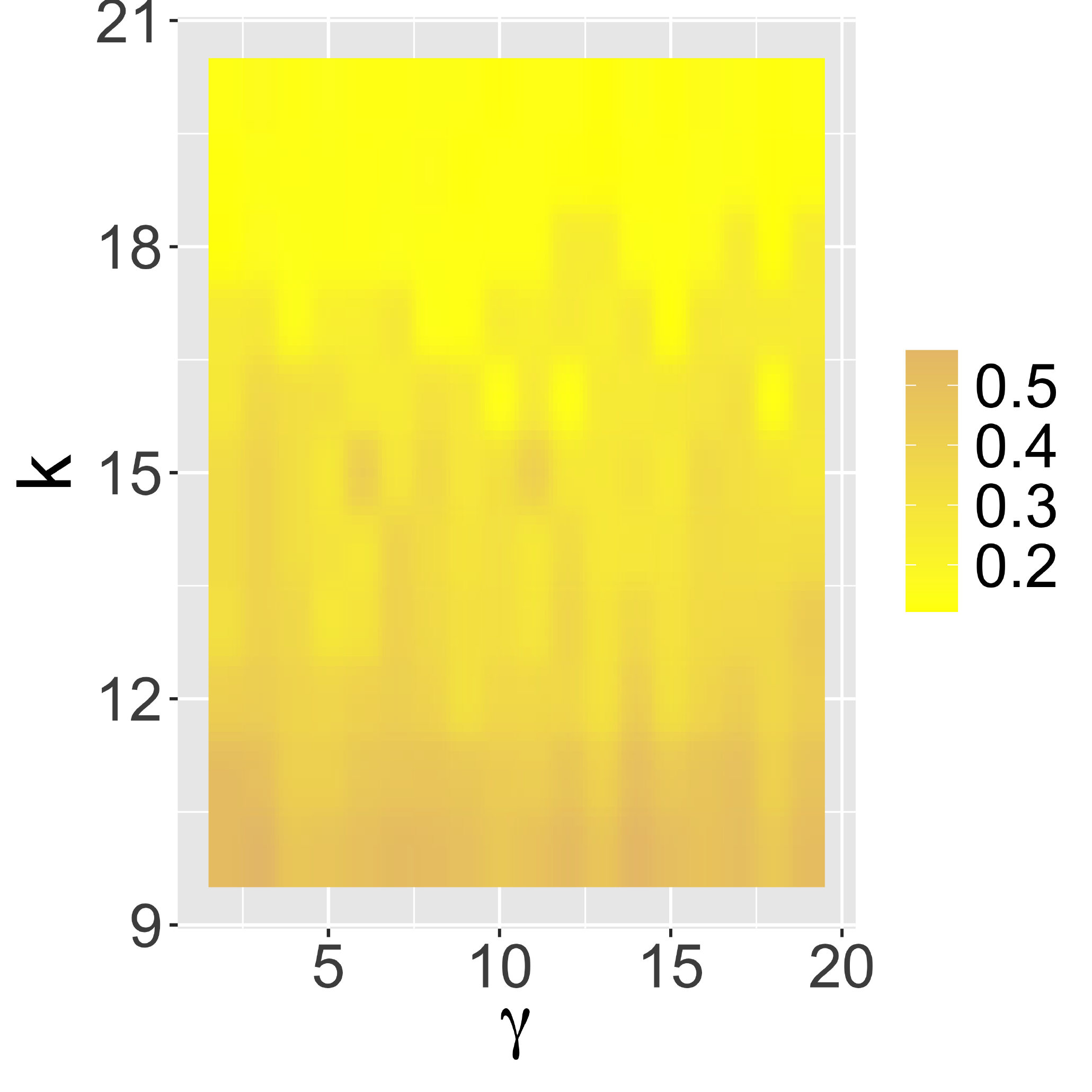}
\caption{DTW matrix clustering applied to encounters segmented by BNP (cf. \cite{Wang-Zhao-2017}).\\}
\end{subfigure}
\begin{subfigure}{.3\textwidth}
\centering
\includegraphics[width = .75\textwidth]{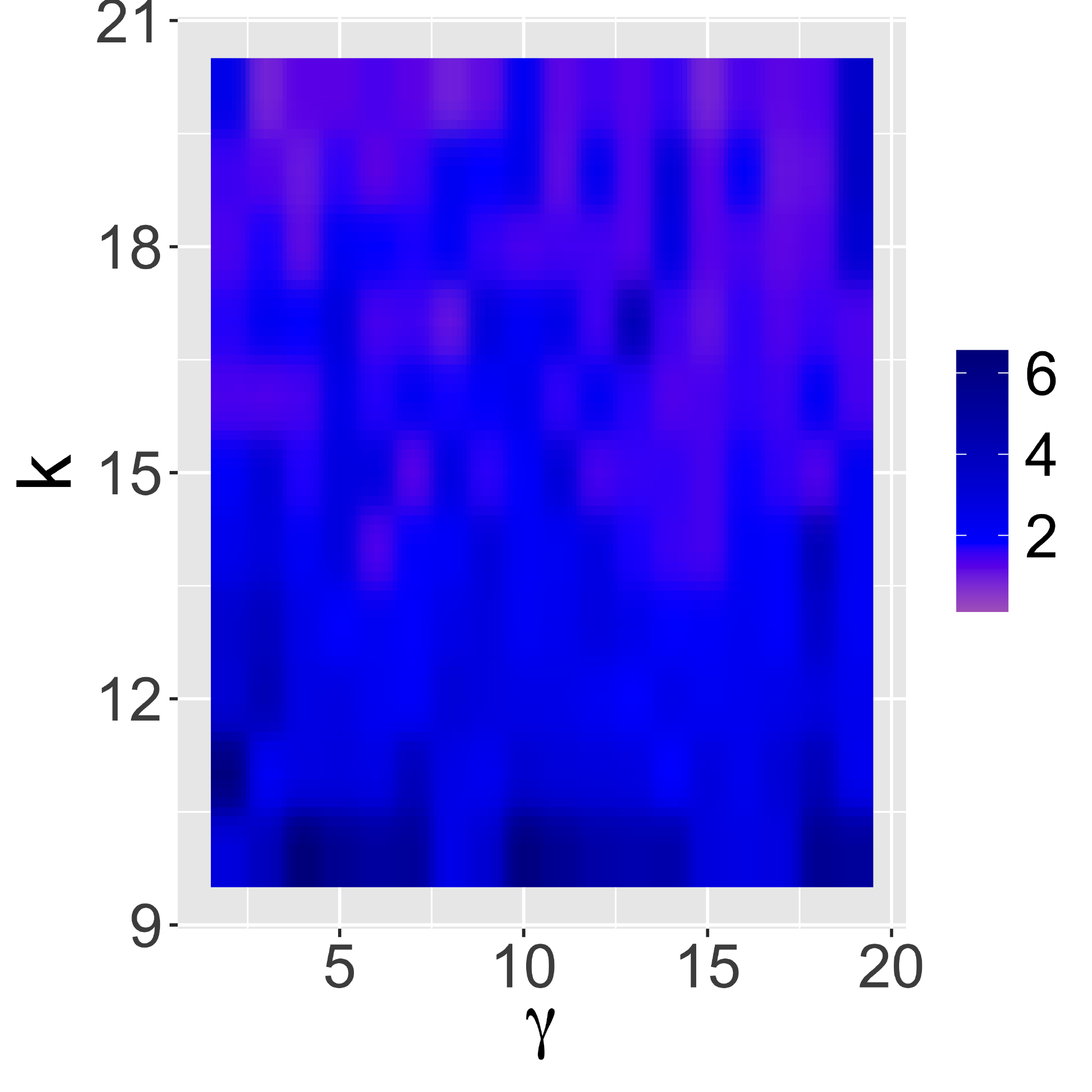}
\caption{Two step spline segmented encounters clustered using primitives extracted from BNP segmented encounters clustered using DTW matrices (cf. \ref{ssection:clustering_stability}).}
\end{subfigure}
\caption{Heatmaps of the statistic given by Eq. \eqref{eq:equivalent_kmeans problem} for non-reflective Procrustes distance for $k \geq 10$ across different methods.}
\label{fig:k_gt_10_stability_comparison}
\end{figure*}

We had to develop a statistic for stability based on our distance metric. Consider the k-means problem in a Euclidean space. Let $x_1,\dots,x_n \in \mathbb{R}^d$ be points in an Euclidean space belonging to clusters $\{1,\dots,K\}$. The cost function relative to the k-means problem is given by 
\begin{eqnarray}
\min_{\{\Gamma_1,\dots,\Gamma_k, z_1,\dots,z_n\}}\frac{1}{n}\sum_{j=1}^k \sum_{i:z_i=j} \| x_i - \Gamma_j\|^2.\nonumber
\end{eqnarray}
The above cost function can also be written as:
\begin{eqnarray}
\label{eq:kmeans problem}
\min_{\{z_1,\dots,z_n\}}\frac{1}{2n}\sum_{k=1}^K \sum_{i,j:z_i,z_j=k} \| x_i-x_j\|^2.
\end{eqnarray}
Eq.~\eqref{eq:kmeans problem} provides a way to partition the dataset $\{x_1,\dots,x_n\}$ so as to optimize the within cluster distance. We then use a measure equivalent to~\eqref{eq:kmeans problem} to evaluate the stability of algorithms. Namely, if we use the same notation as before, then the stability of the algorithm is measured by computing

\begin{eqnarray}
\label{eq:equivalent_kmeans problem}
\frac{1}{2n}\sum_{k=1}^K \sum_{i,j:z_i,z_j=k} d^2((f_{i1}, f_{i2}),(f_{j1}, f_{j2}))),
\end{eqnarray}
for varying values of tuning parameters, where $d((f_{i1}, f_{i2}),(f_{j1}, f_{j2})))$ is the metric introduced in Eq.~\eqref{eq:metric_quotient_space}.

We then calculated this statistic for the MDS approach outlined in Section \ref{ssection:approximation}. Applying k-means to the MDS projection of the (non-reflective) Procrustes distance requires the specification of the following parameters: dimension of the projection, $\beta$, and the number of clusters, $k$. The results for $\beta \in [2, 20]$ and $k \in [2, 20]$ can be seen in Figure \ref{fig:eq_kmeans_heat_map_no_reflection}. From the heatmap, the MDS approach is particularly stable for $k \geq 15$ and $\beta > 5$. This is further supported by examining the change in the statistic introduced in Eq. \eqref{eq:equivalent_kmeans problem}. This makes sense because increasing the dimension for the MDS representation provides a better representation of the pairwise distance. On the other hand, increasing $k$ also leads to greater stability. 
However, the scales in the figure suggests that most of the instability occurs when $k \leq 10$.

As before, we proceed to compare among methods and data sets. First,
following Wang and Zhou \cite{Wang-Zhao-2017}, we examined the stability of the DTW approach for the encounters segmented by sticky HMM-HDP. While there are more parameters to consider, we empirically investigated the results with $\alpha$ and $c$ fixed to 2 and 100 respectively and allowed $\gamma$ and $k$ to vary between $[2, 19]$ and $[2, 20]$. Because changing $\gamma$ gives us new primitives, we had to interpolate and recalculate the Procrustes distance for each set of primitives. Second, we wanted to inspect the stability of "transferring" primitives. In other words, let $\{(g'_{j1}, g'_{j2})\}_{j = 1}^k$ be the primitives derived from applying BNP to segment encounters and using the DTW matrices to cluster them and $\{(f_{i1}, f_{i2})\}_{i = 1}^{n}$ be the interactions extracted from the encounters using our two-step approach outlined in Appendix \ref{Section: Spline}. We assign interaction $i$ to cluster $j$ if 
\[
j = \textrm{argmin}_{j' = 1:k} d((f_{i1}, f_{i2}), (g'_{j1}, g'_{j2})).
\]
Here, $d((f_{i1}, f_{i2}), (g'_{j1}, g'_{j2}))$ is the distance introduced in Eq. \eqref{eq:rotation_translation_invariance}. 
The results can be seen in Figure \ref{fig:k_gt_10_stability_comparison}. For DTW, the results are similar to the results before with respect to $k$ and may even be better. On the other hand, as seen in the scales in Figure \ref{fig:k_gt_10_stability_comparison}, we see that there is greater instability in both the range and the pattern when we "transfer" primitives. Further, unlike before, this instability persists even as $k$ increases. This could be due to the more extreme values in the BNP primitive data set. As a result, there might be primitives that do not exist in the data set segmented by the two step spline approach. This could mean that as we increase $k$, we might not be adding centroids used to cluster the data. In addition, the ones that do exist might be influenced by these more extreme values. This might be why the values are unstable for lower values of $k$.

\section{Conclusion}
We developed a distance metric for the space of trajectory pairs that is invariant under translation and rotation. By using it to measure the distance between distributions, we could also use this metric for clustering and for evaluating a variety of unsupervised techniques for interaction learning. The distance metric and geometric approximation methods that we introduced help to address the challenges for robust learning of non-Euclidean quantities that represent temporally dynamic interactions. These techniques were demonstrated by the unsupervised learning of vehicle-to-vehicle interactions. An interesting direction for our work is to extend the metric based representation and geometric algorithms to the multiple-vehicle interaction setting, and general multi-agent settings. The challenge is the find a right metric or a family of metrics which are both meaningful and computationally tractable for a number of learning tasks of interests.

\section*{Acknowledgements}
Toyota Research Institute (TRI) provided funds to assist the authors with their research but this article solely reflects the opinions and conclusions of its authors and not TRI or any other Toyota entity. Dr. Nguyen is also partially supported by grants NSF CAREER DMS-1351362 and NSF CNS-1409303.

\bibliography{Aritra_driving,Aritra_BNP}

\begin{thebibliography}{10}

\bibitem{Agueh-Carlier-11}
M.~Agueh and G.~Carlier.
\newblock Barycenters in the {W}asserstein space.
\newblock {\em Journal on Mathematical Analysis}, 43(2):904--924, 2011.

\bibitem{Bender-et-al-2015}
A.~Bender, G.~Agamennoni, J.~R. Ward, S.~Worrall, and E.~M. Nebot.
\newblock An unsupervised approach for inferring driver behavior from
  naturalistic driving data.
\newblock {\em IEEE Trans. Intell. Transport. Syst.}, 16(6):3325–3336, 2015.

\bibitem{bezzina2014safety}
D.~Bezzina and J.~Sayer.
\newblock Safety pilot model deployment: Test conductor team report.
\newblock {\em Report No. DOT HS}, 812:171, 2014.

\bibitem{ChatterjeeHadiSensitivityAnalysisLinear1988}
S.~Chatterjee and A.~S. Hadi, editors.
\newblock {\em Sensitivity {{Analysis}} in {{Linear Regression}}}.
\newblock Wiley {{Series}} in {{Probability}} and {{Statistics}}. {John Wiley
  \& Sons, Inc.}, {Hoboken, NJ, USA}, Mar. 1988.

\bibitem{Quarternion-04}
E.~A. Coutsias, C.~Seok, and K.~A. Dill.
\newblock Using quaternions to calculate rmsd.
\newblock {\em Journal of Computational Chemistry}, 25:1849--1857, 2004.

\bibitem{Cuturi-Barycenter-14}
M.~Cuturi and A.~Doucet.
\newblock Fast computation of wasserstein barycenters.
\newblock In {\em International Conference on Machine Learning}, 2014.

\bibitem{ding2018multi_vae_generation}
W.~Ding, W.~Wang, and D.~Zhao.
\newblock Multi-vehicle trajectories generation for vehicle-to-vehicle
  encounters.
\newblock {\em arXiv preprint arXiv:1809.05680}, 2018.

\bibitem{foerster2017multi-agent}
J.~Foerster, N.~Nardelli, G.~Farquhar, T.~Afouras, P.~H.~S. Torr, P.~Kohli, and
  S.~Whiteson.
\newblock Stabilising experience replay for deep multi-agent reinforcement
  learning, 2017.

\bibitem{Fox-etal-09}
E.~Fox, E.~Sudderth, M.~I. Jordan, and A.~Willsky.
\newblock The sticky {HDP-HMM}: {B}ayesian nonparametric hidden {M}arkov models
  with persistent states.
\newblock Technical Report P-2777, MIT LIDS, 2009.

\bibitem{1549935}
E.~{Frazzoli}, M.~A. {Dahleh}, and E.~{Feron}.
\newblock Maneuver-based motion planning for nonlinear systems with symmetries.
\newblock {\em IEEE Transactions on Robotics}, 21(6):1077--1091, 2005.

\bibitem{frazzoli2005maneuver_primitive}
E.~Frazzoli, M.~A. Dahleh, and E.~Feron.
\newblock Maneuver-based motion planning for nonlinear systems with symmetries.
\newblock {\em IEEE transactions on robotics}, 21(6):1077--1091, 2005.

\bibitem{Gower-75-procrustes}
J.~Gower.
\newblock Generalized procrustes analysis.
\newblock {\em Psychometrika}, 40(1):33–51, 1975.

\bibitem{Graf-Luschgy-00}
S.~Graf and H.~Luschgy.
\newblock {\em Foundations of quantization for probability distributions}.
\newblock Springer-Verlag, New York, 2000.

\bibitem{guo2019modeling_dpgp}
Y.~Guo, V.~V. Kalidindi, M.~Arief, W.~Wang, J.~Zhu, H.~Peng, and D.~Zhao.
\newblock Modeling multi-vehicle interaction scenarios using gaussian random
  field.
\newblock {\em arXiv preprint arXiv:1906.10307}, 2019.

\bibitem{GustafsonLocalRobustnessBayesian2000}
P.~Gustafson.
\newblock Local {{Robustness}} in {{Bayesian Analysis}}.
\newblock In D.~R. Insua and F.~Ruggeri, editors, {\em Robust {{Bayesian
  Analysis}}}, Lecture {{Notes}} in {{Statistics}}, pages 71--88. {Springer},
  {New York, NY}, 2000.

\bibitem{Hamada-et-al-2016}
R.~Hamada, T.~Kubo, K.~Ikeda, Z.~Zhang, T.~Shibata, T.~Bando, K.~Hitomi, and
  M.~Egawa.
\newblock Modeling and prediction of driving behaviors using a nonparametric
  bayesian method with ar models.
\newblock {\em IEEE Trans. Intell. Veh.}, 1(2):131–138, 2016.

\bibitem{ho2017multilevel}
N.~Ho, X.~L. Nguyen, M.~Yurochkin, H.~H. Bui, V.~Huynh, and D.~Phung.
\newblock Multilevel clustering via wasserstein means.
\newblock In {\em Proceedings of the 34th International Conference on Machine
  Learning-Volume 70}, pages 1501--1509. JMLR. org, 2017.

\bibitem{Quarternion-86}
B.~K.~P. Horn.
\newblock Closed-form solution of absolute orientation using unit quaternions.
\newblock {\em Journal of the Optical Society of America}, 4:629--642, 1986.

\bibitem{joseph2011bayesian_dpgp}
J.~Joseph, F.~Doshi-Velez, A.~S. Huang, and N.~Roy.
\newblock A bayesian nonparametric approach to modeling motion patterns.
\newblock {\em Autonomous Robots}, 31(4):383, 2011.

\bibitem{Kabsch-76}
W.~Kabsch.
\newblock A solution for the best rotation to relate two sets of vectors.
\newblock {\em Acta Crystallographica}, 32:922--923, 1976.

\bibitem{Kabsch-78}
W.~Kabsch.
\newblock A discussion of the solution for the best rotation to relate two sets
  of vectors.
\newblock {\em Acta Crystallographica}, 34:827--828, 1978.

\bibitem{liao2005clustering_survey}
T.~W. Liao.
\newblock Clustering of time series data—a survey.
\newblock {\em Pattern recognition}, 38(11):1857--1874, 2005.

\bibitem{continuous-procrustes-13}
Y.~Lipman, R.~Al-Aifari, and I.~Daubechies.
\newblock The continuous procrustes distance between two surfaces.
\newblock {\em https://doi.org/10.1002/cpa.21444}, 2013.

\bibitem{Lloyd-algo}
S.~Lloyd.
\newblock Least squares quantization in pcm.
\newblock {\em IEEE Transactions on Information Theory}, 28 (2):129–137,
  1982.

\bibitem{pervez2017learning_primitive_supervised}
A.~Pervez, Y.~Mao, and D.~Lee.
\newblock Learning deep movement primitives using convolutional neural
  networks.
\newblock In {\em 2017 IEEE-RAS 17th International Conference on Humanoid
  Robotics (Humanoids)}, pages 191--197. IEEE, 2017.

\bibitem{Pollard-82-kmeans}
D.~Pollard.
\newblock Quantization and the method of k-means.
\newblock {\em IEEE Transactions on Information Theory}, 28(2):199--204, 1982.

\bibitem{RokachMaimonClusteringMethods2005}
L.~Rokach and O.~Maimon.
\newblock Clustering {{Methods}}.
\newblock In O.~Maimon and L.~Rokach, editors, {\em Data {{Mining}} and
  {{Knowledge Discovery Handbook}}}, pages 321--352. {Springer US}, {Boston,
  MA}, 2005.

\bibitem{SaltelliEtAlGlobalSensitivityAnalysis2008}
A.~Saltelli, M.~Ratto, T.~Andres, F.~Campolongo, J.~Cariboni, D.~Gatelli,
  M.~Saisana, and S.~Tarantola.
\newblock {\em Global {{Sensitivity Analysis}}: {{The Primer}}}.
\newblock {John Wiley \& Sons}, Feb. 2008.

\bibitem{SaltelliEtAlSensitivityAnalysisPractice2004}
A.~Saltelli, S.~Tarantola, F.~Campolongo, and M.~Ratto.
\newblock {\em Sensitivity {{Analysis}} in {{Practice}}: {{A Guide}} to
  {{Assessing Scientific Models}}}.
\newblock {John Wiley \& Sons}, July 2004.

\bibitem{SivaganesanGlobalLocalRobustness2000}
S.~Sivaganesan.
\newblock Global and {{Local Robustness Approaches}}: {{Uses}} and
  {{Limitations}}.
\newblock In D.~R. Insua and F.~Ruggeri, editors, {\em Robust {{Bayesian
  Analysis}}}, Lecture {{Notes}} in {{Statistics}}, pages 89--108. {Springer},
  {New York, NY}, 2000.

\bibitem{Srivastava-Shape-16}
A.~Srivastava and E.~Klassen.
\newblock {\em Functional and Shape Data Analysis}.
\newblock Springer Series in Statistics, 2016.

\bibitem{stats_shape_analysis}
M.~B. Stegmann and D.~D. Gomez.
\newblock A brief introduction to statistical shape analysis.
\newblock Accessed from
  \url{https://graphics.stanford.edu/courses/cs164-09-spring/Handouts/paper_shape_spaces_imm403.pdf}.

\bibitem{Taniguchi-et-al-2016}
T.~Taniguchi, S.~Nagasaka, K.~Hitomi, N.~P. Chandrasiri, T.~Bando, and
  K.~Takenaka.
\newblock Sequence prediction of driving behavior using double articulation
  analyzer.
\newblock {\em IEEE Trans. Syst., Man, and Cyber.: Syst.}, 46(9):1300–1313,
  2016.

\bibitem{Taniguchi-et-al-2015}
T.~Taniguchi, S.~Nagasaka, K.~Hitomi, K.~Takenaka, and T.~Bando.
\newblock Unsupervised hierarchical modeling of driving behavior and prediction
  of contextual changing points.
\newblock {\em IEEE Trans. Intell. Transport. Syst.}, 16(4):1746–1760, 2015.

\bibitem{Villani-03}
C.~Villani.
\newblock {\em Topics in Optimal Transportation}.
\newblock American Mathematical Society, 2003.

\bibitem{procrustes_manifold}
C.~Wang and S.~Mahadevan.
\newblock Manifold alignment using procrustes analysis.
\newblock In {\em ICML}, 2008.

\bibitem{Wang-Xi-Zhao-2017}
W.~Wang, J.~Xi, and D.~Zhao.
\newblock Driving style analysis using primitive driving patterns with bayesian
  nonparametric approaches, 2017.
\newblock arXiv:1708.08986.

\bibitem{Wang-Zhao-2017}
W.~Wang and D.~Zhao.
\newblock Extracting traffic primitives directly from naturalistically logged
  data for self-driving applications.
\newblock {\em IEEE Robotics and Automation Letters}, 2017.

\bibitem{XuTianComprehensiveSurveyClustering2015}
D.~Xu and Y.~Tian.
\newblock A {{Comprehensive Survey}} of {{Clustering Algorithms}}.
\newblock {\em Annals of Data Science}, 2(2):165--193, June 2015.

\bibitem{XuWunschClustering2009}
R.~Xu and D.~C. Wunsch.
\newblock {\em Clustering}.
\newblock {{IEEE Press}} Series on Computational Intelligence. {Wiley ; IEEE
  Press}, {Hoboken, N.J. : Piscataway, NJ}, 2009.
\newblock OCLC: ocn216937130.

\bibitem{Yao_2019_meta_learning_traffic}
H.~Yao, Y.~Liu, Y.~Wei, X.~Tang, and Z.~Li.
\newblock Learning from multiple cities: A meta-learning approach for
  spatial-temporal prediction.
\newblock {\em The World Wide Web Conference on - WWW ’19}, 2019.

\bibitem{zhang2019learning_primitve_itsc}
W.~Zhang and W.~Wang.
\newblock Learning v2v interactive driving patterns at signalized
  intersections.
\newblock {\em Transportation Research Part C: Emerging Technologies},
  108:151--166, 2019.

\bibitem{zhang2019multi_gp_generation}
W.~Zhang, W.~Wang, and D.~Zhao.
\newblock Multi-vehicle interaction scenarios generation with interpretable
  traffic primitives and gaussian process regression.
\newblock {\em arXiv preprint arXiv:1910.03633}, 2019.

\bibitem{ZhuEtAlBayesianInfluenceAnalysis2011}
H.~Zhu, J.~G. Ibrahim, and N.~Tang.
\newblock Bayesian influence analysis: A geometric approach.
\newblock {\em Biometrika}, 98(2):307--323, June 2011.

\bibitem{zhu2019_nplstm_trajectory}
J.~Zhu, S.~Qin, W.~Wang, and D.~Zhao.
\newblock Probabilistic trajectory prediction for autonomous vehicles with
  attentive recurrent neural process, 2019.

\end{thebibliography}

\newpage
\section*{Appendix}
\label{Section: Appendix}
\addcontentsline{toc}{section}{Appendices}
\renewcommand{\thesubsection}{\Alph{subsection}} 

\subsection{Proofs}
\subsubsection{Proof of Proposition~\ref{proposition:proper_metric}}
\label{proof:proposition::proper_metric}
We need to establish
\begin{enumerate}
    \item[ (a)] For any $f_{11},f_{12},f_{21},f_{22} \in \Fbb$,$d ( (f_{11},f_{12}), (f_{21},f_{22}))=0 $ if and only if $ (f_{11},f_{12}) \sim  (f_{21},f_{22})$.
    \item[ (b)] For any $f_{11},f_{12},f_{21},f_{22} \in \Fbb$, $d ( (f_{11},f_{12}), (f_{21},f_{22}))=d ( (f_{21},f_{22}), (f_{11},f_{12})) $.
    \item[ (c)] For any $f_{11},f_{12},f_{21},f_{22},f_{31},f_{32} \in \Fbb$, $d ( (f_{11},f_{12}), (f_{21},f_{22}))  \\ \leq d ( (f_{31},f_{32}), (f_{21},f_{22})) +d ( (f_{11},f_{12}), (f_{31},f_{32})) $.
\end{enumerate}

Condition  (a) follows by definition. To establish (b), note $\rho ( (f_{11},f_{12}), (f'_1,f'_2))=\rho ( (f'_1,f'_2), (f_{11},f_{12}))$, so
\begin{eqnarray}
\label{eq:transpose}
 & & \rho ( (f_{11},f_{12}), O_1 \odot(f_{21},f_{22}) + c_1) \nonumber \\ &=&  \rho (O_1 \odot(f_{21},f_{22}) + c_1,  (f_{11},f_{12}))  \\
  &=& \rho ((f_{21},f_{22}), O_1^* \odot(f_{11},f_{12}) - O_1^* \odot c_1) \nonumber,
\end{eqnarray}
where the second equality is due to property (i) and (ii) in the proposition, with $O_1^*$ being the conjugate transpose of $O_1$, which is also orthogonal when $O_1$ is. Now taking infimum over $C_1$ and $O_1$ the conclusion of part (b) is achieved. 

For condition (c), notice that it is easy to see, following the argument similar to Eq.~\eqref{eq:transpose}, that
\begin{eqnarray}
 & &\inf_{ O_1,O_2 \in SO (2); C_1,C_2\in \mathbb{R}^2} \rho (O_2 \odot (f_{11},f_{12})   +C_2, O_1 \odot(f_{21},f_{22})+C_1) 
 \\ &=& \inf_{ O_1 \in SO (2), C_1\in \mathbb{R}^2 } \rho ((f_{11},f_{12}), O_1 \odot(f_{21},f_{22})+C_1). \nonumber
\end{eqnarray}

Now for any $f_{31},f_{32} \in \Fbb$,
\begin{eqnarray}
 & & \rho (O_2 \odot (f_{11},f_{12}) +C_2,  O_1 \odot(f_{21},f_{22})+C_1)  \\ &\leq& \rho (O_2 \odot (f_{11},f_{12}) +C_2,(f_{31},f_{32})) + \rho ((f_{31},f_{32}), O_1 \odot(f_{21},f_{22})+C_1), \nonumber 
\end{eqnarray}
by triangle inequality applied to $\rho$. Taking infimum wrt $O_1,O_2 \in SO (2); C_1,C_2\in \mathbb{R}^2$, the rest follows immediately.
\subsubsection{Proof of Proposition~\ref{proposition:computation}}
\label{proof:proposition::computation}
Note that 
\begin{eqnarray}
\label{eq:distance_Calculation}
 & &d( (f_{11},f_{12}), O \odot(f_{21},f_{22}) +c))^2 :=  \\ & & \int_0^{\infty} \biggr(\|f_{11} (x) - O \cdot f_{21} (x)-c\|_2^2 + \|f_{12} (x) - O \cdot f_{22} (x) -c \|_2^2\biggr) \mu (\mathrm{d}x). \nonumber
\end{eqnarray}
Minimizing Eq.~\eqref{eq:distance_Calculation} with respect to $c$, for fixed $O$, we get

\begin{eqnarray}
   c &=& \int_0^\infty \frac{f_{11} (x) + f_{12} (x)}{2}\mu (\mathrm{d}x) - O \cdot\left (\int_0^\infty \frac{f_{21} (x) + f_{22} (x)}{2}\mu (\mathrm{d}x)\right)\nonumber. 
\end{eqnarray}
Substituting this value of $c$, we obtain Eq.~\eqref{eq:minimize}.

\begingroup
\small
\begin{equation}
\begin{split}
\label{eq:minimize}
&\inf_{ c\in \mathbb{R}^2}(\rho ( (f_{11},f_{12}), O \odot(f_{21},f_{22}) +c))^2   \\ &=-2\int_0^\infty \left (f_{11} (x)-\int_0^\infty \dfrac{f_{11} (x) + f_{12} (x)}{2}\mu (\mathrm{d}x)\right)^T \cdot O \cdot \left (f_{21} (x)-\int_0^\infty \frac{f_{21} (x) + f_{22} (x)}{2}\mu (\mathrm{d}x)\right)\mu (\mathrm{d}x) \\ &- 2 \int_0^\infty \left (f_{12} (x)-\int_0^\infty \dfrac{f_{11} (x) + f_{12} (x)}{2}\mu (\mathrm{d}x)\right)^T  O \cdot \left (f_{22} (x)-\int_0^\infty \frac{f_{21} (x) + f_{22} (x)}{2}\mu (\mathrm{d}x)\right)\mu (\mathrm{d}x)\\ &+ \sum_{i=1}^2\int_0^\infty \biggr\|f_{2i} (x)-\int_0^\infty \frac{f_{21} (x) + f_{22} (x)}{2}\mu (\mathrm{d}x)\biggr\|_2^2 \mu (\mathrm{d}x) + \sum_{i=1}^2\int_0^\infty \biggr\|f_{1i} (x)-\int_0^\infty \frac{f_{11} (x) + f_{12} (x)}{2}\mu (\mathrm{d}x)\biggr\|_2^2\mu (\mathrm{d}x) . 
\end{split}
\end{equation}
\endgroup

Minimizing Eq.~\eqref{eq:minimize} with respect to $O$ is same as maximizing Eq.~\eqref{eq:maximize} with respect to $O \in SO (2)$. 
\begingroup
\small
\begin{equation}
\begin{split}
\label{eq:maximize}
& 2 \textrm{trace} \biggr( \int_0^\infty \biggr(f_{11} (x)-\int_0^\infty \dfrac{f_{11} (x) + f_{12} (x)}{2}\mu (\mathrm{d}x)\biggr)^T \cdot O \cdot \left (f_{21} (x)-\int_0^\infty \frac{f_{21} (x) + f_{22} (x)}{2}\mu (\mathrm{d}x)\right)\mu (\mathrm{d}x)\biggr) \\ &+ 2 \textrm{trace} \biggr(\int_0^\infty \left (f_{12} (x)-\int_0^\infty \dfrac{f_{11} (x) + f_{12} (x)}{2}\mu (\mathrm{d}x)\right)^T \cdot O \cdot \biggr(f_{22} (x)-\int_0^\infty \frac{f_{21} (x) + f_{22} (x)}{2}\mu (\mathrm{d}x)\biggr)\mu (\mathrm{d}x)\biggr) \\
&= 2 \textrm{trace}( UDV^T \cdot O) = 2 trace( D (U^T \cdot O^T \cdot V)^T).
\end{split}
\end{equation}
\endgroup
Now, this is maximized for $O \in SO(2)$, when $O= V^T \begin{bmatrix}
    1 & 0 \\
    0 & \text{det} (V^TU)
  \end{bmatrix}U$. Plugging in this minimizing value for $O$, we get the solution for \\ $\inf_{ (f'_1,f'_2) \in  (\Ocal,\Ccal)_{ (f_{21},f_{22})}} (\rho ( (f_{11},f_{12}), (f'_1,f'_2)))^2 $ as required.
  
 \begin{lemma}
 \label{lemma:invariance}
Assume that $ (f',g') \in \Ccal_{ (f,g)} \cup \Ocal_{ (f,g)} \implies d ( (f',g'), (f,g))=0$. Then,
for all $c \in \mathbb{R}^2,\  O \in SO (2)$,
\begin{eqnarray}
  & & d ( (f_{11},f_{12}), (f_{21},f_{22})) \nonumber\\ & &\hspace{2 em} =d ( (O \odot f_{11} +c, O \odot f_{12} +c), (f_{21},f_{22})).
\end{eqnarray}
 \end{lemma}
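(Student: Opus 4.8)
The plan is to derive the combined rotation-plus-translation invariance from the two \emph{separate} invariances supplied by the hypothesis, using only the symmetry and the triangle inequality of $d$ (both established in the proof of Proposition~\ref{proposition:proper_metric}). Throughout I would write $(g_1,g_2):=(O\odot f_{11}+c,\ O\odot f_{12}+c)$ for the transformed first argument, so that the goal becomes $d((g_1,g_2),(f_{21},f_{22}))=d((f_{11},f_{12}),(f_{21},f_{22}))$.

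The crux is the single sub-claim that $d((g_1,g_2),(f_{11},f_{12}))=0$ for every $O\in SO(2)$ and $c\in\mathbb{R}^2$. The hypothesis only grants vanishing distance for pure rotations (elements of $\Ocal_{(f_{11},f_{12})}$) and for pure translations (elements of $\Ccal_{(f_{11},f_{12})}$) taken \emph{separately}, so I would bridge the gap through the intermediate pair $(O\odot f_{11},\ O\odot f_{12})$. Indeed $(O\odot f_{11},O\odot f_{12})\in\Ocal_{(f_{11},f_{12})}$ gives $d((O\odot f_{11},O\odot f_{12}),(f_{11},f_{12}))=0$, while $(g_1,g_2)\in\Ccal_{(O\odot f_{11},O\odot f_{12})}$ gives $d((g_1,g_2),(O\odot f_{11},O\odot f_{12}))=0$; applying the triangle inequality and using $d\ge 0$ then forces the sub-claim.

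With the sub-claim in hand both inequalities are immediate. The forward bound
$$ d((g_1,g_2),(f_{21},f_{22}))\le d((g_1,g_2),(f_{11},f_{12}))+d((f_{11},f_{12}),(f_{21},f_{22}))=d((f_{11},f_{12}),(f_{21},f_{22})) $$
follows from the triangle inequality and the sub-claim. For the reverse bound I would observe that $(f_{11},f_{12})$ is itself a rotation-plus-translation of $(g_1,g_2)$, namely via $O^{T}$ and $-O^{T}c$, since $O^{T}\odot(O\odot f_{1i}+c)-O^{T}c=f_{1i}$; hence the same chaining argument together with the symmetry of $d$ yields $d((f_{11},f_{12}),(g_1,g_2))=0$, and therefore $d((f_{11},f_{12}),(f_{21},f_{22}))\le d((g_1,g_2),(f_{21},f_{22}))$. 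Combining the two inequalities gives the claimed equality.

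The only genuine obstacle is the one already flagged: the hypothesis is stated over $\Ccal_{(f,g)}\cup\Ocal_{(f,g)}$, i.e. pure translations or pure rotations but never their composition, so the proof must manufacture the composite transformation by chaining two zero-distance steps through an intermediate pair. Alternatively, one could bypass the triangle inequality and argue directly from Definition~\ref{def:rotation_translation_invariant_metric}: pulling $(O,c)$ out of the infimum reparametrizes the optimization set $(\Ocal,\Ccal)_{(f_{21},f_{22})}$ bijectively onto itself, using the invariance of $\rho$ under simultaneous rotation and translation of both arguments established in Eq.~\eqref{eq:transpose}, so that the infimum, and hence $d$, is left unchanged.
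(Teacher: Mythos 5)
Your proof is correct and follows the same basic route as the paper's own: a triangle-inequality bound through the transformed pair in each direction, with the cross term vanishing. Where you differ is that you make explicit a step the paper glosses over: the paper asserts $d((O\odot f_{11}+c,\,O\odot f_{12}+c),(f_{11},f_{12}))=0$ ``by assumption,'' but the hypothesis only covers $\Ccal_{(f,g)}\cup\Ocal_{(f,g)}$, i.e., pure translations or pure rotations taken separately, and the composite pair $(O\odot f_{11}+c,\,O\odot f_{12}+c)$ lies in neither set in general. Your sub-claim --- chaining $d=0$ from $(f_{11},f_{12})$ to $(O\odot f_{11},O\odot f_{12})$ (a pure rotation) and from there to $(O\odot f_{11}+c,\,O\odot f_{12}+c)$ (a pure translation of the rotated pair), then invoking the triangle inequality and nonnegativity of $d$ --- supplies exactly the missing justification, so your write-up is in fact tighter than the paper's. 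Your reverse-direction argument via $O^{T}$ and $-O^{T}c$ (noting $O^{T}\in SO(2)$) matches the paper's one-line appeal to ``the reverse inequality,'' and the alternative you sketch at the end, reparametrizing the infimum in Definition~\ref{def:rotation_translation_invariant_metric} using the invariance of $\rho$, would also work but is unnecessary once the symmetry and triangle inequality of $d$ from Proposition~\ref{proposition:proper_metric} are in hand.
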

 \begin{proof}
 By triangle inequality,
 $d ( (f_{11},f_{12}), (f_{21},f_{22})) \leq d ( (O \odot f_{11} +c, O \odot f_{12} +c), (f_{21},f_{22})) + d ( (O \odot f_{11} +c, O \odot f_{12} +c), (f_{11},f_{12}))$, so by assumption $d ( (f_{11},f_{12}), (f_{21},f_{22})) \leq d ( (O \odot f_{11} +c, O \odot f_{12} +c), (f_{21},f_{22}))$. The lemma follows by considering the reverse inequality.
 \end{proof}
 
\subsection{Obtaining primitives via splines}
\label{Section: Spline}
Our two-step procedure to extract primitives is as follows.

\begin{enumerate}
\item Add change points for each trajectory via the following steps. 
(a) Test whether using the midpoint as a change point reduces the squared error of the fitted polynomial (b) If it does, return the midpoint. (c) Otherwise, test whether using the midpoint of the valid interval of the half with the larger square error as a change point reduces the squared error. Rule out the other half as a site for change points. (d) Repeat (b)-(c) until either a change point is found or no further candidates exist. (e) If a change point was added previously, repeat (a)-(d) for the two segments and any subsequent segments. Stop when no more change points are added.

\item Combine the change points from all trajectories in the following manner. Remove change points via a forward search in the following way. Suppose that we have a set, $\mathcal{C}$, of $L$ ordered change points, $c_1, c_2, ..., c_L$, across all trajectories. Let $c_0$ denote the start point and $c_{L + 1}$ denote the end point. Define $\epsilon$ to be our tolerence. Proceed in these steps:
(a) Set $\ell = 0$, $\ell' = 1$, and $\ell'' = 2$; (b) Fit a polynomial to each trajectory from $c_{\ell}$ to $c_{\ell''}$; (c) If the sum of the squared error of the fitted polynomials is below $\epsilon$ or there are only 4 observations between $c_{\ell}$ and $c_{\ell''}$, remove $c_{\ell'}$ from the set of $\mathcal{C}$. Otherwise, increment $\ell$. Increase $\ell'$ and $\ell''$ by one and go back to (b) if $\ell'' \leq L + 1$; (d) Set $L$ to be the size of $\mathcal{C}$. Re-index the change points in $\mathcal{C}$ from one to $L$ and return $\mathcal{C}$.

To select $\epsilon$ from a set of potential tolerances, we set it to be the value that after running (2), minimizes
\begin{eqnarray*}
\sum\limits_{i = 1}^n \sum\limits_{\ell = 1}^{L + 1} \left(f(t_i) - \hat{f}_\ell(t_i)\right)^2 \mathbbm{1}_{(t_i \leq c_\ell)} + L + 2.
\end{eqnarray*}
\end{enumerate}


\begin{figure*}[!tp]
\centering
\begin{subfigure}{.19\textwidth}
\centering
\includegraphics[width = 1\textwidth]{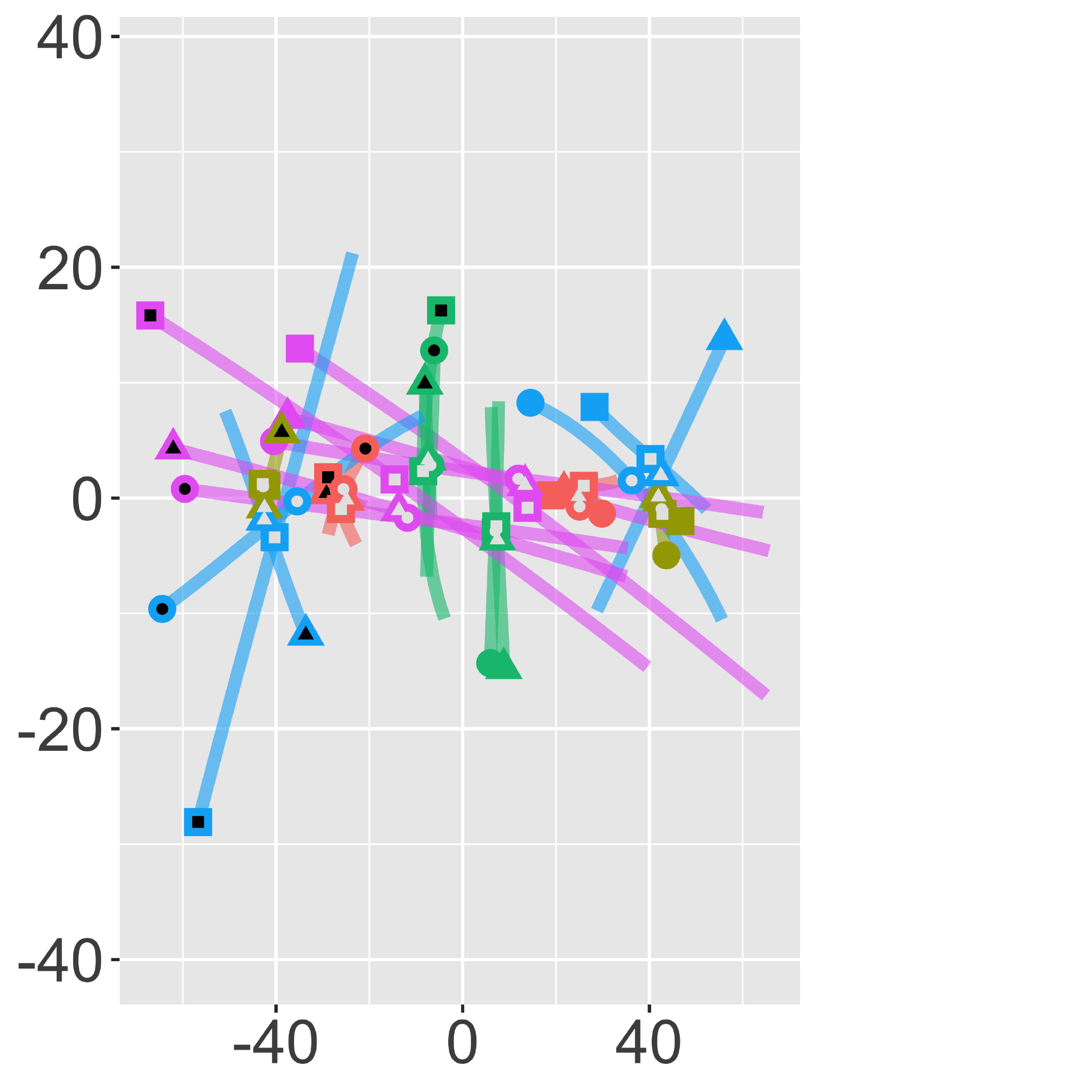}
\caption{Multidim. scaling approach (cf. Section~\ref{sssection:mds})}
\end{subfigure}
\begin{subfigure}{.19\textwidth}
\centering
\includegraphics[width = 1\textwidth]{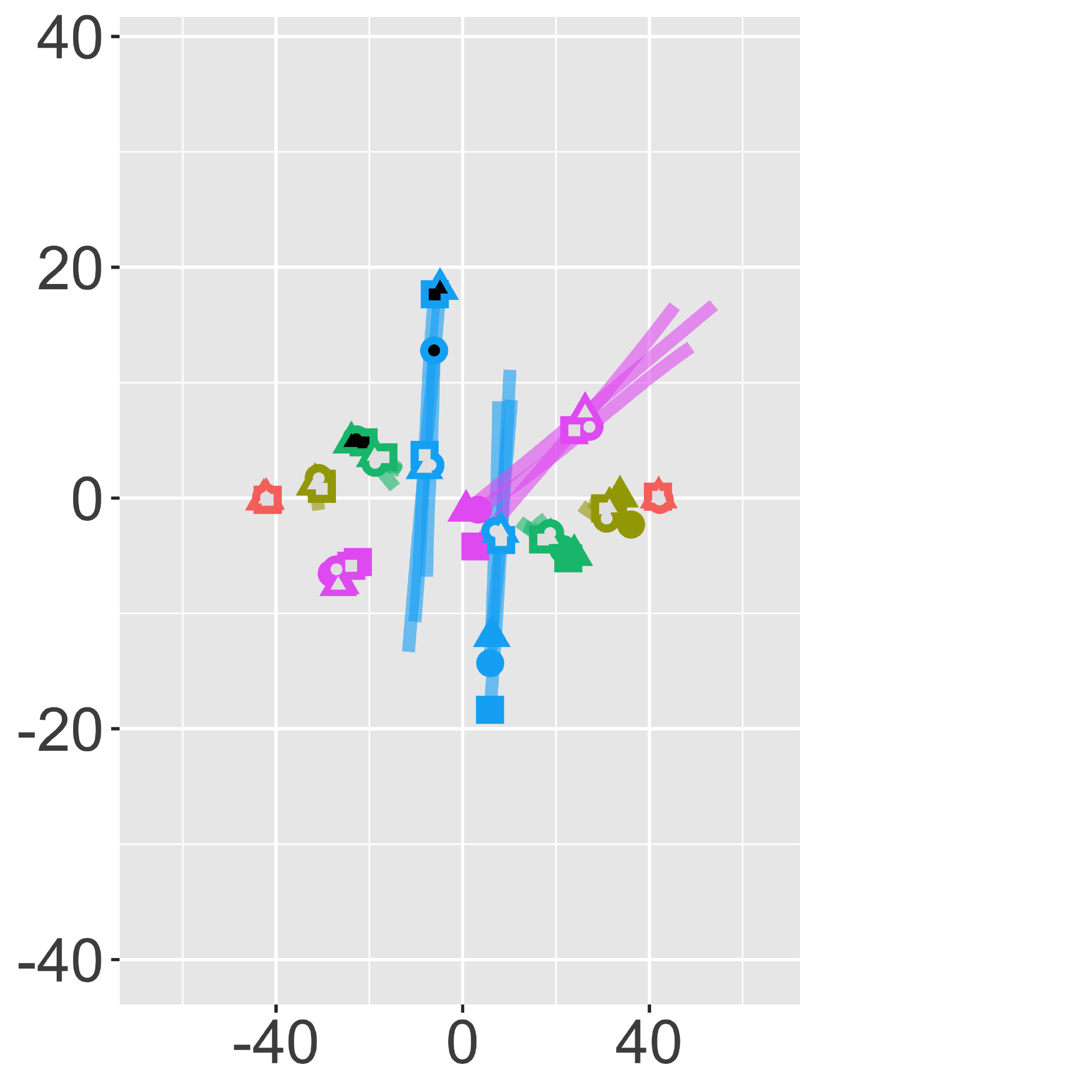}
\caption{First geometric approx. (cf. Section~\ref{sssection:geometric_approx})}
\end{subfigure}
\begin{subfigure}{.19\textwidth}
\centering
\includegraphics[width = 1\textwidth]{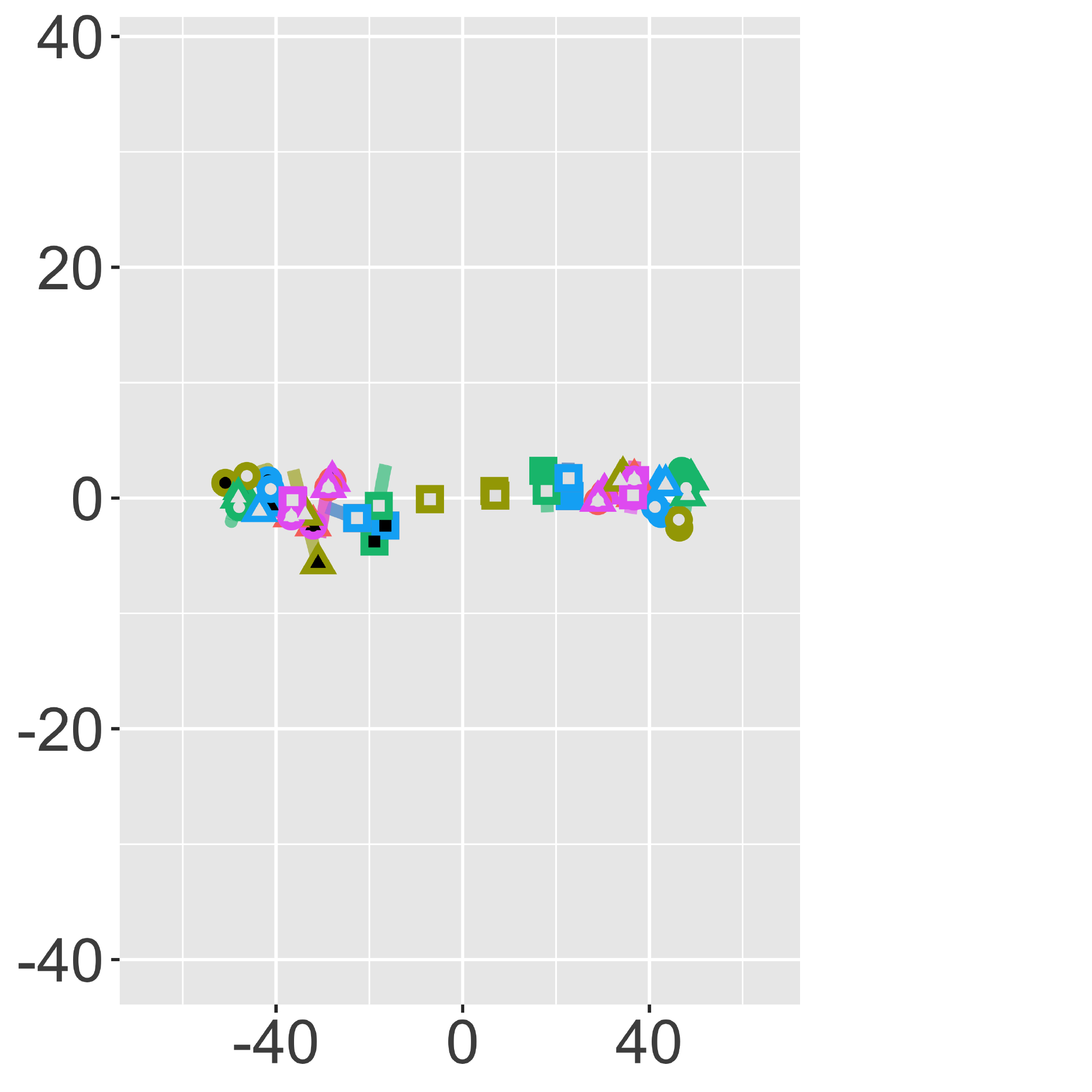}
\caption{Second geometric approx. (cf. Section~\ref{sssection:geometric_approx})}
\end{subfigure}
\begin{subfigure}{.19\textwidth}
\centering
\includegraphics[width = 1\textwidth]{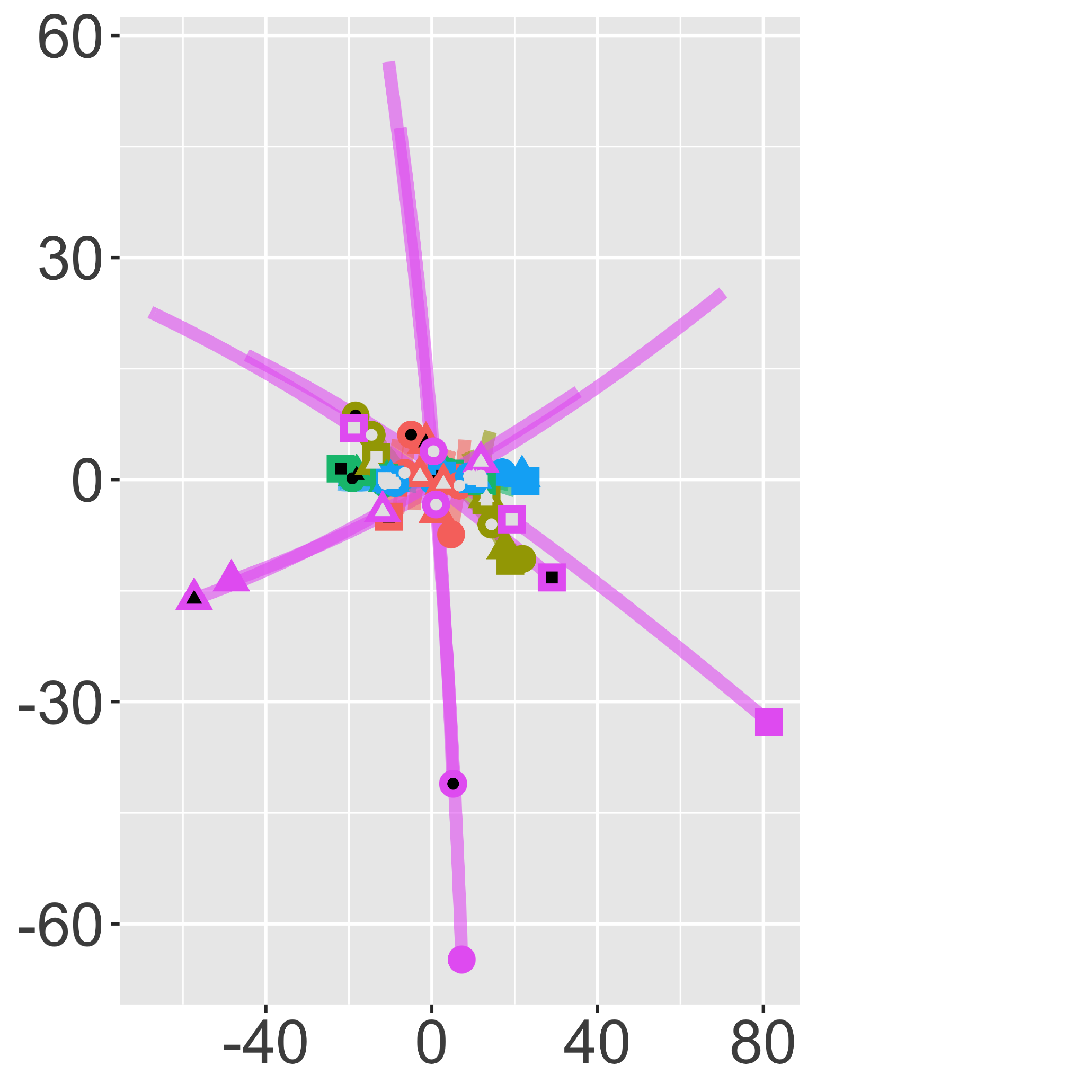}
\caption{Polynomial coefficients (cf. Section~\ref{ssection:clustering_quality})}
\end{subfigure}
\begin{subfigure}{.19\textwidth}
\centering
\includegraphics[width = 1\textwidth]{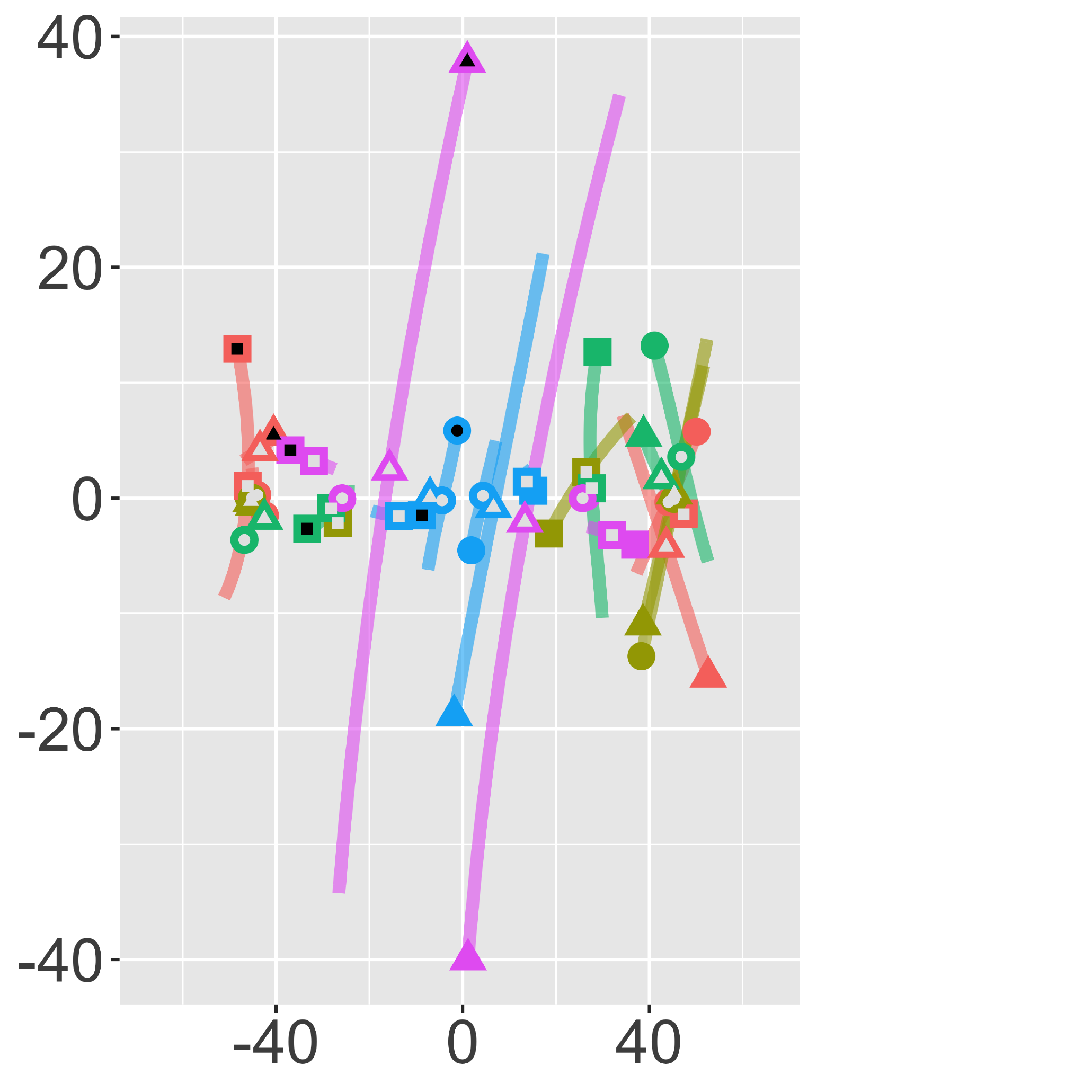}
\caption{DTW cost matrix (cf. \cite{Wang-Zhao-2017})}
\end{subfigure}
\caption{Plot of the three most typical interactions organized from the cluster with the most interaction to the cluster with the fewest for various methods. See Figure \ref{fig:two_step_typical_encounter_plot_approx_mds} for the legend and for how the interactions are oriented.
}
\label{fig:two_step_typical_encounter_plot_approx}
\end{figure*}

\subsection{Algorithm for centering and reorienting primitives}
 \label{ssection:algo_centering}
 
Algorithm~\ref{algo: primitive reorienting} provides a way to reorient one set of interactions to another and is embedded in Algorithms~\ref{algo:first_geom_approx} and~\ref{algo:second_geom_approx}.
 \begin{algorithm}[ht]
\caption{Centering and reorienting interactions}
\label{algo: primitive reorienting}
\algsetup{linenosize=\small}
\scriptsize
Input: Two interaction samples, $(f_{i1},f_{i2})$ and $(f_{j1},f_{j2})$\\ 
Output: A centered $(f_{i1},f_{i2})$ and a centered $(f_{j1},f_{j2})$ reoriented to the centered $(f_{i1},f_{i2})$
\begin{algorithmic}[1]
\STATE  Set $\widetilde{f}_{i}(t) \in \mathbbm{R}^{2t_m} \cross \mathbbm{R}^{2}$ to be the concatenation of $f_{i1}$ and $f_{i2}$ such that $\widetilde{f}_{i}(t) = f_{i1}(t)$ for $t = 1, 2, \dots t_m$ and $\widetilde{f}_{i}(t) = f_{i2}(t)$ for $t = t_m + 1, t_m + 2, \dots 2t_m$ and $\widetilde{f}_{j}(t) \in \mathbbm{R}^{2t_m} \cross \mathbbm{R}^{2}$ be the same concatenation of $f_{i1}$ and $f_{i2}$.
\STATE For $\overline{f}_i(t) \in \mathbbm{R}^{2t_m} \cross \mathbbm{R}^{2}$ and $\overline{f}_i(t) \in \mathbbm{R}^{2t_m} \cross \mathbbm{R}^{2}$, set 
\begin{align*}
    \overline{f}_i(t) &= \widetilde{f}_i(t) - \frac{1}{2t_m} \sum_{t' = 1}^{2t_m} \widetilde{f}_i(t')\\
    \overline{f}_j(t) &= \widetilde{f}_j(t) - \frac{1}{2t_m} \sum_{t' = 1}^{2t_m} \widetilde{f}_j(t').
\end{align*}
\STATE Perform singular value decomposition to get the matrices $U$, $D$, $V$ such that $UDV^T = \overline{f}_j(t)^T \overline{f}_i(t)$. 
\STATE From before, let
\begin{align*}
    \tilde{\Ocal} &= V^T \begin{bmatrix}
    1 & 0 \\
    0 & \operatorname{det} (V^TU)
  \end{bmatrix}U.
\end{align*}
Then, set $\overline{f}'_{i1}(t), \overline{f}'_{i2}(t) \in \mathbbm{R}^{T}\cross\mathbbm{R}^{2}$ to be the matrices such that for $t = 1, 2, \dots, T$,
\begin{align*}
    \overline{f}'_{i1}(t) &= \overline{f}_{i}(t)\\ \overline{f}'_{i2}(t) &= \overline{f}_{i}(t + t_m).
\end{align*}
On the other hand, set $\overline{f}'_{j1}(t), \overline{f}'_{j2}(t) \in \mathbbm{R}^{T}\cross\mathbbm{R}^{2}$ to be the matrices such that for $t = 1, 2, \dots, T$,
\begin{align*}
    \overline{f}'_{j1}(t) &= (\tilde{\Ocal}\overline{f}_{j})(t)\\
    \overline{f}'_{j2}(t) &= (\tilde{\Ocal}\overline{f}_{j})(t + t_m).
\end{align*}
\STATE Return $(\overline{f}'_{i1}, \overline{f}'_{i2})$ and $(\overline{f}'_{j1}, \overline{f}'_{j2})$.
\end{algorithmic}
\end{algorithm}

\end{document}